\documentclass{article}

    \PassOptionsToPackage{numbers, compress}{natbib}
\usepackage[preprint]{neurips_2026}

\usepackage[utf8]{inputenc} % allow utf-8 input
\usepackage[T1]{fontenc}    % use 8-bit T1 fonts
\usepackage{url}            % simple URL typesetting
\usepackage{booktabs}       % professional-quality tables
\usepackage{amsfonts}       % blackboard math symbols
\usepackage{nicefrac}       % compact symbols for 1/2, etc.
\usepackage{microtype}      % microtypography
\usepackage{xcolor}         % colors

\title{From Local Mismatch to Global Impact: Optimizing Cache Reuse Policy for Efficient Diffusion}

\usepackage[page,toc,titletoc,title]{appendix}

\usepackage{amsmath}
\usepackage{amssymb}
\usepackage{amsfonts}
\usepackage{amsthm}
\usepackage{mathtools}
\usepackage{bm}

\usepackage{graphicx}
\usepackage{wrapfig}
\usepackage{subcaption}
\usepackage{booktabs}
\usepackage{multirow}
\usepackage{makecell}
\usepackage{array}

\usepackage{algorithm}
\usepackage{algorithmic}

\usepackage{nicefrac}
\usepackage{microtype}
\usepackage{enumitem}
\usepackage{pifont}
\usepackage{tikz}
\usepackage{xspace}
\usepackage{makecell}

\usepackage[dvipsnames,svgnames,table]{xcolor}

\definecolor{citecolor}{HTML}{2980b9}
\definecolor{linkcolor}{HTML}{c0392b}
\definecolor{mygreen}{RGB}{46,125,50}

\usepackage[
    colorlinks,
    citecolor=citecolor,
    linkcolor=linkcolor
]{hyperref}

\usepackage[capitalize,noabbrev]{cleveref}

\setitemize{
    leftmargin=1.2em, topsep=-0.5pt, itemsep=3pt, parsep=3pt
}

\theoremstyle{plain}
\newtheorem{theorem}{Theorem}[section]

\newtheorem{lemma}[theorem]{Lemma}

\theoremstyle{definition}

\newtheorem{assumption}[theorem]{Assumption}

\theoremstyle{remark}

\theoremstyle{plain}
\newtheorem{manualtheoreminner}{Theorem}
\newenvironment{manualtheorem}[1]{%
  \renewcommand\themanualtheoreminner{#1}%
  \manualtheoreminner
}{%
  \endmanualtheoreminner
}

\makeatletter
\DeclareRobustCommand\onedot{%
    \futurelet\@let@token\@onedot
}

\def\@onedot{%
    \ifx\@let@token.\else.\null\fi\xspace
}

\makeatother

\author{%
  \textbf{Xichen Ye}$^{1}$, \textbf{Yifan Wu}$^{2}$, \textbf{Zhikang Xie}$^{1}$ \\ \textbf{Xiangyu Yue}$^{2}$, \textbf{Cheng Jin}$^{1}$, \textbf{Weizhong Zhang}$^{3,}$\thanks{Corresponding author} \\[3pt]
  $^{1}$College of Computer Science and Artificial Intelligence, Fudan University\\
  $^{2}$Faculty of Engineering, The Chinese University of Hong Kong \\
  $^{3}$School of Data Science, Fudan University \\
}

\begin{document}

\maketitle

\begin{abstract}
    Diffusion models have achieved dominant performance in visual generation but suffer from substantial inference overhead.
    While cache-based acceleration has emerged as a promising solution, existing policies rely on local similarity heuristics, which we identify as being significantly misaligned with final generation quality.
    This discrepancy stems from the non-uniform propagation and accumulation of errors along the denoising trajectory.
    To address this, we propose \textbf{G}lobal-Impact \textbf{Cache} (\textbf{GCache}).
    We first establish a rigorous theoretical characterization of the error propagation upper bound.
    Recognizing that this bound can be overly conservative for complex, highly non-convex diffusion models, we further reparameterize the propagation exponent with a Bernstein form and reformulate cache policy search as a bilevel optimization problem.
    In detail, GCache identifies an optimal reuse policy in the inner objective while aligning the error-weighting function with generation quality loss in the outer objective.
    This framework effectively reconciles theoretical rigor with empirical performance, learning to prioritize computation where it most impacts visual fidelity.
    Extensive experiments demonstrate that GCache consistently outperforms prior caching strategies on both video and image generation.
    Notably, on the state-of-the-art Wan2.1 video diffusion model, GCache maintains a $2.17\times$ speedup while significantly enhancing generation quality, reducing LPIPS from $0.1095$ to $0.0316$.
\end{abstract}

\addtocontents{toc}{\protect\setcounter{tocdepth}{-1}}

\section{Introduction}

% 1. 【diffusion介绍】diffusion 好 // 累
In recent years, diffusion models~\cite{DBLP:conf/nips20:DDPM, DBLP:conf/iclr21:ScoreSDE, DBLP:conf/nips21:DiffusionBeatGAN, DBLP:conf/iclr23:FlowMatching} have emerged as the dominant paradigm in visual generation, delivering high-fidelity and diverse outputs across various modalities, including images~\cite{DBLP:conf/cvpr22:StableDiffussion, DBLP:conf/nips22:imagen} and videos~\cite{DBLP:journals/corr23/StableVideoDiffusion, DBLP:conf/cvpr24:VideoCrafter2}.
% In recent years, diffusion models~\cite{DBLP:conf/nips20:DDPM, DBLP:conf/iclr21:ScoreSDE, DBLP:conf/nips21:DiffusionBeatGAN, DBLP:conf/iclr23:FlowMatching} have become a dominant paradigm in visual generation, delivering high-fidelity and diverse outputs across images~\cite{DBLP:conf/cvpr22:StableDiffussion, DBLP:conf/nips22:imagen}, videos~\cite{DBLP:journals/corr23/StableVideoDiffusion, DBLP:conf/cvpr24:VideoCrafter2}, etc.
% \YXC{
% Driven by the demand for superior quality and enhanced scalability, the architectural backbone has evolved from the lightweight U-Net~\cite{DBLP:conf/miccai15:Unet} to the Diffusion Transformer (DiT)~\cite{DBLP:conf/iccv23:DiT}.
% This transition has enabled significant capacity scaling and catalyzed rapid progress in large-scale diffusion generation, with particularly notable breakthroughs in video synthesis~\cite{DBLP:journals/corr24:opensora, DBLP:conf/iclr25:CogVideoX, DBLP:journals/corr25:wan2.1}.
% }
% Motivated by the pursuit of higher generation quality and stronger scalability, diffusion models have evolved from lightweight U-Net~\cite{DBLP:conf/miccai15:Unet} to large-scale DiT~\cite{DBLP:conf/iccv23:DiT}, enabling improved capacity scaling and driving rapid progress in large-scale diffusion generation, with particularly notable gains in video synthesis~\cite{DBLP:journals/corr24:opensora, DBLP:conf/iclr25:CogVideoX, DBLP:journals/corr25:wan2.1}.
Despite these successes, diffusion models remain burdened by substantial inference overhead, stemming from the iterative nature of solving the underlying Ordinary Differential Equations (ODEs), which necessitates numerous model evaluations for a single sample.
% Despite their successes, diffusion models still suffer from substantial inference overhead, which requires solving a corresponding ODE and requires multiple evaluations for generating a single sample.
% Despite their effectiveness, diffusion models still suffer from substantial inference overhead, which severely hinders practical deployment, especially for high-resolution and long-duration video generation.
To mitigate this limitation, various acceleration mechanisms have been explored from multiple directions, including model-centric compression
% model-based
~\cite{DBLP:conf/nips23:PTQD, DBLP:conf/cvpr24:LD-Pruner},
advanced sampling solvers
% solver-based
~\cite{DBLP:conf/nips22:DPM-Solver, DBLP:conf/iclr23:FastSampling, DBLP:conf/nips23:UniPC},
and, more recently, cache-based mechanisms
% and cache-based acceleration
~\cite{DBLP:journals/corr24:Delta-DiT, DBLP:conf/cvpr25:TeaCache, DBLP:journals/corr25:ERTACache}.

% \begin{figure}[t]
%     \centering
%     \vskip 0.1in
%     \includegraphics[width=0.5\columnwidth]{figures/intro_2.pdf}
%     \vskip -0.05in
%     \caption{
%         Comparison between \textcolor{blue}{local mismatch} and \textcolor{red}{global impact} (lower is better).
%         We conduct a series of independent experiments in which a cached residual is reused at exactly one specific timestep.
%         For each timestep, the blue marker (\textcolor{blue}{Rel-L1}) measures the local discrepancy between the ground-truth residual and the cached residual from the preceding step, while the red marker (\textcolor{red}{LPIPS}) reflects the resulting impact on final generation quality.
%         As shown, local mismatch is not a reliable proxy for global impact, as large Rel-L1 values around step~19 and in the final denoising stages correspond to only minor increases in LPIPS, i.e., a relatively small degradation in final output quality.
%         Note that the prominent Rel-L1 spike around step~19 is a known characteristic of Flux-dev~1.0 when using the Euler ODE solver.
%         Similar spike behaviors in different diffusion models have also been reported in prior work~\cite{DBLP:conf/cvpr25:TeaCache}.
%     }
%     \label{fig:intro}
%     \vskip -0.1in
% \end{figure}

% 2. 【cache介绍】cache 简易的方法
Among current acceleration strategies, cache-based methods provide a practical approach to speeding up diffusion inference.
Unlike model-centric techniques, caching avoids intensive retraining or distillation of model parameters and remains orthogonal to advanced sampling solvers, making it a highly lightweight solution for denoising acceleration.
The primary objective is to establish an inference-time policy that identifies redundant intermediate residuals across adjacent timesteps to avoid repeated computations.
While initial approaches~\cite{DBLP:journals/corr24:Delta-DiT, DBLP:journals/corr24/Fora, DBLP:conf/iclr25:PAB} relied on uniform reuse schedules, they lacked the flexibility to adapt to varying residual dynamics across different timesteps.
Consequently, recent works~\cite{DBLP:conf/cvpr25:TeaCache, DBLP:journals/corr25:ERTACache} have shifted toward non-uniform strategies.
These methods employ local similarity metrics to quantify the mismatch between target and cached residuals, triggering reuse only when the discrepancy is sufficiently small.

\begin{wrapfigure}{r}{0.5\linewidth}
    \centering
    \vskip -0.2in
    \includegraphics[width=0.95\linewidth]{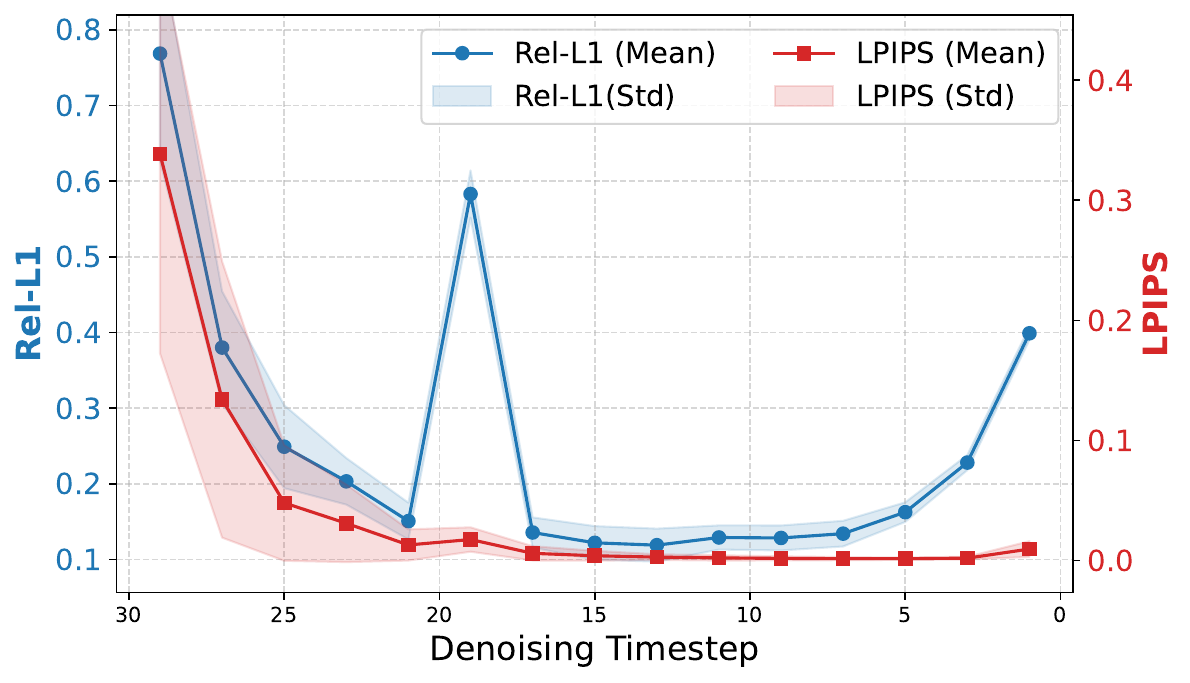}
    \vskip -0.075in
    \caption{\footnotesize
        Comparison between \textcolor{blue}{local mismatch} and \textcolor{red}{global impact} (lower is better).
        We conduct a series of independent experiments in which a cached residual is reused at exactly one specific timestep.
        For each timestep, the blue marker (\textcolor{blue}{Rel-L1}) measures the local discrepancy between the ground-truth residual and the cached residual from the preceding step, while the red marker (\textcolor{red}{LPIPS}) reflects the resulting impact on final generation quality.
        % As shown, local mismatch is not a reliable proxy for global impact, as large Rel-L1 values around step~19 and in the final denoising stages correspond to only minor increases in LPIPS, i.e., a relatively small degradation in final output quality.
        % Note that the prominent Rel-L1 spike around step~19 is a known characteristic of Flux-dev~1.0 when using the Euler ODE solver.
        % Similar spike behaviors in different diffusion models have also been reported in prior work~\cite{DBLP:conf/cvpr25:TeaCache}.
    }
    \label{fig:intro}
    \vskip -0.2in
\end{wrapfigure}

Despite these advancements, existing approaches remain limited by their reliance on local similarity metrics, such as the relative $\ell_1$ distance.
However, empirical evidence suggests that such locally-based strategies fail to accurately capture how individual cache reuse decisions affect final generation quality.
As illustrated in \cref{fig:intro}, large local discrepancies may correspond to only minor perceptual degradation: pronounced relative $\ell_1$ spikes occur around step~19\footnote{The prominent Rel-L1 spike around step~19 is a known characteristic of Flux-dev~1.0 under the Euler ODE solver. Similar spike behaviors across diffusion models have also been reported in prior work~\cite{DBLP:conf/cvpr25:TeaCache}.} and increase sharply in the final denoising stages, yet result in only marginal increases in LPIPS.
This mismatch reveals that local discrepancy alone is an unreliable proxy for global generation impact, indicating that effective cache reuse policies must account for both local reuse errors and their cumulative impact on final generation quality.
To this end, we first establish a theoretical characterization of the error propagation dynamics in cache-reused diffusion trajectories.
This analysis provides a principled foundation by relating local discrepancies to a global cumulative error bound, offering a rigorous heuristic for policy optimization.
While this analytical bound serves as a robust guide, its worst-case nature inherently introduces a pessimistic bias, as it does not fully account for the high non-convexity and intrinsic error-resilience of diffusion models.
To bridge this gap and unlock the full potential of our theoretical framework, we propose \textbf{Global-Impact Cache} (\textbf{GCache}).
In detail, we parameterize the propagation exponent in the theoretical bound as a Bernstein form and formulate the policy search as a bilevel optimization problem, where the inner objective identifies the optimal reuse policy that minimizes the current error bound for a given set of parameters, while the outer objective optimizes the propagation parameters by minimizing empirical generation loss.
In this way, GCache learns an error-weighting function that better reflects empirical error propagation, leading to more informed cache reuse decisions and improved generation quality.
% To this end, we first provide a theoretical characterization that upper-bounds the final generation error induced by cache reuse under the probability flow ODE formulation. 
% While this bound offers a principled way to relate step-wise reuse errors to the final outcome, it can be conservative in practice.
% We therefore formulate cache selection as a bilevel optimization problem that tightens this error control by directly optimizing the reuse policy under a compute budget, leading to a more effective cache schedule without introducing additional rectification modules or extra inference steps.
% Extensive experiments on both image and video generation across multiple backbones demonstrate consistent improvements over prior caching strategies.
% Notably, on Wan2.1, our method achieves the same $2.17\times$ speedup while substantially improving generation quality (LPIPS $0.1095 \rightarrow 0.0316$).
Extensive experiments across multiple image and video generation models demonstrate that our method, GCache, consistently outperforms prior caching strategies.
% achieving superior speed--quality trade-offs without introducing any additional inference-time computation.
% Notably, on Wan2.1, our approach maintains a $2.17\times$ speedup while significantly enhancing generation quality, reducing the LPIPS score from $0.1095$ to $0.0316$.
Overall, we summarize our contributions as follows:
\begin{itemize}
    [leftmargin=1.2em, topsep=-0.5pt, itemsep=3pt, parsep=3pt]
    \item We identify a fundamental misalignment between local cache reuse errors and global generation quality, and provide a theoretical analysis that characterizes how cache reuse errors propagate during diffusion sampling, establishing an analytical upper bound on their global impact.
    % \item We demonstrate that, while theoretically sound, this analytical upper bound does not act as a tight proxy for empirical error propagation, due to the highly non-convex nature of the diffusion models.
    \item We demonstrate that there is an estimative gap between this theoretical bound and empirical error behavior, observing that the bound's worst-case assumptions can be conservatively biased in the context of highly non-convex diffusion dynamics.
    \item Motivated by this, we propose GCache, which reparameterizes the error propagation exponent with a Bernstein form and formulates cache reuse policy search as a bilevel optimization problem to better align theoretical estimates with empirical error behavior.
    \item We conduct extensive experiments on both image and video generation tasks, demonstrating that GCache consistently achieves superior speed--quality trade-offs compared to existing cache-based acceleration methods.
\end{itemize}

\section{Preliminaries}

In this section, we briefly review the background on diffusion models and cache reuse to establish notation and context.
Further discussions of related work are provided in Appendix~\ref{app:rel_work}.

\subsection{Diffusion Models}

Diffusion models~\cite{DBLP:conf/nips20:DDPM, DBLP:conf/iclr21:ScoreSDE, DBLP:conf/iclr23:FlowMatching} synthesize samples by gradually perturbing real data $\bm x \sim p_{\text{data}}(\bm x)$ into prior noise $\bm n \sim p_{\text{prior}}(\bm n)$ (e.g., a standard Gaussian distribution $\mathcal{N}(\bm 0, \mathbf{I})$), subsequently learning to invert this process for sample generation.

Recent works~\cite{DBLP:conf/icml24:SD3} frequently adopt the Flow Matching paradigm~\cite{DBLP:conf/iclr23:FlowMatching}.
Specifically, given a data-noise pair $(\bm x, \bm n)$, a flow path is constructed as $\bm x_t = (1 - t) \bm x + t \bm n$ for $t \in [0, 1]$, which induces the conditional velocity field:
\begin{equation}
    \bm v_t(\bm x_t \mid \bm x) = \frac{\mathrm{d}}{\mathrm{d}t} \left[ (1 - t) \bm x + t \bm n \right] = \bm n - \bm x.
\end{equation}
Since a specific $\bm x_t$ can result from various $(\bm x, \bm n)$ pairs, Flow Matching targets the marginal velocity field:
\begin{equation}
    \bm v(\bm x_t, t) := \mathbb{E}_{p_t(\bm x \mid \bm x_t)} \left[ \bm v_t(\bm x_t \mid \bm x) \right].
\end{equation}
A neural network $\bm v_\theta$ is trained to approximate this marginal field by minimizing the conditional Flow Matching loss:
\begin{equation}
    \mathcal{L}_\text{CFM}(\theta) = \mathbb{E}_{t, \bm x, \bm n} \left\| \bm v_\theta(\bm x_t, t) - \bm v_t(\bm x_t \mid \bm x) \right\|_2^2.
\end{equation}
This objective is equivalent to minimizing the Flow Matching loss $\mathcal{L}_\text{FM}(\theta) = \mathbb{E}_{t, p_t(\bm x_t)} \| \bm v_\theta (\bm x_t, t) - \bm v(\bm x_t, t) \|_2^2$, effectively fitting the model to the true marginal velocity.

During inference, samples are generated by solving the corresponding ordinary differential equation (ODE):
\begin{equation}
    \frac{\mathrm{d}}{\mathrm{d}t} \bm x_t = \bm v(\bm x_t, t),
\end{equation}
initialized at $\bm x_1 \sim p_\text{prior}$ and integrated backward to $t = 0$.
The final sample is given by $\bm x_0 = \bm x_1 - \int_0^1 \bm v(\bm x_\tau, \tau) \mathrm{d}\tau$.
In practice, this integral is approximated using numerical ODE solvers.
For instance, the Euler method discretizes the trajectory over $N$ timesteps $1 = t_N > t_{N-1} > \cdots > t_2 > t_1 = 0$, computing each step as:
\begin{equation}
    \bm x_{t_{i}} = \bm x_{t_{i+1}} - (t_i - t_{i+1}) \bm v(\bm x_{t_{i+1}}, t_{i+1}).
\end{equation}

\subsection{Cache Reuse}

The primary inference bottleneck in diffusion models arises from the repetitive model evaluations necessitated by the ODE solver.
Cache reuse has emerged as a promising strategy to mitigate this overhead by leveraging the temporal redundancy of intermediate representations across sampling steps.
A prevalent approach involves reusing residual mappings across specific model layers~\cite{DBLP:journals/corr24:Delta-DiT}.

Formally, consider a feed-forward neural network (e.g., a Transformer~\cite{DBLP:conf/nips17:Attention}) decomposed as:
\begin{equation}
    \bm v(\bm x_t, t) = (f_{\text{out}} \circ f_{\text{mid}} \circ f_{\text{in}}) (\bm x_t, t),
\end{equation}
where $f_{\text{mid}}$ represents the intermediate blocks (e.g., core transformer layers), while $f_{\text{in}}$ and $f_{\text{out}}$ denote the layers responsible for input embedding and output projection (e.g., patchify and unpatchify), respectively.
Let $\bm h_t := f_{\text{in}}(\bm x_t)$ and $\bm z_t := f_{\text{mid}}(\bm h_t)$, the residual is then defined as
\begin{equation}
    \bm \delta_t := \bm z_t - \bm h_t.
\end{equation}
During the sampling process, a cache reuse policy $\bm m \in \{0, 1\}^{N}$ governs the use of a stored residual $\bm \delta^c$.
At each timestep $t_n$, the policy determines whether to approximate the state by reusing the cached residual ($\bm m_{t_i} = 0$):
\begin{equation}
\bm z^c_{t_i} = \bm \delta^c + \bm h_{t_i},
\end{equation}
or to perform a full computation and update the cache ($\bm m_{t_i} = 1$):
\begin{equation}
\bm \delta^c \leftarrow \bm \delta_{t_i}.
\end{equation}
The objective of a cache reuse policy is to minimize the degradation of generation quality subject to a specific computational budget, which is typically defined by the total number of cache reuses, $\| \bm m \|_0$.

To develop effective caching policies, recent studies~\cite{DBLP:conf/cvpr25:TeaCache, DBLP:journals/corr25:ERTACache} typically employ a local error metric, specifically the relative $\ell_1$ distance, to determine whether to reuse the previously cached residual $\bm \delta^c$ at timestep $t_i$:
\begin{equation}
    d(\bm \delta^c, \bm \delta_{t_i}) = \frac{\| \bm \delta^c - \bm \delta_{t_i} \|_1}{\| \bm \delta^c \|_1}.
\end{equation}
However, as we demonstrate below, this local-only perspective fails to account for the error propagation inherent in the denoising process, ultimately resulting in sub-optimal caching decisions.

\section{Method}

In this section, we begin with preliminary experiments that reveal the error propagation behavior in the denoising process (\cref{sec:method:motivation}).
We then develop a theoretical upper bound that characterizes how local cache reuse errors propagate across timesteps (\cref{sec:method:theory}).
Since directly optimizing cache reuse policies with this bound proves insufficient in practice, we further refine it via a bilevel optimization framework and propose \textbf{G}lobal-Impact \textbf{Cache} (\textbf{GCache}) with an efficient solution strategy (\cref{sec:method:method}).

% In this section, we first conduct a preliminary experiment to reveal the error propagation phenomenon inherent in the denoising process (\cref{sec:method:motivation}).
% Building on these empirical insights, we then provide a rigorous theoretical analysis of how local errors induced by cache reuse propagate throughout the generation trajectory (\cref{sec:method:theory}).
% Finally, we propose Global-Impact Cache (GCache), which formulates the search for an optimal caching policy as a bilevel optimization problem and discusses how to effectively solve it (\cref{sec:method:method}).
% \YXC{TODO}

\subsection{Motivation}
\label{sec:method:motivation}

\begin{wrapfigure}{r}{0.46\linewidth}
    \centering
    \vskip -0.4in
    \includegraphics[width=0.95\linewidth]{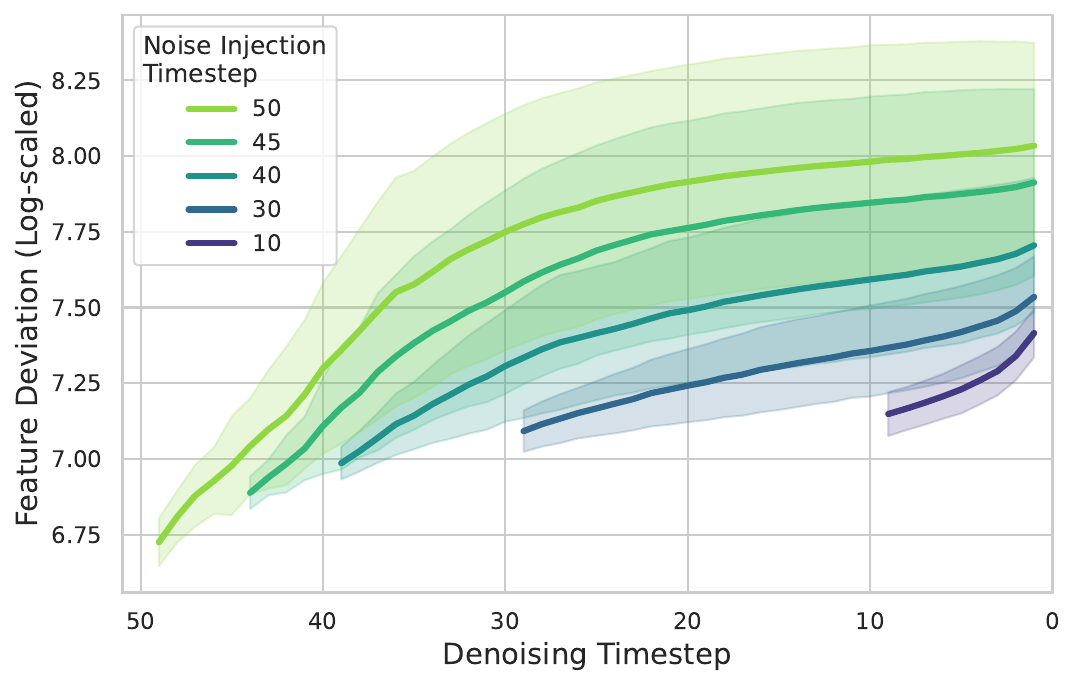}
    \vskip -0.07in
    \caption{\footnotesize
        Characterization of error propagation across the denoising trajectory.
        % We inject fixed-magnitude perturbations into intermediate features at different timesteps (shown in different colors) and track the log-scaled deviation $\|\bm x_t - \hat{\bm x}_t\|_1$ over subsequent denoising steps, where $\bm x_t$ and $\hat{\bm x}_t$ denote the unperturbed and perturbed states.
        % Perturbations introduced earlier are progressively amplified and lead to substantially larger final deviations than those injected later.
        % , revealing a cumulative and time-dependent error propagation effect.
    }
    \label{fig:motivation}
    \vskip -0.25in
\end{wrapfigure}
% To motivate our study, we first conduct experiments on video generation.
% Specifically, we inject random noise of a fixed magnitude into the intermediate feature maps at various timesteps and measure the resulting deviation between the perturbed and unperturbed trajectories throughout the subsequent denoising process.
% As illustrated in \cref{fig:motivation}, perturbations introduced at earlier stages undergo progressive amplification, ultimately exerting a disproportionate impact on the final output.
To motivate our study, we first conduct controlled perturbation experiments on video generation.
Specifically, we inject random noise of a fixed magnitude into intermediate feature maps at different denoising timesteps and track the log-scaled deviation $\|\bm x_t - \hat{\bm x}_t\|_1$ over subsequent denoising steps, where $\bm x_t$ and $\hat{\bm x}_t$ denote the unperturbed and perturbed states.
As illustrated in \cref{fig:motivation}, Perturbations introduced earlier are progressively amplified and lead to substantially larger final deviations than those injected later.

These findings reveal a cumulative and time-dependent error propagation effect, suggesting that an effective cache reuse policy should account for not only the local cache reuse error but also the error propagation inherent in the denoising process.
Building on this observation, the following subsection establishes a formal relationship between local reuse discrepancies and their cumulative propagation.

% \cref{fig:motivation noise_timestep_vs_quality} illustrates the relationship between generation quality and the timing of noise injection, revealing that earlier perturbations lead to more significant quality degradation.
% Furthermore, \cref{fig:motivation error propagation}

% demonstrates that noise introduced at early stages undergoes progressive amplification during the denoising process, ultimately culminating in a more pronounced impact on the final output.

\subsection{Error Propagation of Cache Reuse}
\label{sec:method:theory}

To rigorously quantify how local cache reuse errors accumulate across the denoising trajectory, we first establish a formal theoretical framework.
Building on the empirical insights from \cref{sec:method:motivation}, we aim to derive an upper bound for the final generation error that accounts for the time-dependent nature of error propagation.
% To ensure the mathematical tractability of the probability flow and the stability of the numerical solver,
We first introduce the following regularity conditions, which are standard in the analysis of diffusion-based generative models~\cite{DBLP:conf/icml23:Assumption}.

\begin{assumption}[Regularity and Approximation Conditions]
    \label{assump:regularity}
    For all denoising timesteps $t \in [t_1, t_N]$, we assume the following conditions hold:
    % \begin{enumerate}[label=(\roman*), nosep]
    \begin{enumerate}[
        label=(\roman*),
        topsep=0pt,
        itemsep=3pt,
        parsep=3pt,
        partopsep=3pt,
        leftmargin=2em
    ]
    % \begin{enumerate}
    % \item \textbf{Bounded Velocity Approximation Error:}
    % The learned velocity $\bm v_\theta$ approximates the ground-truth marginal velocity with a bounded local error $\eta \ge 0$ such that $\left\| \bm v_\theta(\bm x, t) - \bm v(\bm x, t) \right\|_1 \le \eta$ for all $\bm x$ and $t$.
    \item \textbf{Bounded Approximation Error:}
    The empirical velocity field $\bm v_\theta$ approximates the ground-truth marginal velocity $\bm v$ with a uniform error bound $\eta \ge 0$, i.e., $\left\| \bm v(\bm x, t) - \bm v_\theta(\bm x, t) \right\|_1 \le \eta$ for all $\bm x$ and $t$.
    % \item \textbf{Lipschitz Continuity:} The output projection $f_\text{out}(\cdot)$ is $L_\text{out}$-Lipschitz continuous, and the ground-truth marginal velocity is uniformly $L$-Lipschitz continuous, i.e., $\| \bm v(\bm x, t) - \bm v(\bm y, t) \|_1 \le L \|\bm x - \bm y\|_1$ for all $t$.
    \item \textbf{Lipschitz Continuity:}
    The output projection $f_{\text{out}}$ is $L_{\text{out}}$-Lipschitz continuous, and the learned velocity field $\bm v_\theta$ is $L_t$-Lipschitz continuous with respect to its first argument, i.e., $\| \bm v_\theta(\bm x, t) - \bm v_\theta(\bm y, t) \|_1 \le L_t \|\bm x - \bm y\|_1$ for all $\bm x, \bm y, t$, where $L_t \le L$ for all $t$.
    % \item \textbf{Bounded Velocity Derivative:} The total derivative of the ground-truth marginal velocity with respect to time is bounded by $M$ such that $\left\| \frac{\mathrm{d}}{\mathrm{d} t} \left( \bm v (\bm x_t, t) \right) \right\|_1 \le M$.
    \item \textbf{Bounded Velocity Dynamics:}
    The total time derivative of the ground-truth velocity field along any trajectory $\bm x_t$ is bounded by $M$, i.e., $\left\| \frac{\mathrm{d}}{\mathrm{d} t} \bm v (\bm x_t, t) \right\|_1 \le M$.
    \end{enumerate}
\end{assumption}

% \YXC{TODO}

% To investigate how reusing cached residuals impacts final generation fidelity,
We begin by analyzing a scenario involving a single reuse event at timestep $t_{i+1}$.
Specifically, we evaluate the final error relative to the ground-truth ODE trajectory under Euler discretization.

\begin{theorem}[Global Error Bound under Cache Reuse]
    \label{theorem:error-bound_single-step_PF-ODE}
    Under Assumption~\ref{assump:regularity}, let $\bm{x}_{t}$ be the ground-truth ODE state at time $t$, and let $\hat{\bm{x}}^c_t$ be the state generated by the Euler solver using the learned velocity $\bm{v}_\theta$, incorporating a single cache reuse event.
    Specifically, suppose that at step $t_{i+1} \to t_i$, we reuse a cached residual $\bm{\delta}_{t_{i+1}}^c$ with error $\bm{\epsilon}_{t_{i+1}}^c = \bm{\delta}_{t_{i+1}} - \bm{\delta}^c_{t_{i+1}}$.
    Given a sequence of timesteps $t_{N}, t_{N-1}, \dots, t_1$ with uniform step size $h$, the global error at the final timestep $t_1$ is bounded by:
    \begin{equation}
        \left\| \bm{x}_{t_1} - \hat{\bm x}^c_{t_1} \right\|_1
        \le \underbrace{ h L_\text{out} \left\| \bm \epsilon^c_{t_{i+1}} \right\|_1 e^{(i-1)hL} }_{\text{Cache Reuse Error}}
        + \underbrace{ \left( \frac{\eta}{L} + \frac{h M}{2 L} \right) \left( e^{(N-1) h L} - 1\right) }_{ \text{Approximation \& Discretization Error} }.
    \end{equation}
\end{theorem}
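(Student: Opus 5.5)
We prove the bound with a standard one-step error recursion for the Euler scheme. The single cache-reuse perturbation is then treated as an extra local error injected at step $t_{i+1}$.

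\textbf{Error recursion.} Define $e_k := \|\bm x_{t_k} - \hat{\bm x}^c_{t_k}\|_1$, with $e_N = 0$ since both trajectories start from the same noise. The exact solution satisfies
\[
\bm x_{t_k} = \bm x_{t_{k+1}} - \int_{t_k}^{t_{k+1}} \bm v(\bm x_\tau,\tau)\,\mathrm d\tau .
\]
By Taylor expansion along the trajectory and Assumption~\ref{assump:regularity}(iii), the integral differs from $h\,\bm v(\bm x_{t_{k+1}},t_{k+1})$ by at most $h^2M/2$. This is the local truncation error.

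At a regular step, subtract the Euler update $\hat{\bm x}^c_{t_k} = \hat{\bm x}^c_{t_{k+1}} - h\bm v_\theta(\hat{\bm x}^c_{t_{k+1}},t_{k+1})$. Insert the intermediate terms $\bm v_\theta(\bm x_{t_{k+1}},\cdot)$ and use (i) and (ii). This gives
\[
e_k \le (1+hL)e_{k+1} + h\eta + \tfrac{h^2M}{2}.
\]

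At the reuse step $t_{i+1}\to t_i$, the velocity used is $f_{\text{out}}(\bm h + \bm\delta^c)$ instead of $f_{\text{out}}(\bm h+\bm\delta)$. By $L_{\text{out}}$-Lipschitzness, the two outputs differ by at most $L_{\text{out}}\|\bm\epsilon^c_{t_{i+1}}\|_1$. That step therefore picks up the extra additive term $hL_{\text{out}}\|\bm\epsilon^c_{t_{i+1}}\|_1$.

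\textbf{Unrolling.} We use the discrete Grönwall argument and $1+hL\le e^{hL}$. The recursion runs from $k=N-1$ down to $k=1$. The cache term is injected at step index $i$, so it is subsequently multiplied by $(1+hL)$ for each of the remaining steps down to $t_1$, i.e.\ $i-1$ factors. This gives the contribution $hL_{\text{out}}\|\bm\epsilon^c\|_1 e^{(i-1)hL}$.

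The per-step terms $h\eta + h^2M/2$ sum geometrically over the $N-1$ steps:
\[
\sum_{j=0}^{N-2}(1+hL)^j\Big(h\eta+\tfrac{h^2M}{2}\Big) = \frac{(1+hL)^{N-1}-1}{hL}\Big(h\eta+\tfrac{h^2M}{2}\Big) \le \Big(\tfrac{\eta}{L}+\tfrac{hM}{2L}\Big)\big(e^{(N-1)hL}-1\big).
\]
Adding the two contributions gives exactly the claimed bound.

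\textbf{Main obstacle.} The delicate step is bookkeeping the exponent $i-1$. The index convention has to be consistent: the reuse error enters the state $\hat{\bm x}^c_{t_i}$, and exactly $i-1$ further Euler steps remain. One also has to check that the cache step's own approximation and discretization errors are counted once, inside the geometric sum, and not duplicated.

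A second subtlety is the Taylor/truncation bound. Assumption~(iii) bounds the total derivative of $\bm v$ along the true trajectory, so the local error must be computed on the ground-truth path, not the numerical one. The Lipschitz constant $L_t\le L$ of $\bm v_\theta$, not of $\bm v$, is what propagates the error, and the ordering of intermediate terms in the triangle inequality should be chosen to match this.
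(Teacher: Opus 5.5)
Your proposal is correct and follows essentially the same route as the paper's proof. It uses the one-step recursion $e_k \le (1+hL)e_{k+1} + h\eta + \tfrac{h^2M}{2}$ at ordinary steps, adds the extra term $hL_{\text{out}}\|\bm\epsilon^c_{t_{i+1}}\|_1$ at the reuse step, then unrolls with $e_N=0$, the geometric-series identity, and $1+a\le e^{a}$. Your integral form of the remainder for the $h^2M/2$ truncation bound is, if anything, slightly cleaner than the paper's Lagrange-form remainder, which is not literally valid for vector-valued functions.
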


\cref{theorem:error-bound_single-step_PF-ODE} demonstrates that the final error can be decomposed into three distinct components:
(i) the propagated error originating from cache reuse,
(ii) the network approximation error stemming from the learned velocity,
and (iii) the discretization error inherent to the ODE solver.

Notably, the cache reuse term is scaled by an exponential factor, indicating that local errors introduced by cache reuse are amplified by the system dynamics as the trajectory evolves.
This suggests that the impact of cache reuse should not be judged solely by the magnitude of the local error $\bm \epsilon_{t_{i+1}^c}$, but rather by its ``positional'' impact:
errors introduced earlier in the denoising process (larger $i$) may lead to significantly larger deviations in the final generated sample.
This theoretical insight corroborates the empirical observations presented in \cref{fig:motivation}.

In practice, we are primarily concerned with the deviation of the cached trajectory from the baseline discretization rather than from the ideal ODE solution.
Our objective is to enhance the efficiency of diffusion sampling while preserving its generative fidelity as faithfully as possible.
To this end, we establish error propagation bounds for single- and multi-step cache reuse in the following theorems.

\begin{theorem}[Error Propagation Bound under Single-Step Cache Reuse]
    \label{theorem:error-bound_single-step_Euler}
    Under Assumption~\ref{assump:regularity}, let $\hat{\bm{x}}_t$ be the state generated by the Euler solver using the learned velocity $\bm{v}_\theta$, and let $\hat{\bm{x}}_t^c$ be the state incorporating a single cache reuse event at step $t_{i+1} \to t_i$. 
    Suppose the reuse of a cached residual $\bm{\delta}_{t_{i+1}}^c$ introduces an error $\bm{\epsilon}_{t_{i+1}}^c = \bm{\delta}_{t_{i+1}} - \bm{\delta}^c_{t_{i+1}}$.
    Given a sequence of timesteps $t_{N}, t_{N-1}, \dots, t_1$ with step sizes $h_{t_{n+1}} = t_{n+1} - t_n$, the cumulative error at the final timestep $t_1$ is bounded by:
    \begin{equation}
        \left\| \hat{\bm x}_{t_1} - \hat{\bm x}^c_{t_1} \right\|_1
        \le \left\| \bm \epsilon^c_{t_{i+1}} \right\|_1 e^{w_{t_{i+1}}},
    \end{equation}
    where the propagation exponent $w_{t_{i+1}}$ is defined as:
    \begin{equation}
        w_{t_{i+1}}
        = \ln \left( h_{t_{i+1}} L_{\text{out}} \right)
        + \sum_{n=1}^{i-1} h_{t_{n+1}} L_{t_{n+1}}.
    \end{equation}
\end{theorem}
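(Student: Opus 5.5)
The plan is to compare the two discrete trajectories $\hat{\bm x}_t$ and $\hat{\bm x}^c_t$ step by step. Both start from the same $\bm x_{t_N}$ and coincide up to $t_{i+1}$. The argument has three parts: bound the one-step deviation created by the reuse, propagate it through the remaining exact Euler steps with a discrete Grönwall-type recursion, and collect the amplification factors into the exponent $w_{t_{i+1}}$. Only the Lipschitz parts of Assumption~\ref{assump:regularity} are needed. The approximation and discretization terms of Theorem~\ref{theorem:error-bound_single-step_PF-ODE} do not appear, because both trajectories use the same $\bm v_\theta$ and the same grid.

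\textbf{Step 1: the reuse step.} At the reuse step $t_{i+1}\to t_i$ the states agree, $\hat{\bm x}_{t_{i+1}}=\hat{\bm x}^c_{t_{i+1}}$, so $\bm h_{t_{i+1}}$ is identical in both runs. The only difference is in the mid-block output: $\bm z_{t_{i+1}}-\bm z^c_{t_{i+1}} = \bm\delta_{t_{i+1}}-\bm\delta^c_{t_{i+1}} = \bm\epsilon^c_{t_{i+1}}$. Applying $f_{\text{out}}$, which is $L_{\text{out}}$-Lipschitz, gives a velocity discrepancy of at most $L_{\text{out}}\|\bm\epsilon^c_{t_{i+1}}\|_1$. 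The Euler update multiplies this by the step size, so
\[
e_i := \|\hat{\bm x}_{t_i}-\hat{\bm x}^c_{t_i}\|_1 \le h_{t_{i+1}} L_{\text{out}}\|\bm\epsilon^c_{t_{i+1}}\|_1 .
\]
If $f_{\text{out}}$ also depends on $\bm h_t$ or $t$, those inputs are shared, so the bound is unaffected.

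\textbf{Step 2: the remaining steps.} For each later step $t_{n+1}\to t_n$ with $n=i-1,\dots,1$, both trajectories apply the full $\bm v_\theta$. The triangle inequality and the $L_{t_{n+1}}$-Lipschitz property give
\[
e_n \le (1+h_{t_{n+1}}L_{t_{n+1}})\, e_{n+1} \le e^{h_{t_{n+1}}L_{t_{n+1}}} e_{n+1},
\]
using $1+x\le e^x$. Unrolling from $n=i-1$ down to $n=1$ yields
\[
e_1 \le e_i \exp\Bigl(\textstyle\sum_{n=1}^{i-1} h_{t_{n+1}}L_{t_{n+1}}\Bigr).
\]
Substituting the bound on $e_i$ from Step 1 and writing $h_{t_{i+1}}L_{\text{out}} = e^{\ln(h_{t_{i+1}}L_{\text{out}})}$ produces exactly $\|\bm\epsilon^c_{t_{i+1}}\|_1 e^{w_{t_{i+1}}}$.

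\textbf{Main obstacle.} The delicate part is the bookkeeping, not the analysis. First, the indices must be tracked carefully: the reuse step uses step size $h_{t_{i+1}}$, and the sum must run over exactly the $i-1$ subsequent steps. Second, Step 1 has to be justified: the decomposition $\bm v = f_{\text{out}}\circ f_{\text{mid}}\circ f_{\text{in}}$, with the Euler step using the velocity, must mean that the reuse error enters only through $f_{\text{out}}$ applied at identical $\bm h_{t_{i+1}}$. The paper's convention for $\bm v_\theta(\bm x_{t_{i+1}},t_{i+1})$ versus the decomposed network needs to be matched to make this precise. If the paper instead uses the form $\bm x_{t_i}=\bm x_{t_{i+1}}-h\,\bm v$, the sign convention does not change any norm, so I would state this remark once and proceed.
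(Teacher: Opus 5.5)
Your proposal is correct and follows essentially the same route as the paper's proof. You bound the one-step deviation at the reuse step by $h_{t_{i+1}}L_{\text{out}}\|\bm\epsilon^c_{t_{i+1}}\|_1$ using the Lipschitz output projection at the shared state $\hat{\bm x}_{t_{i+1}}$, then propagate it through the remaining $i-1$ full Euler steps with factors $(1+h_{t_{n+1}}L_{t_{n+1}})\le e^{h_{t_{n+1}}L_{t_{n+1}}}$, and finally absorb $h_{t_{i+1}}L_{\text{out}}$ into the exponent via the logarithm, exactly as the paper does.
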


\begin{theorem}[Error Propagation Bound under Multi-Step Cache Reuse]
    \label{theorem:error-bound_multi-step_Euler}
    Under Assumption~\ref{assump:regularity}, let $\hat{\bm{x}}_t$ be the state generated by the Euler solver using the learned velocity $\bm{v}_\theta$, and let $\hat{\bm{x}}_t^c$ be the state incorporating multiple cache reuse events at every step $t_{i+1} \to t_i$ for $i \in \{N-1, \dots, 1\}$. 
    Suppose each reuse of a cached residual introduces a local error $\bm{\epsilon}_{t_{i+1}}^c = \bm{\delta}_{t_{i+1}} - \bm{\delta}^c_{t_{i+1}}$.
    Given a sequence of timesteps $t_{N}, t_{N-1}, \dots, t_1$ with step sizes $h_{t_{i+1}} = t_{i+1} - t_i$, the cumulative error at the final timestep $t_1$ is bounded by:
    \begin{equation}
        \label{eq:theoretical_prop_error_ub}
        \left\| \hat{\bm{x}}_{t_1} - \hat{\bm{x}}^c_{t_1} \right\|_1
        \le \sum_{i=1}^{N-1} \left\| \bm \epsilon^c_{t_{i+1}} \right\|_1 e^{w_{t_{i+1}}}.
    \end{equation}
    where the propagation exponent $w_{t_{n+1}}$ is defined as:
    \begin{equation}
        w_{t_{i+1}}
        = \ln \left( h_{t_{i+1}} L_\text{out} \right)
        + \sum_{j=1}^{i-1} h_{t_{j+1}} L_{t_{j+1}}.
    \end{equation}
\end{theorem}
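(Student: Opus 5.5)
We prove the multi-step bound by a discrete Gronwall-type recursion on the deviation $e_n := \|\hat{\bm x}_{t_n} - \hat{\bm x}^c_{t_n}\|_1$, following the proof of \cref{theorem:error-bound_single-step_Euler} but keeping a fresh injection at every step. Both trajectories start from the same initial noise, so $e_N = 0$.

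First, we write the one-step Euler updates for the baseline and cached trajectories. On the step $t_{i+1}\to t_i$ the baseline uses $\bm v_\theta(\hat{\bm x}_{t_{i+1}},t_{i+1}) = f_{\text{out}}(\bm h + \bm\delta_{t_{i+1}})$. The cached trajectory uses $f_{\text{out}}(\bm h^c + \bm\delta^c_{t_{i+1}})$, where $\bm h^c$ is the embedding of the cached state $\hat{\bm x}^c_{t_{i+1}}$. We insert the intermediate quantity $\bm v_\theta(\hat{\bm x}^c_{t_{i+1}},t_{i+1}) = f_{\text{out}}(\bm h^c + \bm \delta)$, with $\bm\delta$ the true residual at the cached state, and interpret $\bm\epsilon^c_{t_{i+1}}$ as the residual mismatch at the state where reuse actually occurs (the same convention as in \cref{theorem:error-bound_single-step_Euler}). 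The triangle inequality then gives
\begin{equation*}
e_i \le e_{i+1} + h_{t_{i+1}}\big\|\bm v_\theta(\hat{\bm x}_{t_{i+1}}) - \bm v_\theta(\hat{\bm x}^c_{t_{i+1}})\big\|_1 + h_{t_{i+1}}\big\|f_{\text{out}}(\bm h^c+\bm\delta) - f_{\text{out}}(\bm h^c+\bm\delta^c)\big\|_1 .
\end{equation*}
Applying Assumption~\ref{assump:regularity}(ii) to each term yields the recursion
\begin{equation*}
e_i \le (1 + h_{t_{i+1}} L_{t_{i+1}})\, e_{i+1} + h_{t_{i+1}} L_{\text{out}} \|\bm\epsilon^c_{t_{i+1}}\|_1 .
\end{equation*}

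Second, we unroll this recursion from $e_N=0$ down to $e_1$. The term injected at step $t_{i+1}\to t_i$ is then multiplied by $\prod_{j=1}^{i-1}(1+h_{t_{j+1}}L_{t_{j+1}})$, since it passes through the later updates $t_{j+1}\to t_j$ for $j=i-1,\dots,1$. Using $1+x\le e^x$, this product is at most $\exp\big(\sum_{j=1}^{i-1} h_{t_{j+1}}L_{t_{j+1}}\big)$. Combined with the prefactor $h_{t_{i+1}}L_{\text{out}} = e^{\ln(h_{t_{i+1}}L_{\text{out}})}$, the coefficient is exactly $e^{w_{t_{i+1}}}$. Summing over $i=1,\dots,N-1$ gives \eqref{eq:theoretical_prop_error_ub}.

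Equivalently, one can argue by linearity of the recursion. The error recursion is affine in the injections, so the multi-step bound is the superposition of single-step bounds from \cref{theorem:error-bound_single-step_Euler}. The explicit unrolling above proves this cleanly.

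The main obstacle is the first step: making precise what $\bm\epsilon^c_{t_{i+1}}$ means when the cached trajectory has already drifted. We handle it by defining the error at the cached state, so that the Lipschitz term and the injection term separate cleanly. If the error is instead defined along the baseline trajectory, an extra cross term appears. That term is bounded by $L_{\text{out}}\|\bm h-\bm h^c\|$, which would need a Lipschitz bound on $f_{\text{in}}$. I would adopt the cached-state convention to match the stated bound.
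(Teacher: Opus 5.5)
Your proposal is correct and follows the paper's proof essentially step for step. You use the same insertion of $\bm v_\theta(\hat{\bm x}^c_{t_{i+1}}, t_{i+1})$ to split the velocity gap into a Lipschitz-propagated term and an $L_{\text{out}}$-bounded reuse term, the same affine recursion unrolled from $e_N = 0$, and the same final use of $1+a \le e^a$. Your remark that $\bm\epsilon^c_{t_{i+1}}$ must be the residual mismatch at the cached state $\hat{\bm x}^c_{t_{i+1}}$ is exactly the convention the paper's decomposition uses implicitly, so it clarifies the argument rather than departing from it.
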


\cref{theorem:error-bound_single-step_Euler} and \cref{theorem:error-bound_multi-step_Euler} provide an analytical characterization of how a cached trajectory deviates from the baseline discretization.
Crucially, these bounds suggest that an optimal cache reuse policy $\bm m$ can be identified by minimizing the upper bound of the total propagation error:
\begin{equation}
    \label{eq:optim_theoretical_ub}
    \bm m^\star(\bm s)
    \in \underset{\bm m \in \mathcal{C}}{\arg \min} \sum_{i=1}^{N-1} \left\| \bm \epsilon^c_{t_{i+1}} \right\|_1 e^{w_{t_{i+1}}},
\end{equation}
where $\mathcal{C} = \{ \bm m: \| \bm m \|_0 =K \}$ denotes the feasible set of policies under a predefined budget $K$, representing the number of full-computation steps allowed throughout the denoising process.

\begin{figure}[tbp]
    \centering
    % 第一张子图
    \begin{subfigure}[b]{0.49\textwidth}
        \centering
        \includegraphics[width=0.9\textwidth]{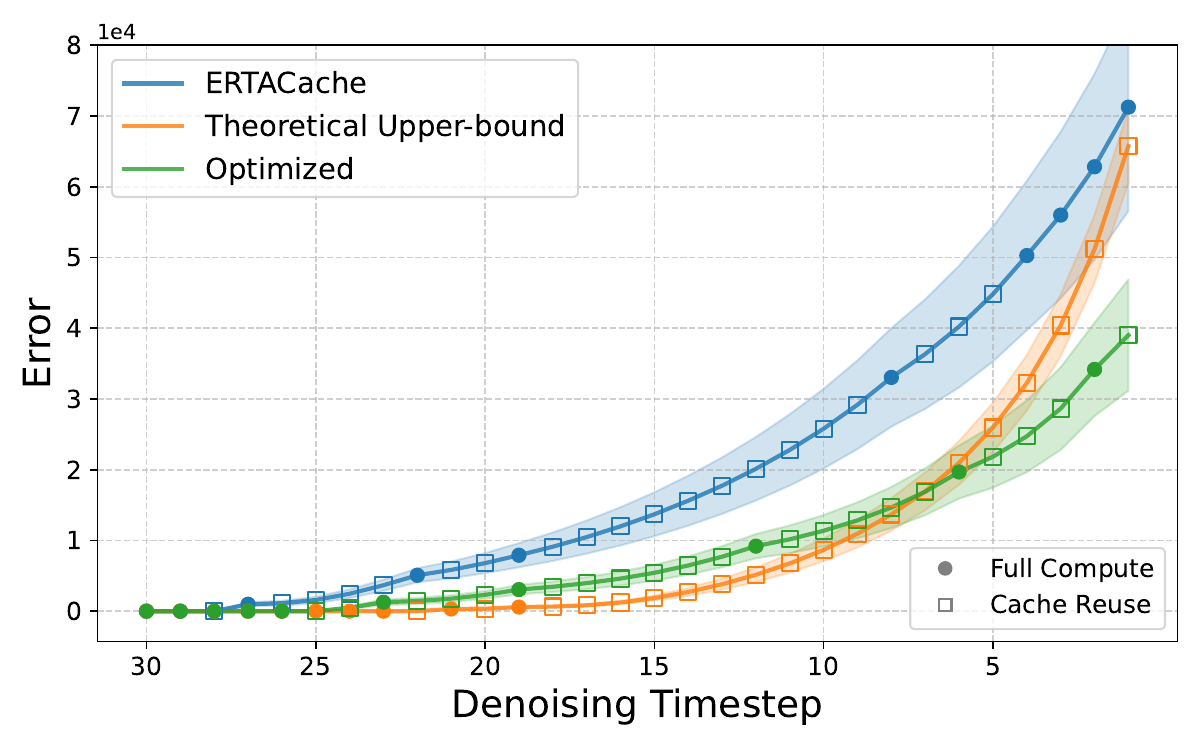}
        \caption{Comparison of different cache policies.}
        \label{fig:comparison_cache_policies}
    \end{subfigure}
    \hfill % 在两个子图之间添加弹性间距，使它们分别向左右对齐
    % 第二张子图
    \begin{subfigure}[b]{0.49\textwidth}
        \centering
        \includegraphics[width=0.9\textwidth]{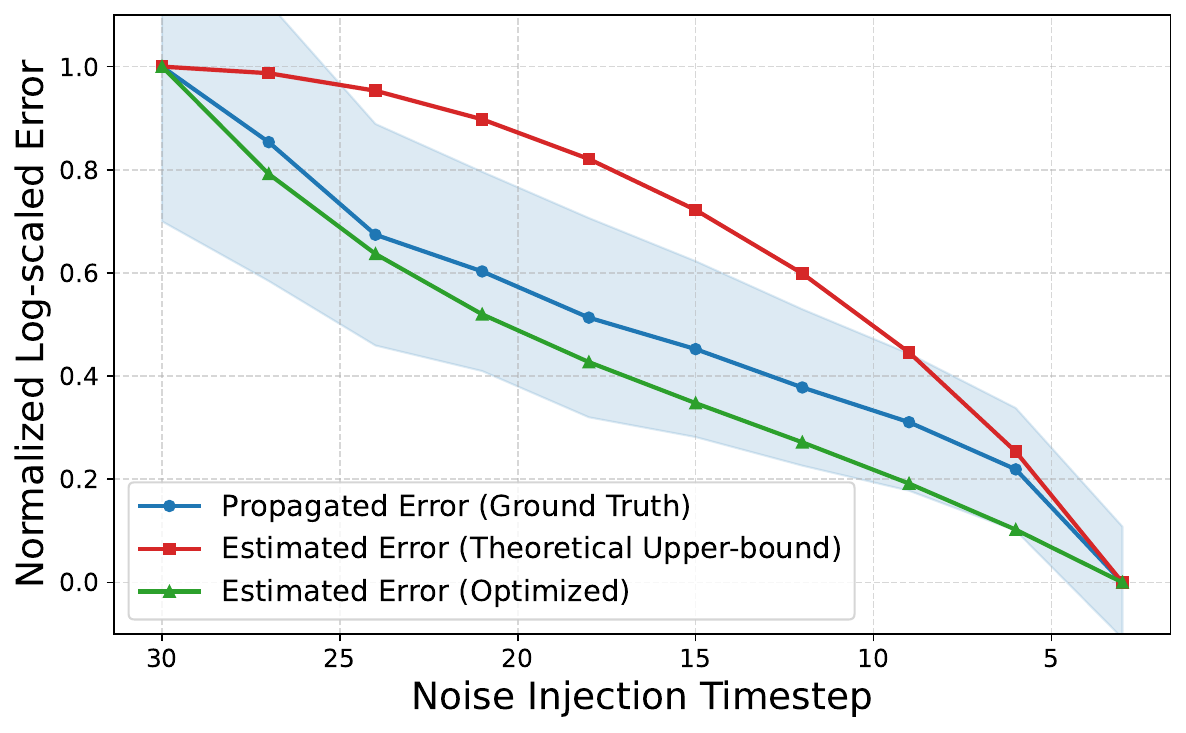}
        \caption{Comparison of error propagation estimations}
        \label{fig:estimated_error}
    \end{subfigure}
    
    \caption{
         Analysis of cache policies and error estimation.
        (a) Policy comparison.
        Reconstruction error across the denoising process.
        The theoretical policy (\textcolor{orange}{orange}) is overly pessimistic in early stages, leading to late-stage error spikes; GCache (\textcolor{mygreen}{green}) balances computation to achieve the lowest final error.
        (b) Estimation fidelity.
        Comparison of error propagation profiles.
        The analytical bound (\textcolor{red}{red}) provides a conservative upper limit, while GCache's optimized weighting (\textcolor{mygreen}{green}) aligns tightly with the empirical ground truth (\textcolor{blue}{blue}).
        % Comparison of caching policies and error propagation estimation.
        % (a) Reconstruction error during the denosing process,
        % where circles denote full computation and squares denote cache reuse.
        % The policy derived from the theoretical upper bound (orange) allocates most computation to early steps, leading to excessive reuse and a sharp error spike near the end, while our optimized policy (green) balances computation across timesteps and achieves the lowest final error.
        % (b) Comparison between empirical error propagation (blue) and theoretical estimation (red) when injecting a perturbation at a single timestep.
        % The analytical bound consistently overestimates the true impact, especially at early stages, whereas our optimized weighting (green) closely matches the empirical behavior.
    }
    \label{fig:cache_policy_error_propagation}
\end{figure}

\textbf{Effectiveness of the Analytical Upper Bound.}
% \cref{theorem:error-bound_multi-step_Euler} provides a principled framework for characterizing the global propagation of cache reuse errors.
By directly minimizing the analytical upper bound in \cref{eq:optim_theoretical_ub}, we derive a cache reuse policy that accounts for the cumulative impact of errors across the entire trajectory.
As illustrated in \cref{fig:comparison_cache_policies}, the policy optimized via this theoretical bound (Orange) consistently achieves a lower reconstruction error $\|\bm x_{t_1}-\hat{\bm x}_{t_1}\|_1$ under a fixed computation budget ($K=10$) at the final timestep compared to the state-of-the-art ERTACache~\cite{DBLP:journals/corr25:ERTACache} (Blue).
This superiority stems from our bound's ability to capture the long-term dependency of errors, whereas prior methods often rely on local heuristics that fail to account for the exponential amplification of early-stage perturbations.

\subsection{Optimized Cache Reuse Policy}
\label{sec:method:method}

\textbf{From Theoretical Bounds to Empirical Alignment.}
While the policy derived from \cref{eq:optim_theoretical_ub} effectively minimizes the cumulative error, it exhibits a distinct behavior of prioritizing full computation in early denoising stages and clustering cache reuse toward the end.
To understand the underlying mechanics, we analyze the discrepancy between our analytical bound and the actual empirical error.
As illustrated in \cref{fig:estimated_error}, the analytical bound provides a conservative overestimation of the propagated error, particularly during the initial steps.
This gap stems from the fact that the theoretical bound assumes a worst-case error growth, which does not fully account for the intrinsic error-resilience and non-linear dynamics of diffusion models.
Motivated by this observation, we move beyond a conservative bound and introduce a bilevel optimization formulation.
This approach adaptively ``tightens'' the error estimation by learning a weighting function that aligns our theoretical framework with empirical generation quality, ultimately yielding a more refined and effective cache reuse policy.

\textbf{Bilevel Optimization Formulation.}
To address the gap of the analytical bound discussed above, we propose \textbf{G}lobal-Impact \textbf{Cache} (\textbf{GCache}).
The key idea is to move beyond a conservative upper bound by optimizing a parameterized error-weighting function that better aligns with empirical results.
Specifically, we parameterize the propagation exponent $w_{t}$ using the $d$-th degree Bernstein polynomials as follows:
\begin{equation}
    \label{eq:poly}
    w(t; \bm s) = \sum_{\nu = 0}^{d} \bm s_{\nu + 1} \binom{n}{\nu} t^\nu (1 - t)^{d - \nu},
\end{equation}
where $\bm s \in [s_\text{min}, s_\text{max} ]^{d+1}$ denotes a vector of learnable parameters (coefficients).
With this parameterization, the search for an optimal cache reuse policy can be formulated as a bilevel optimization problem:
\begin{align}
    \bm s
    & = \underset{\bm s \in [s_\text{min}, s_\text{max}]^{d+1}}{\arg \min} \, \mathcal{L}(\bm m^\star(\bm s)),
    \label{eq:bilevel_outer} \\
    \mathrm{s.t.} \quad \bm m^\star(\bm s)
    & = \underset{\bm m \in \mathcal{C}}{\arg \min} \sum_{i=1}^{N-1} \left\| \bm \epsilon^c_{t_{i+1}} \right\|_1 e^{w(t_{i+1}; \bm s)}.
    \label{eq:bilevel_inner}
\end{align}
Here, $\mathcal{C} = \{ \bm m: \| \bm m \|_0 =K \}$ denotes the feasible set for the policy $\bm m$ under a predefined budget constraint $K$.

Intuitively, the inner objective~\eqref{eq:bilevel_inner} seeks an optimal cache reuse policy $\bm m^\star$ that minimizes the current upper bound of the propagated error induced by the given polynomial parameters $\bm s$;
while the outer objective~\eqref{eq:bilevel_outer} optimizes the parameters $\bm s$ to minimize a generation quality metric $\mathcal{L}$ (e.g., LPIPS~\cite{DBLP:conf/cvpr18:LPIPS}), effectively tuning the error-weighting function to better reflect empirical generation quality.
However, standard first-order methods (e.g., SGD) are inapplicable to this problem, as neither objective is differentiable with respect to $\bm s$ or $\bm m$.
We therefore develop tailored strategies for both levels.

% \textbf{Solving the Bilevel Optimization Problem.}
% Directly optimizing the proposed bilevel problem using standard first-order methods (e.g., stochastic gradient descent) is infeasible, since neither the inner nor the outer objective is differentiable with respect to its parameters.
% Direct optimization of the proposed bilevel problem via standard first-order methods, such as Stochastic Gradient Descent (SGD), is not feasible because neither objective is directly differentiable with respect to its parameters.
% To circumvent this limitation, we develop distinct strategies tailored for
% To address this challenge, we design tailored solution strategies for the both \ding{182} inner and \ding{183} outer optimization problems.

\textbf{Inner Optimization via Dynamic Programming.}
The inner problem~\eqref{eq:bilevel_inner} can be cast as a constrained shortest path problem on a directed acyclic graph.
We define a cost matrix $E \in \mathbb{R}^{N \times N}$, where each entry
\begin{equation}
    E_{i,j} = \left\| \bm \delta_{t_i} - \bm \delta_{t_j} \right\|_1 e^{w(t_{j};\bm s)},
    \label{eq:error-matrix}
\end{equation}
quantifies the propagated error of reusing the residual from $t_i$ at step $t_j$.
Finding the optimal policy $\bm m^\star$ is equivalent to finding a path from $t_N$ to $t_1$ that minimizes cumulative cost with exactly $K$ refresh nodes.
This admits an efficient Dynamic Programming (DP) solution with $O(KN^2)$ complexity.
Given $N \le 100$ in modern schedulers, this overhead is negligible.

% \ding{182} The inner objective~\eqref{eq:bilevel_inner} can be reformulated as a classical constrained shortest path problem.
% Specifically, we define a propagated error matrix $E \in \mathbb{R}^{N \times N}$, where each entry,
% \begin{equation}
%     E_{i,j} = \left\| \bm \delta_{t_i} - \bm \delta_{t_j} \right\|_1 e^{w(t_{j};\bm s)},
%     \label{eq:error-matrix}
% \end{equation}
% represents the cost (propagated error) incurred by reusing the residual $\bm \delta_{t_i}$ at time $t_j$.
% Under this formulation, selecting an optimal cache reuse policy reduces to finding a path from $(N, N)$ to $(N-K+1, 1)$ that minimizes the cumulative propagation cost under a fixed refresh budget.
% In this graph interpretation, a direct computation step, moving from $(i, j)$ to $(i+1,j)$, incurs zero error, while reuse steps accumulate the costs defined in $E$.
% This problem is well-studied and admits an efficient dynamic programming solution with time complexity $O(KN^2)$ (see Appendix~\ref{app:dp_search} for details).
% Since the number of denoising steps $N$ is typically fewer than $100$, the computational overhead of this optimization is negligible in practice.

\textbf{Outer Optimization via Bayesian Optimization.}
Since evaluating the outer objective $\mathcal{L}(\bm m^\star(\bm s))$ requires a full inference pass over the training set, it is a high-cost black-box function.
We employ Bayesian Optimization (BO) to efficiently search the parameter space of  $\bm s$.
BO models the objective $f_\text{eval}(\bm s) = \mathcal{L}(\bm m^\star(\bm s)) $ using a Gaussian Process (GP) surrogate.
Let $\mathcal{D}_n = \{ (\bm s_i, y_i) \}_{i=1}^n$ be the history of $n$ evaluations.
The GP provides a posterior predictive distribution $p(y \mid \bm s, \bm S, \bm Y) \sim \mathcal{N}(\mu(\bm s), \sigma^2(\bm s))$, where the mean $\mu(\bm s)$ estimates performance and the variance $\sigma^2(\bm s)$ quantifies uncertainty:
\begin{align}
    \mu (\bm{s}) &= \bm{k}(\bm{s}, \bm{S}) \left( \bm{K} + \sigma_\epsilon^2 \mathbf{I} \right)^{-1} \bm{Y}, \label{eq:mu} \\
    \sigma^2(\bm{s}) &= k(\bm{s}, \bm{s}) - \bm{k}(\bm{s}, \bm{S}) \left( \bm{K} + \sigma_\epsilon^2 \mathbf{I} \right)^{-1} \bm{k}(\bm{s}, \bm{S})^\top. \label{eq:sigma}
\end{align}
Here, $\bm S = [\bm s_1, \bm s_2, \cdots, \bm s_n]^\top$ are reviously sampled parameters and $\bm Y = [y_1, y_2, \cdots, y_n]^\top$ are their corresponding empirical evaluations.
We use the Lower Confidence Bound (LCB)~\cite{DBLP:conf/icml10/UCB} as the acquisition function:
\begin{equation}
    A_{\text{LCB}}(\bm s; \mathcal{D}_n) = \mu(\bm s) - \kappa_n \sigma(\bm s),
\end{equation}
where $\kappa_n$ balances exploitation and exploration.
This framework allows GCache to iteratively discover the optimal propagation exponent that best aligns the theoretical bound with empirical performance.

Detailed implementation specifics for Dynamic Programming and Bayesian Optimization are deferred to Appendix~\ref{app:dp_search} and Appendix~\ref{app:bayesian-optm}, respectively.
Additionally, a thorough discussion regarding the selection of our optimization objective is provided in Appendix~\ref{app:opt_objective}.

\section{Experiments}
\label{sec:experiments}

\subsection{Experimental Settings}

We conduct experiments on four representative DiT-based diffusion models to validate the generality and effectiveness of our method across both video and image generation. 
Specifically, we evaluate three video diffusion backbones, Open-Sora 1.2~\cite{DBLP:journals/corr24:opensora}, CogVideoX~\cite{DBLP:conf/iclr25:CogVideoX}, and Wan 2.1~\cite{DBLP:journals/corr25:wan2.1}, as well as one strong text-to-image model, Flux-dev 1.0~\cite{flux2024}. Unless otherwise specified, all experiments are conducted on a single NVIDIA A800 80GB GPU, and results are reported by averaging over five random seeds. 
Additional experimental details are provided in Appendix~\ref{app:exp_details}, respectively.
% We evaluate GCache on four representative DiT-based diffusion models: three video backbones (Open-Sora~1.2~\cite{DBLP:journals/corr24:opensora}, CogVideoX~\cite{DBLP:conf/iclr25:CogVideoX}, and Wan~2.1~\cite{DBLP:journals/corr25:wan2.1}) and one image backbone (Flux-dev~1.0~\cite{flux2024}). Unless otherwise specified, all experiments are conducted on a single NVIDIA A800 80GB GPU and averaged over five random seeds. More details are provided in Appendix~\ref{app:exp_details}.

\textbf{Baselines.}
We compare against several recent diffusion acceleration baselines, including $\Delta$-DiT~\cite{DBLP:journals/corr24:Delta-DiT}, T-GATE~\cite{DBLP:journals/tmlr25:TGATE}, PAB~\cite{DBLP:conf/iclr25:PAB}, ProfilingDiT~\cite{icv25:ProfilingDiT}, FasterCache~\cite{DBLP:conf/iclr25:fastercache}, TeaCache~\cite{DBLP:conf/cvpr25:TeaCache}, and ERTACache~\cite{DBLP:journals/corr25:ERTACache}.

\textbf{Evaluation Metrics.}
Following prior work~\cite{DBLP:conf/cvpr25:TeaCache, DBLP:journals/corr25:ERTACache}, we evaluate video generation using the official 946 prompts provided by VBench~\cite{DBLP:conf/cvpr24:vbench}, and image generation using the official 30K prompts from COCO~\cite{DBLP:conf/eccv14:coco}. 
For efficiency, we measure end-to-end inference latency, from prompt ingestion to the generation of the final frame, and report speedup relative to the corresponding base model. For quality, we adopt four widely used metrics: VBench~\cite{DBLP:conf/cvpr24:vbench}, LPIPS~\cite{DBLP:conf/cvpr18:LPIPS}, PSNR, and SSIM. More details are provided in Appendix~\ref{app:evaluation-metrics}.

\subsection{Main Results}

\begin{table*}[!t]
\centering
\caption{
    Quantitative evaluation of efficiency and visual quality for video generation across different methods on three leading text-to-video diffusion models.
    \textbf{Bold} denotes the best performance under similar acceleration ratios.
    $\uparrow$ indicates higher is better, and $\downarrow$ indicates lower is better.
}
% \footnotesize
% \vspace{-0.2em}
\scriptsize
\label{tab:comparison}
\begin{tabular}{cccccccc}
    \toprule
    \multirow{2.5}{*}{\textbf{Model}} & \multirow{2.5}{*}{\textbf{Method}} & \multicolumn{2}{c}{\textbf{Efficiency}} & \multicolumn{4}{c}{\textbf{Visual Quality}} \\
    \cmidrule(lr){3-4} \cmidrule(lr){5-8}
    & & \textbf{Speedup}$\uparrow$ & \textbf{Latency (s)}$\downarrow$ & \textbf{VBench}$\uparrow$ & \textbf{LPIPS}$\downarrow$ & \textbf{SSIM}$\uparrow$ & \textbf{PSNR}$\uparrow$ \\
    
    \midrule

    \multirow{11}{*}{\makecell{\textbf{Open-Sora 1.2}\\(51 frames, 480P)}}
    % \multicolumn{7}{c}{\textbf{Open-Sora 1.2} (51 frames, 480P)} \\
    % \midrule
    & Open-Sora 1.2 ($T = 30$) & 1$\times$ & 44.56 & 79.22\% & - & - & - \\
    & $\Delta$-DiT~\cite{DBLP:journals/corr24:Delta-DiT} &  1.03$\times$ & - & 78.21\% & 0.5692 & 0.4811 & 11.91\\
    & T-GATE~\cite{DBLP:journals/tmlr25:TGATE} & 1.19$\times$ & - & 77.61\% & 0.3495 & 0.6760 & 15.50 \\
    & PAB-slow~\cite{DBLP:conf/iclr25:PAB} &  1.33$\times$ & 33.40 & 77.64\% & 0.1471 & 0.8405 & 24.50 \\
    & PAB-fast~\cite{DBLP:conf/iclr25:PAB} &  1.40$\times$ & 31.85 & 76.95\% & 0.1743 & 0.8220 & 23.58 \\
    & TeaCache-slow~\cite{DBLP:conf/cvpr25:TeaCache}  &  1.55$\times$ & 28.78 & 79.28\% & 0.1316 & 0.8415 & 23.62 \\
    & TeaCache-fast~\cite{DBLP:conf/cvpr25:TeaCache} &  2.25$\times$ & 19.84 & 78.48\% & 0.2511 & 0.7477 & 19.10 \\
    & ERTACache-slow~\cite{DBLP:journals/corr25:ERTACache}   & 1.55$\times$  & 28.75 & 79.36\% & 0.1006 & 0.8706 & 25.45 \\
    & ERTACache-fast~\cite{DBLP:journals/corr25:ERTACache}  & 2.47$\times$ & 18.04 & 78.64\% & 0.1659 & 0.8170 & 22.34 \\
    & \cellcolor[gray]{0.9}\textbf{GCache-slow} ($K = 18$) & \cellcolor[gray]{0.9}1.56$\times$ & \cellcolor[gray]{0.9}\textbf{28.54} & \cellcolor[gray]{0.9}\textbf{79.48\%} & \cellcolor[gray]{0.9}\textbf{0.0509} & \cellcolor[gray]{0.9}\textbf{0.9247} & \cellcolor[gray]{0.9}\textbf{31.52} \\
    & \cellcolor[gray]{0.9}\textbf{GCache-fast} ($K = 11$)& \cellcolor[gray]{0.9}2.54$\times$ & \cellcolor[gray]{0.9}\textbf{17.48} & \cellcolor[gray]{0.9}78.44\% & \cellcolor[gray]{0.9}\textbf{0.1363} & \cellcolor[gray]{0.9}\textbf{0.8428} & \cellcolor[gray]{0.9}\textbf{24.92} \\
    
    \midrule

    \multirow{9}{*}{\makecell{\textbf{CogVideoX-2B}\\(48 frames, 480P)}}
    % \multicolumn{7}{c}{\textbf{CogVideoX-2B} (48 frames, 480P)} \\
    % \midrule
    & CogVideoX-2B ($T = 50$) & 1$\times$ & 78.48 & 80.18\% & - & - & - \\
    & $\Delta$-DiT~\cite{DBLP:journals/corr24:Delta-DiT} & 1.26$\times$ & 62.50 & 79.09\% & 0.4053 & 0.6126 & 16.15 \\
    & PAB~\cite{DBLP:conf/iclr25:PAB} & 1.35$\times$ & 57.98 & 79.76\% & 0.0860 & 0.8978 & 28.04 \\
    & FasterCache~\cite{DBLP:conf/iclr25:fastercache} & 1.62$\times$ & 48.44 & 79.83\% & 0.0766 & 0.9066 & 28.93 \\
    & TeaCache~\cite{DBLP:conf/cvpr25:TeaCache} & 2.92$\times$ & 26.88 & 79.00\% & 0.2057 & 0.7614 & 20.97 \\
    & ERTACache-slow~\cite{DBLP:journals/corr25:ERTACache} & 1.62$\times$ & 48.44 & 79.30\% & 0.0368 & 0.9394 & 32.77 \\
    & ERTACache-fast~\cite{DBLP:journals/corr25:ERTACache}& 2.93$\times$ & 26.78 & 78.79\% & 0.1012 & 0.8702 & 26.44 \\
    & \cellcolor[gray]{0.9}\textbf{GCache-slow} ($K = 31$) & \cellcolor[gray]{0.9}1.62$\times$ & \cellcolor[gray]{0.9}\textbf{48.40} & \cellcolor[gray]{0.9}\textbf{79.36\%} & \cellcolor[gray]{0.9}\textbf{0.0178} & \cellcolor[gray]{0.9}\textbf{0.9647} & \cellcolor[gray]{0.9}\textbf{37.53} \\
    & \cellcolor[gray]{0.9}\textbf{GCache-fast} ($K = 17$) & \cellcolor[gray]{0.9}2.93$\times$ & \cellcolor[gray]{0.9}\textbf{26.76} & \cellcolor[gray]{0.9}78.30\% & \cellcolor[gray]{0.9}\textbf{0.0721} & \cellcolor[gray]{0.9}\textbf{0.9042} & \cellcolor[gray]{0.9}\textbf{29.14} \\
    
    \midrule

    \multirow{6}{*}{\makecell{\textbf{Wan2.1-1.3B}\\(81 frames, 480P)}}
    % \multicolumn{7}{c}{\textbf{Wan2.1-1.3B} (81 frames, 480P)} \\
    % \midrule
    & Wan2.1-1.3B ($T = 50$) & 1$\times$ & 199 & 81.30\% & - & - & - \\
    & TeaCache~\cite{DBLP:conf/cvpr25:TeaCache} & 2.00$\times$ & 99.5 & 76.04\% & 0.2913 & 0.5685 & 16.17 \\
    & ProfilingDiT~\cite{icv25:ProfilingDiT} & 2.01$\times$ & 99 & 76.15\% & 0.1256 & 0.7899 & 22.02 \\
    & ERTACache~\cite{DBLP:journals/corr25:ERTACache} & 2.17$\times$ & 91.7 & 80.73\% & 0.1095 & 0.8200 & 23.77 \\
    & \cellcolor[gray]{0.9}\textbf{GCache-slow} ($K = 24$) & \cellcolor[gray]{0.9}2.17$\times$ & \cellcolor[gray]{0.9}\textbf{91.6} & \cellcolor[gray]{0.9}\textbf{80.81\%} & \cellcolor[gray]{0.9}\textbf{0.0316} & \cellcolor[gray]{0.9}\textbf{0.9475} & \cellcolor[gray]{0.9}\textbf{32.44} \\
    & \cellcolor[gray]{0.9}\textbf{GCache-fast} ($K = 16$) & \cellcolor[gray]{0.9}3.01$\times$ & \cellcolor[gray]{0.9}\textbf{66.1} & \cellcolor[gray]{0.9}\textbf{79.86\%} & \cellcolor[gray]{0.9}\textbf{0.0828} & \cellcolor[gray]{0.9}\textbf{0.8854} & \cellcolor[gray]{0.9}\textbf{22.06} \\
    
    \bottomrule 
    
\end{tabular}
\vskip -0.04in
\end{table*}

\begin{table*}[!t]
% \vskip 0.1in
\centering
% \setlength{\tabcolsep}{2pt}
% \footnotesize
\scriptsize
\caption{Quantitative evaluation of efficiency and visual quality for image generation across different methods on Flux-dev 1.0.
\textbf{Bold} denotes the best performance under similar acceleration ratios.
$\uparrow$ indicates higher is better, and $\downarrow$ indicates lower is better.}
\label{tab:comparison_Flux}
\begin{tabular}{lccccc}
\toprule
\multirow{2.5}{*}{\textbf{Method}} & \multicolumn{2}{c}{\textbf{Efficiency}} & \multicolumn{3}{c}{\textbf{Visual Quality}} \\
\cmidrule(lr){2-3} \cmidrule(lr){4-6}
& \textbf{Speedup}$\uparrow$ & \textbf{Latency (s)}$\downarrow$ & \textbf{LPIPS}$\downarrow$ & \textbf{SSIM} $\uparrow$ & \textbf{PSNR} $\uparrow$ \\
\midrule
% \multicolumn{7}{c}{\textbf{Open-Sora 1.2} (51 frames, 480P)} \\
% \midrule
Flux-dev 1.0 ($T = 30$) & 1$\times$ & 15.96 & - & - & - \\

TeaCache~\cite{DBLP:conf/cvpr25:TeaCache}  &  2.84$\times$ & 5.62 & 0.4427 & 0.7445 & 16.48 \\

ERTACache~\cite{DBLP:journals/corr25:ERTACache} & 2.87$\times$ & 5.56 & 0.2658 & 0.7863 & 20.60 \\

\rowcolor[gray]{0.9}
\textbf{GCache-slow} ($K=14$) & 2.05$\times$ & 7.78 & \textbf{0.0900} & \textbf{0.9129} & \textbf{28.50} \\

\rowcolor[gray]{0.9}
\textbf{GCache-fast} ($K=10$) & 2.87$\times$ & \textbf{5.56} & \textbf{0.1825} & \textbf{0.8423} & \textbf{23.76} \\

\bottomrule
\end{tabular}
\vspace{-0.2in}
\end{table*}

\begin{figure}[t]
    \centering
    % \vskip 0.1in
    \includegraphics[width=\columnwidth]{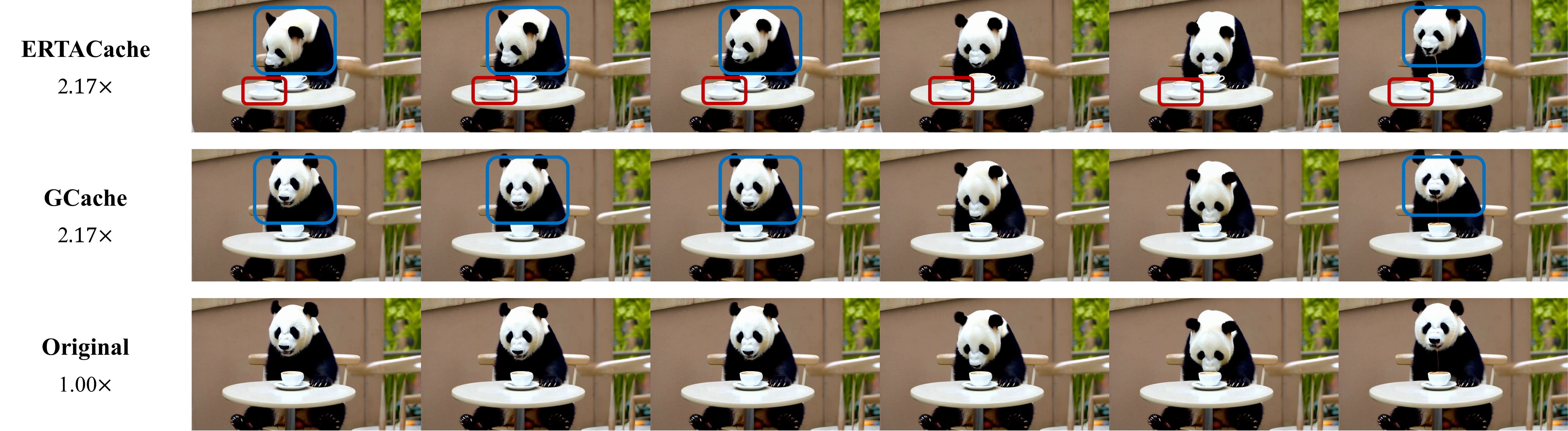}
    % \vskip -0.1in
    \caption{
        Qualitative comparison of video generation results on \textbf{Wan~2.1}.
        GCache consistently preserves higher alignment with the original, while ERTACache exhibits noticeable object and motion misalignment.
        \textcolor{red}{Red boxes} highlight object misalignment, and \textcolor{blue}{blue boxes} indicate motion misalignment.
        Best viewed when zoomed in.
    }
    \label{fig:q_comparison_video}
    \vskip -0.15in
\end{figure}

\textbf{Quantitative Evaluation.}
Across all evaluated settings, GCache consistently achieves state-of-the-art performance, delivering both faster inference and higher generation quality than existing cache-based acceleration methods.
As shown in \cref{tab:comparison}, GCache-Slow consistently outperforms the strongest baseline, ERTACache, across all video backbones, achieving higher VBench scores and reducing LPIPS by over $50\%$ on average while maintaining comparable acceleration.
Notably, on Wan~2.1, GCache-Slow reduces LPIPS from $0.1095$ to $0.0316$ under the same $2.17\times$ speedup, while GCache-Fast further achieves a $3.01\times$ speedup and still surpasses ERTACache in generation quality ($0.0828$ vs.\ $0.1095$ LPIPS).
Similarly, on Flux-dev~1.0 (\cref{tab:comparison_Flux}), GCache achieves an LPIPS of $0.1825$ under a $2.87\times$ speedup setting, significantly outperforming prior methods.
These results highlight a key distinction between GCache and prior cache-based methods.
While methods such as ERTACache explicitly introduce additional error estimation and rectification mechanisms to compensate for reuse-induced approximation errors, GCache focuses directly on optimizing the cache reuse policy itself from a global-impact perspective.
By allocating computation to the timesteps that most influence final generation quality, GCache achieves superior speed--quality trade-offs without introducing any additional inference-time computation.

% \begin{figure}[t]
%     \centering
%     % \vskip 0.1in
%     \includegraphics[width=0.9\columnwidth]{figures/q_comparison_img.pdf}
%     \vskip -0.1in
%     \caption{Qualitative comparison of image generation results on Flux-dev 1.0.
%     Best viewed when zoomed in.}
%     \label{fig:q_comparison_img}
%     \vskip -0.1in
% \end{figure}

\textbf{Qualitative Comparison.}
\cref{fig:q_comparison_video} shows video generation results on WAN~2.1.
Obviously, ERTACache suffers from motion and object misalignment under aggressive cache reuse, including misaligned motion dynamics (e.g., panda face motion) and object-level deviations (e.g., items on the table) from the original outputs.
In contrast, GCache preserves consistent motion patterns and object semantics.
Similarly, Figure~\ref{fig:q_comparison_img} presents image generation results on Flux-dev 1.0.
ERTACache exhibits clear semantic and spatial misalignment under the same prompts, such as generating four smoking stacks when the prompt specifies \emph{two}, as well as malformed human structures and incorrect object relationships.
By contrast, GCache maintains semantic correctness and spatial coherence, closely matching the original model outputs.
Additional results and discussions are provided in Appendix~\ref{app:qual_comparison}.

\begin{table}[tbp]
    \centering
    % 第一个表格的 minipage
    \begin{minipage}{0.45\textwidth}
        \centering
        \caption{Ablation study on polynomial order $d$.}
        \label{experiment: ablation study poly order}
        \footnotesize
        \vskip 0.05in
        \begin{tabular}{cccc}
        \toprule
        $d$ & \textbf{LPIPS}$\downarrow$ & \textbf{SSIM}$\uparrow$ & \textbf{PSNR}$\uparrow$ \\
        \midrule
        $1$ & 0.1114 & 0.8591 & 26.15 \\
        $2$ & 0.0733 & 0.9042 & 29.09 \\
        $3$ & \textbf{0.0721} & \textbf{0.9042} & \textbf{29.14} \\
        $4$ & 0.0733 & 0.9016 & 28.84 \\
        \bottomrule
    \end{tabular}
    \end{minipage}
    \hfill % 在两个 minipage 之间添加弹性间距，将它们推向两端
    % 第二个表格的 minipage
    \begin{minipage}{0.45\textwidth}
        \centering
        \caption{
            Ablation study on the outer objective $\mathcal{L}$.
            % The hybrid objective (LPIPS+SSIM) achieves the optimal balance between perceptual quality and structural fidelity.
        }
        \label{tab:objective_comparison}
        \footnotesize
        \vskip 0.05in
        \begin{tabular}{cccc}
        \toprule
        \textbf{Objective} & \textbf{LPIPS $\downarrow$} & \textbf{SSIM $\uparrow$} & \textbf{PSNR $\uparrow$} \\ 
        \midrule
        LPIPS       & 0.0733 & 0.9016 & 28.84 \\
        SSIM        & 0.0736 & \textbf{0.9045} & 29.10 \\
        LPIPS+SSIM  & \textbf{0.0721} & 0.9042 & \textbf{29.14} \\ 
        \bottomrule
    \end{tabular}
    \end{minipage}
    \vskip -0.2in
\end{table}

\begin{wrapfigure}{r}{0.35\linewidth}
    \centering
    \vskip -0.15in
    \includegraphics[width=\linewidth]{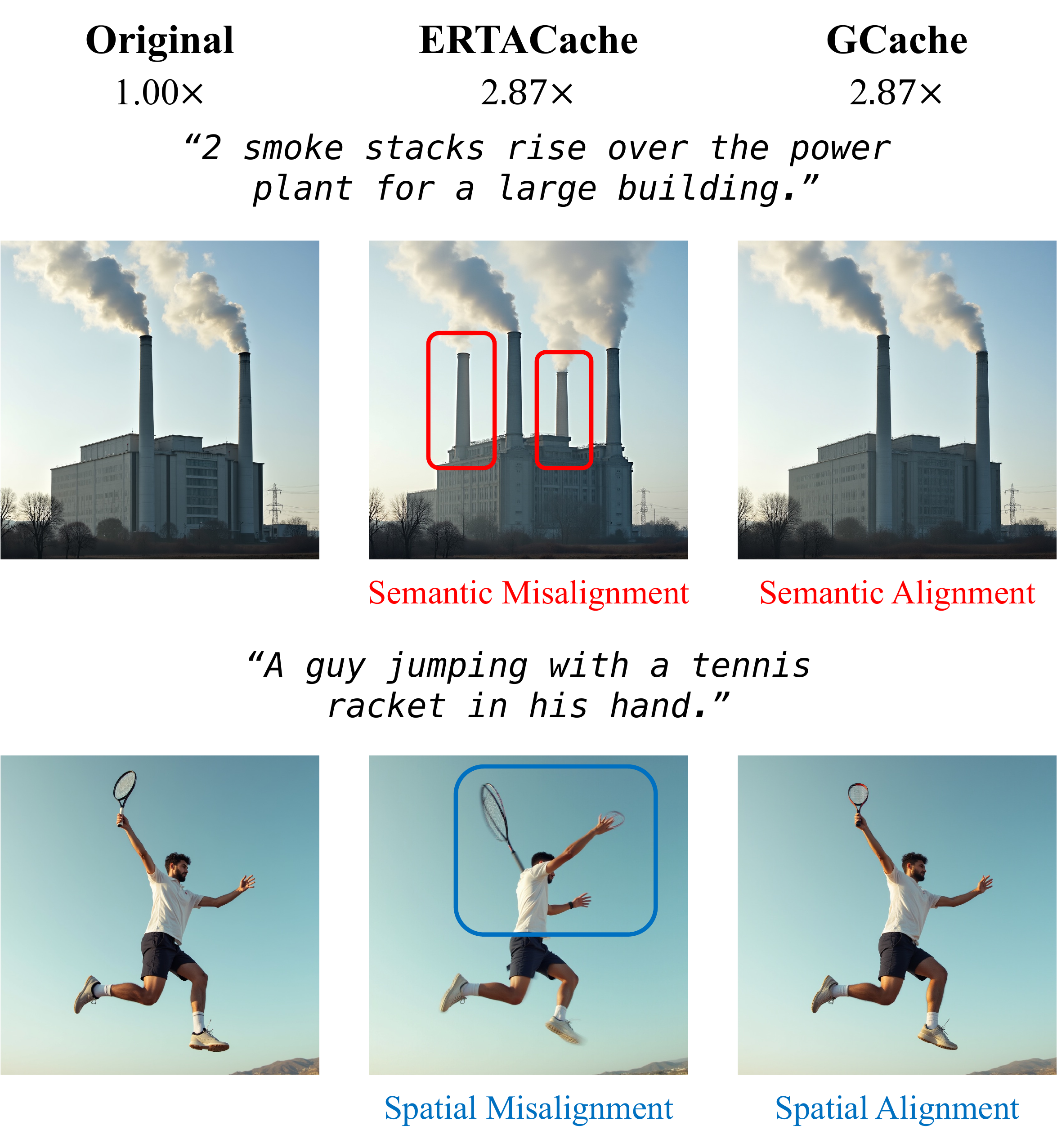}
    \vskip -0.05in
    \caption{Qualitative comparison of image generation results on Flux-dev 1.0.
    Best viewed when zoomed in.}
    \label{fig:q_comparison_img}
    \vskip -0.2in
\end{wrapfigure}

\subsection{Ablation Studies and Empirical Analysis}
\label{main:ablation_study}
% \label{app:exp_results:poly_degree}

% \begin{table}[tbp]
%     % \vskip -0.2in
%     \centering
%     \caption{
%         Ablation study on polynomial order $d$.
%         $\uparrow$: higher is better; $\downarrow$: lower is better.
%     }
%     \label{experiment: ablation study poly order}
%     \footnotesize
%     \vskip 0.1in
%     \begin{tabular}{cccc}
%         \toprule
%         $d$ & \textbf{LPIPS}$\downarrow$ & \textbf{SSIM}$\uparrow$ & \textbf{PSNR}$\uparrow$ \\
%         \midrule
%         $1$ & 0.1114 & 0.8591 & 26.15 \\
%         $2$ & 0.0733 & 0.9042 & 29.09 \\
%         $3$ & \textbf{0.0721} & \textbf{0.9042} & \textbf{29.14} \\
%         $4$ & 0.0733 & 0.9016 & 28.84 \\
%         \bottomrule
%     \end{tabular}
% \end{table}

\textbf{Ablation Study on Poly Degree $d$.}
We study the effect of the polynomial degree $d$ used in Eq.~\ref{eq:poly} for modeling the propagation exponent $w_t$.
Specifically, we evaluate different choices of $d$ under a fixed budget $K=17$ on CogVideoX-2B.
As shown in Table~\ref{experiment: ablation study poly order}, the best performance is achieved at $d=3$, yielding the lowest LPIPS and the highest SSIM and PSNR.
Therefore, we set $d=3$ as the default choice in all experiments.

% \subsection{Impact of the Outer Objective}
% \label{app:exp_results:outer_obj}

% \begin{table}[tbp]
%     \centering
%     \caption{
%         Ablation study on the outer objective formulation.
%         The hybrid objective (LPIPS+SSIM) achieves the optimal balance between perceptual quality and structural fidelity.
%     }
%     \label{tab:objective_comparison}
%     \footnotesize
%     \vskip 0.1in
%     \begin{tabular}{cccc}
%         \toprule
%         \textbf{Objective} & \textbf{LPIPS $\downarrow$} & \textbf{SSIM $\uparrow$} & \textbf{PSNR $\uparrow$} \\ 
%         \midrule
%         LPIPS       & 0.0733 & 0.9016 & 28.84 \\
%         SSIM        & 0.0736 & \textbf{0.9045} & 29.10 \\
%         LPIPS+SSIM  & \textbf{0.0721} & 0.9042 & \textbf{29.14} \\ 
%         \bottomrule
%     \end{tabular}
% \end{table}

\textbf{Impact of the Outer Objective $\mathcal{L}$.}
To investigate the sensitivity of GCache to the choice of the outer objective, we evaluate three loss formulations on CogVideoX-2B ($K=17$): LPIPS, SSIM, and a hybrid LPIPS+SSIM objective.
As summarized in \cref{tab:objective_comparison}, while individual losses focus on specific image attributes (LPIPS on perceptual features and SSIM on structural integrity), their combination (LPIPS+SSIM) yields the best overall performance across all metrics.
Specifically, the hybrid objective achieves the lowest LPIPS (0.0721) and the highest PSNR (29.14), suggesting that a multi-faceted supervision signal is crucial for optimizing the reuse policy.

% \subsection{Additional Experimental Results}
% \label{main_paper_ablation_study}

% We conduct extensive ablation studies and empirical analyses to validate the design choices of GCache and better understand its underlying mechanisms. Specifically, we study polynomial degree selection (Appendix~\ref{app:exp_results:poly_degree}), outer objective design (Appendix~\ref{app:exp_results:outer_obj}), robustness to prompt distribution shifts (Appendix~\ref{app:exp_results:prompt_distribution}), generalization across resolutions (Appendix~\ref{app:exp_results:Generalization_resolution}), the effectiveness of the learned policy (Appendix~\ref{app:exp_results:effect_policy}), and the fidelity of the pre-computed local error proxy (Appendix~\ref{app:exp_results:precompute_error}).

\textbf{Additional Experimental Results.}
We provide more comprehensive evaluations and empirical studies in Appendix~\ref{app:exp_results}.
These include:
% (i) an analysis of hyperparameter sensitivity, such as the polynomial degree $d$ (Appendix~\ref{app:exp_results:poly_degree});
% (ii) an investigation into the impact of different outer objectives (Appendix~\ref{app:exp_results:outer_obj});
(i) extensive robustness tests across diverse prompt distributions and spatial resolutions (Appendix~\ref{app:exp_results:prompt_distribution} and \ref{app:exp_results:Generalization_resolution});
and (ii) a detailed validation of the learned policy's effectiveness and the fidelity of our pre-computed error proxy (Appendix~\ref{app:exp_results:effect_policy} and \ref{app:exp_results:precompute_error}).
These results further substantiate the stability and generalization of GCache across various scenarios.

\section{Conclusion}

In this paper, we first identify the misalignment between local reuse discrepancies and global generation error, establishing a formal theoretical characterization of error propagation dynamics in cache-based acceleration.
We demonstrate that policies optimized strictly via conservative analytical bounds are often sub-optimal in practice.
To bridge this gap, we introduce \textbf{G}lobal-Impact \textbf{Cache} (\textbf{GCache}), a framework that reformulates the policy search as a bilevel optimization problem.
This approach effectively reconciles theoretical error control with empirical perceptual quality.
Extensive evaluations across various image and video backbones show that GCache consistently outperforms prior caching strategies.
% , achieving significant improvements in generation quality while maintaining substantial inference speedups.

% \clearpage
% \newpage

{
\bibliography{ref}
}

\clearpage
\newpage

%%%%%%%%%%%%%%%%%%%%%%%%%%%%%%%%%%%%%%%%%%%%%%%%%%%%%%%%%%%%

\appendix

\bigskip
\begin{center}
{\Large\bf Appendix}
\end{center}
% \vskip -0.1in

% \noindent\textbf{Roadmap.}
% We first provide implementation details in Appendix~\ref{app:imp_details}.
% We then present the missing proofs of the theorems in Appendix~\ref{app:proofs}.
% Next, we provide the experimental details and additional experimental results in Appendix~\ref{app:exp_details} and Appendix~\ref{app:exp_results}, respectively.
% Finally, we discuss the related work in Appendix~\ref{app:rel_work}.
% \YXC{TODO: quantitative comparison as a standalone section.}

% \begin{appendices}

% \addtocontents{toc}{\protect\setcounter{tocdepth}{-1}} % 让主目录停止记录附录后的详细章节
    
% \section*{Contents} % 附录目录的标题
% \etocsettocdepth{subsection} % 设置附录目录显示的深度
\addtocontents{toc}{\protect\setcounter{tocdepth}{2}}
% \setcounter{tocdepth}{4}
% 临时把目录链接颜色改成 citecolor
\hypersetup{
    linkcolor=citecolor
}

\tableofcontents

% 恢复正文原来的 linkcolor
\hypersetup{
    linkcolor=linkcolor
}

% \noindent
% \textbf{Appendix Overview.}  
% This appendix is organized as follows:

% \begin{itemize}
%     \item Appendix~\ref{app:imp_details} presents the implementation details of \textsc{GCache}, including the overall optimization procedure and the dynamic programming solver for cache policy search.
    
%     \item Appendix~\ref{app:proofs} provides the complete proofs of the theoretical results introduced in the main paper.

%     \item Appendix~\ref{app:rel_work} discusses related work and broader connections to prior diffusion acceleration methods.
    
%     \item Appendix~\ref{app:exp_details} provides supplementary experimental details, including dataset settings, evaluation protocols, and implementation configurations.
    
%     \item Appendix~\ref{app:exp_results} presents additional experimental results, including extended qualitative and quantitative evaluations.

%     \item Appendix~\ref{app:qual_comparison} provides additional qualitative comparisons, including image generation fidelity, cross-backbone video generation under shared prompts, and temporal consistency analysis.
    
% \end{itemize}

% \begin{appendices}

\clearpage
\section{Implementation Details}
\label{app:imp_details}

\subsection{Inner Optimization via Dynamic Programming}
\label{app:dp_search}

\begin{algorithm}[h]
\caption{Dynamic Programming for Cache Reuse Mask Search}
\label{alg:dp-mask}
\begin{algorithmic}[1]
\fontsize{9.4pt}{11pt}\selectfont

\REQUIRE Error matrix $E \in \mathbb{R}^{N\times N}$, cache refresh budget $K$
\ENSURE Reuse mask $m \in \{0,1\}^N$ with $\|m\|_0 = K$

\STATE Define the cumulative propagated error:
\(
\mathcal{E}_{j,i} = \sum_{\tau=j+1}^{i-1} E_{j,\tau},
\quad 0 \le j < i \le N
\)
\STATE Initialize DP table $dp[k,i]$ with $+\infty$ and parent table $par[k,i]$ with $-1$ for all $k,i$
\STATE $dp[1,0] \leftarrow 0$ 
% \hfill \textit{// Force refresh at the first step}

\FOR{$k = 2, \ldots, K$}
  \FOR{$i = k-1, \ldots, N-1$}
    \STATE $dp[k,i] \gets
      \min\limits_{0 \le j < i}
      \big(dp[k-1,j] + \mathcal{E}_{j,i}\big)$
    \STATE $par[k,i] \gets
      \arg\min\limits_{0 \le j < i}
      \big(dp[k-1,j] + \mathcal{E}_{j,i}\big)$
  \ENDFOR
\ENDFOR

\STATE $r_K \gets
  \arg\min\limits_{0 \le j \le N-1}
  \big(dp[K,j] + \mathcal{E}_{j,N}\big)$
\STATE Construct $m$ from $\{r_1,\dots,r_K\}$ by backtracking using $par$
\STATE \textbf{return} $m$
\end{algorithmic}
\end{algorithm}

Given the propagated error matrix $E$ defined in Eq.~\ref{eq:error-matrix}, where each entry encodes the cost incurred by reusing a cached residual across timesteps, selecting an optimal cache reuse policy amounts to minimizing the cumulative propagation error under a fixed refresh budget.

Recall that a cache reuse policy can be characterized by a set of $K$ refresh timesteps, which partition the denoising trajectory into $K$ segments.
Within each segment, intermediate timesteps reuse the most recent cached residual and accumulate the corresponding propagation costs.
Consequently, the total error induced by a policy is the sum of propagation costs over all segments, which naturally leads to a shortest-path formulation.

To solve this problem efficiently, we adopt a dynamic programming approach.
We define the DP state $dp[k,i]$ as the minimum accumulated propagation error when the $k$-th cache refresh occurs at timestep $i$.
Transitioning from a previous refresh point $j < i$ to $i$ incurs an additional cost given by the propagated error $\mathcal{E}_{j,i}$, which aggregates reuse costs between these two refresh points.
The objective is then to find $K$ refresh points that minimize the total accumulated error while satisfying the constraint $\|m\|_0 = K$.

Algorithm~\ref{alg:dp-mask} summarizes the resulting dynamic programming procedure, which runs in $O(KN^2)$ time.
Since the number of denoising steps $N$ is typically fewer than $100$ in practice, the computational overhead of this optimization is totally negligible.

\subsection{Outer Optimization via Bayesian Optimization}
\label{app:bayesian-optm}

\begin{algorithm}[h]
   \caption{Bilevel Optimization for Cache Policy Search}
   \label{alg:overall_algorithm_app}
\begin{algorithmic}[1]
\fontsize{9.4pt}{11pt}\selectfont
   \REQUIRE Initial observations $\mathcal{D}_0 = \{(\bm{s}_i, y_i)\}_{i=1}^m$, exploration parameter $\kappa_n$, total optimization step $T$
   % \STATE {\bfseries Output:} Optimal coefficients $\bm{s}^\star$ and corresponding policy $\bm{m}^\star$.
   
   \FOR{$n = 0$ {\bfseries to} $T-1$}
      % \STATE // \textit{Outer Level: Bayesian Optimization}
      \STATE Update GP surrogate using $\mathcal{D}_n$
      % \STATE Solve for candidate:\\
      \STATE $\bm{s}_{n+1} = \underset{\bm s}{\arg \min} \, \mu(\bm{s}) - \kappa_n \sigma(\bm{s}) $
      
      % \STATE // \textit{Inner Level: Solve for Optimal Policy $\bm{m}^\star$}
      % \STATE Construct error matrix $E \in \mathbb{R}^{N \times N}$ with $E_{i,j} = \| \bm{\delta}_{t_i} - \bm{\delta}_{t_j} \|_1 e^{w(t_j; \bm{s}_{n+1})}$.
      % \STATE Find $\bm{m}^\star(\bm{s}_{n+1})$ by solving the Constrained Shortest Path via DP:\\
      % $\bm{m}^\star(\bm{s}_{n+1}) = \arg \min_{\bm{m}} \sum \text{costs in } E \text{ s.t. } K \text{ steps}$.
      \STATE $\bm m^\star(\bm s_{n+1}) = \underset{\bm m \in \mathcal{C}}{\arg \min} \sum_{i=1}^{N-1} \left\| \bm \epsilon^c_{t_{i+1}} \right\|_1 e^{w(t_{i+1}; \bm s_{n+1})}$
      
      % \STATE // \textit{Evaluation}
      % \STATE Execute full denoising and compute loss:\\
      \STATE $y_{n+1} = \mathcal{L}(\bm{m}^\star(\bm{s}_{n+1}))$
      
      % \STATE // \textit{Update}
      \STATE $\mathcal{D}_{n+1} = \mathcal{D}_n \cup \{ (\bm{s}_{n+1}, y_{n+1}) \}$
   \ENDFOR
   \STATE Update GP surrogate using $\mathcal{D}_T$
   \STATE $\bm s^\star = \underset{\bm s}{\arg \min} \, \mu(\bm s)$
   \STATE $\bm m^\star = \underset{\bm m \in \mathcal{C}}{\arg \min} \sum_{i=1}^{N-1} \left\| \bm \epsilon^c_{t_{i+1}} \right\|_1 e^{w(t_{i+1}; \bm s^\star)}$
\end{algorithmic}
\end{algorithm}

The comprehensive optimization procedure for GCache is summarized in Algorithm~\ref{alg:overall_algorithm_app}.
At each iteration $n+1$, we maintain a Gaussian Process (GP) surrogate to model the objective landscape over the parameter space $\bm s$.
The next candidate is identified by minimizing the Lower Confidence Bound (LCB) acquisition function, which strategically balances exploration and exploitation:
\begin{equation}
    \bm s_{n+1} = \underset{\bm s}{\arg \min} \, A_{\text{LCB}}(\bm s; \mathcal{D}_n).
\end{equation}
Conditioned on the proposed parameters $\bm s_{n+1}$, the inner optimization level formulates the weighted error objective and identifies the optimal cache reuse policy $\bm{m}^\star(\bm{s}_{n+1})$.
This is achieved by minimizing the accumulated propagation error subject to the budget constraint $\bm m \in \mathcal{C}$(\cref{eq:optim_theoretical_ub}), leveraging the efficient dynamic programming solver detailed in Appendix~\ref{app:dp_search}.
Subsequently, we evaluate the empirical generation loss $y_{n+1} = f_\text{eval}(\bm s_{n+1}) = \mathcal{L}(\bm m^\star(\bm s_{n+1}))$ to augment the observation set $\mathcal{D}_{n+1} = \mathcal{D} \cup \{ (\bm s_{n+1}, y_{n+1}) \}$ and refine the GP surrogate.
This iterative cycle continues until the evaluation budget is exhausted, ultimately yielding an optimized propagation exponent that aligns the analytical error bound with empirical generation quality.

\subsection{Optimization Objective}
\label{app:opt_objective}

In the outer objective of the bilevel optimization framework (\cref{eq:bilevel_outer}), we define the loss function $\mathcal{L}$ to evaluate the empirical generation quality.
Specifically, we utilize a joint objective that combines the Learned Perceptual Image Patch Similarity (LPIPS)~\cite{DBLP:conf/cvpr18:LPIPS} and the Structural Similarity Index Measure (SSIM):
\begin{equation}
\mathcal{L} = \mathcal{L}_{\text{LPIPS}} + (1 - \mathcal{L}_{\text{SSIM}}).
\end{equation}
The rationale for this dual-component objective is twofold:
\begin{itemize}
    [leftmargin=1.2em, topsep=-0.5pt, itemsep=3pt, parsep=3pt]
    \item \textbf{Semantic and Perceptual Fidelity}:
    LPIPS leverages deep features from pre-trained networks (e.g., VGG or AlexNet) to quantify high-level perceptual similarity.
    In the context of cache-based acceleration, aggressive reuse can lead to ``semantic drift'' or the loss of fine-grained textures that traditional metrics often fail to capture.
    LPIPS ensures that the optimized caching policy preserves the overall visual intent and semantic coherence of the generated samples.
    \item \textbf{Structural and Pixel-level Integrity}:
    While LPIPS is effective for high-level perception, it can occasionally overlook local structural misalignments.
    By incorporating $1 - \mathcal{L}_\text{SSIM}$, we explicitly penalize deviations in local luminance, contrast, and spatial structure.
    This term acts as a regularizer to ensure that the accelerated trajectory remains spatially faithful to the unperturbed baseline, preventing artifacts such as ghosting or structural blurring that may arise during the accumulation of propagation errors.
\end{itemize}
To empirically justify the rationale behind our hybrid optimization objective, we conduct an ablation study comparing various loss formulations, with detailed results provided in~\cref{main:ablation_study}.
The findings corroborate that the combination of LPIPS and SSIM achieves a superior balance between perceptual fidelity and structural consistency.

\section{Proofs}
\label{app:proofs}

\subsection{Proof for \cref{theorem:error-bound_single-step_PF-ODE}}

\begin{lemma}[Single-Step Propagation Error]
    \label{lemma: single-step update error}
    Under Assumption~\ref{assump:regularity}, let $\bm{x}_{t}$ denote the state at time $t$ along the ground-truth ODE trajectory, and let $\hat{\bm{x}}_{t}$ be the state generated by the Euler solver using the learned velocity $\bm{v}_\theta$.
    For a backward step size $h_{t_{i+1}} = t_{i+1} - t_i > 0$, the error at time $t_i$ is bounded by:
    \begin{equation}
        \left\| \bm x_{t_i} - \hat{\bm x}_{t_i} \right\|_1
        \le \left(1 + h_{t_{i+1}} L_{t_{i+1}}\right) \left\| \bm x_{t_{i+1}} - \hat{\bm x}_{t_{i+1}} \right\|_1
        + h_{t_{i+1}} \eta
        + \frac{h_{t_{i+1}}^2}{2} M.
    \end{equation}
\end{lemma}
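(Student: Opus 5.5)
We follow the standard one-step Euler error analysis: write the exact trajectory increment as an integral, compare it with the Euler update, and split the difference into three pieces. These are a Lipschitz term, an approximation term, and a quadrature (discretization) term.

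\textbf{Step 1 (exact increment).} Along the ground-truth ODE, integrating backward from $t_{i+1}$ to $t_i$ gives
\begin{equation*}
\bm x_{t_i} = \bm x_{t_{i+1}} - \int_{t_i}^{t_{i+1}} \bm v(\bm x_\tau,\tau)\,\mathrm d\tau .
\end{equation*}
The Euler step with the learned field is $\hat{\bm x}_{t_i} = \hat{\bm x}_{t_{i+1}} - h_{t_{i+1}} \bm v_\theta(\hat{\bm x}_{t_{i+1}}, t_{i+1})$. Subtracting the two yields
\begin{equation*}
\bm x_{t_i} - \hat{\bm x}_{t_i} = (\bm x_{t_{i+1}} - \hat{\bm x}_{t_{i+1}}) - \int_{t_i}^{t_{i+1}} \big[\bm v(\bm x_\tau,\tau) - \bm v_\theta(\hat{\bm x}_{t_{i+1}}, t_{i+1})\big]\mathrm d\tau .
\end{equation*}

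\textbf{Step 2 (three-way split).} Inside the integral we add and subtract $\bm v(\bm x_{t_{i+1}}, t_{i+1})$ and $\bm v_\theta(\bm x_{t_{i+1}}, t_{i+1})$. This writes the integrand as the sum of three terms:
\begin{itemize}
\item $\bm v(\bm x_\tau,\tau) - \bm v(\bm x_{t_{i+1}},t_{i+1})$, the time variation along the exact trajectory;
\item $\bm v(\bm x_{t_{i+1}},t_{i+1}) - \bm v_\theta(\bm x_{t_{i+1}},t_{i+1})$, the approximation error;
\item $\bm v_\theta(\bm x_{t_{i+1}},t_{i+1}) - \bm v_\theta(\hat{\bm x}_{t_{i+1}},t_{i+1})$, the Lipschitz term.
\end{itemize}
We then apply the triangle inequality in $\|\cdot\|_1$ and bound each term separately.

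\textbf{Step 3 (bounding the three terms).}
\begin{itemize}
\item The Lipschitz term is constant in $\tau$. By Assumption~\ref{assump:regularity}(ii) it contributes at most $h_{t_{i+1}} L_{t_{i+1}}\|\bm x_{t_{i+1}} - \hat{\bm x}_{t_{i+1}}\|_1$. Combined with the carried-over difference, this gives the factor $(1+h_{t_{i+1}}L_{t_{i+1}})$.
\item The approximation term contributes at most $h_{t_{i+1}}\eta$ by Assumption~\ref{assump:regularity}(i).
\item For the time-variation term, apply the fundamental theorem of calculus to $s\mapsto \bm v(\bm x_s,s)$ along the exact trajectory. Assumption~\ref{assump:regularity}(iii) then gives $\|\bm v(\bm x_\tau,\tau) - \bm v(\bm x_{t_{i+1}},t_{i+1})\|_1 \le M (t_{i+1}-\tau)$. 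Integrating over $[t_i,t_{i+1}]$ gives $M h_{t_{i+1}}^2/2$.
\end{itemize}
Summing the three contributions gives the claimed inequality.

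\textbf{Main obstacle.} The delicate point is the discretization term. We must justify that the total derivative bound in (iii) holds along the specific exact trajectory $\bm x_\tau$, and that the norm of an integral is bounded by the integral of the norm. Both are routine, given absolute continuity of $\tau\mapsto \bm v(\bm x_\tau,\tau)$, which the assumption implicitly provides. The remaining care is bookkeeping: the Lipschitz constant must be evaluated at $t_{i+1}$ for the learned field $\bm v_\theta$, not for $\bm v$. This is why the split in Step 2 routes the state mismatch through $\bm v_\theta$ and the model mismatch through the point $\bm x_{t_{i+1}}$ on the exact trajectory.
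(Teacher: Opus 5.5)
Your proof is correct and follows essentially the same route as the paper: the same three-way split into the approximation error at the exact state $\bm x_{t_{i+1}}$, the Lipschitz term routed through $\bm v_\theta$, and a second-order discretization remainder bounded by $M$. The only difference is the form of the remainder: you write it as $\int_{t_i}^{t_{i+1}} [\bm v(\bm x_\tau,\tau) - \bm v(\bm x_{t_{i+1}},t_{i+1})]\,\mathrm{d}\tau$, whereas the paper uses a Lagrange-form Taylor remainder at a single point $\xi_{i+1}$; both give the same $\frac{h_{t_{i+1}}^2}{2}M$ bound, and yours is the more careful version, since a single-point mean-value remainder does not hold for vector-valued functions in general.
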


\begin{proof}
    By Taylor's theorem, the ground-truth state $\bm x_{t_i}$ can be expanded about $t_{i+1}$ by:
    \begin{align}
        \bm x_{t_i}
        & = \bm x_{t_{i+1}}
        - (t_{i+1} - t_i) \left. \frac{\mathrm{d}\bm x_t}{\mathrm{d}t} \right|_{t_{i+1}}
        + \left.\frac{(t_{i+1} - t_i)^2}{2} \frac{\mathrm{d}^2\bm x_t}{\mathrm{d}t^2} \right|_{\xi_{i+1}} \notag \\
        & = \bm x_{t_{i+1}}
        - h_{t_{i+1}} \bm v(\bm x_{t_{i+1}}, t_{i+1})
        + \frac{h_{t_{i+1}}^2}{2} \left.\frac{\mathrm{d}}{\mathrm{d}t} \bm v(\bm x_t, t) \right|_{\xi_{i+1}},
    \end{align}
    where $\xi_{i+1} \in [t_i, t_{i+1}]$ is an intermediate time point arising from the Lagrange form of the remainder, and we use the relation $\frac{\mathrm{d}\bm x_t}{\mathrm{d}t} = \bm v(\bm x_t, t)$.

    The Euler update for $\hat{\bm x}_{t_i}$ is defined as:
    \begin{equation}
        \hat{\bm x}_{t_i}
        = \hat{\bm x}_{t_{i+1}}
        - h_{t_{i+1}} \bm v_\theta(\hat{\bm x}_{t_{i+1}}, t_{i+1}).
    \end{equation}
    Subtracting the discrete update from the continuous Taylor expansion, we obtain:
    \begin{align}
        \bm x_{t_i} - \hat{\bm x}_{t_i}
        = \bm x_{t_{i+1}} - \hat{\bm x}_{t_{i+1}}
        - h_{t_{i+1}} \left( \bm v(\bm x_{t_{i+1}}, t_{i+1}) - \bm v_\theta(\hat{\bm x}_{t_{i+1}}, t_{i+1}) \right)
        + \frac{h_{t_{i+1}}^2}{2} \left. \frac{\mathrm{d}}{\mathrm{d}t} \bm v(\bm x_t, t) \right|_{\xi_{i+1}}.
    \end{align}
    To bound the difference in velocity terms, we decompose it into a propagation component and an approximation component:
    \begin{align}
        \bm v(\bm x_{t_{i+1}}, t_{i+1}) - \bm v_\theta(\hat{\bm x}_{t_{i+1}}, t_{i+1})
        & = \underbrace{\bm v(\bm x_{t_{i+1}}, t_{i+1}) - \bm v_\theta(\bm x_{t_{i+1}}, t_{i+1})}_{\text{Approximation Error}} \notag \\
        & \quad \ + \underbrace{\bm v_\theta(\bm x_{t_{i+1}}, t_{i+1}) - \bm v_\theta(\hat{\bm x}_{t_{i+1}}, t_{i+1})}_{\text{Propagated Error}}.
    \end{align}
    Taking the $\ell_1$-norm on both sides and applying the triangle inequality yields:
    \begin{align}
        \left\| \bm x_{t_i} - \hat{\bm x}_{t_i} \right\|_1
        & \le \left\| \bm x_{t_{i+1}} - \hat{\bm x}_{t_{i+1}} \right\|_1 \notag \\
        & \quad + h_{t_{i+1}} \left\| \bm v(\bm x_{t_{i+1}}, t_{i+1}) - \bm v_\theta(\bm x_{t_{i+1}}, t_{i+1}) \right\|_1 \notag \\
        & \quad + h_{t_{i+1}} \left\| \bm v_\theta( \bm x_{t_{i+1}}, t_{i+1}) - \bm v_\theta (\hat{\bm x}_{t_{i+1}}, t_{i+1}) \right\|_1 \notag \\
        & \quad + \frac{h_{t_{i+1}}^2}{2} \left\| \left. \frac{\mathrm{d}}{\mathrm{d}t} \bm v (\bm x_t, t) \right|_{\xi_{i+1}} \right\|_1.
    \end{align}
    Applying the Lipschitz continuity of $\bm{v}$ (with constant $L_{t_{i+1}}$), the approximation bound $\eta$, and the total derivative bound $M$ from Assumption~\ref{assump:regularity}, we have:
    \begin{align}
        \left\| \bm x_{t_i} - \hat{\bm x}_{t_i} \right\|_1
        & \le \left\| \bm x_{t_{i+1}} - \hat{\bm x}_{t_{i+1}} \right\|_1
        + h_{t_{i+1}} L_{t_{i+1}} \left\| \bm x_{t_{i+1}} - \hat{\bm x}_{t_{i+1}} \right\|_1
        + h_{t_{i+1}} \eta
        + \frac{h_{t_{i+1}}^2}{2} M \notag \\
        & = \left(1 + h_{t_{i+1}} L_{t_{i+1}}\right) \left\| \bm x_{t_{i+1}} - \hat{\bm x}_{t_{i+1}} \right\|_1
        + h_{t_{i+1}} \eta
        + \frac{h_{t_{i+1}}^2}{2} M,
    \end{align}
    which completes the proof.
\end{proof}

\begin{lemma}[Single-Step Propagation Error under Cache Reuse]
    \label{lemma: single-step update error cache reuse}
    Under Assumption~\ref{assump:regularity}, let $\bm{x}_{t}$ denote the state at time $t$ along the ground-truth ODE trajectory, and let $\hat{\bm{x}}_{t}$ be the state generated by the Euler solver.
    Suppose that at time $t_{i+1}$, we reuse a cached residual $\bm{\delta}_{t_{i+1}}^c$ such that the reuse error is $\bm{\epsilon}_{t_{i+1}}^c = \bm{\delta}_{t_{i+1}} - \bm{\delta}^c_{t_{i+1}}$.
    Let $\hat{\bm{x}}_{t_i}^c$ be the state at $t_i$ produced by an Euler step from $\hat{\bm{x}}_{t_{i+1}}$ using the cached velocity $\bm{v}_\theta^c$.
    For a backward step size $h_{t_{i+1}} = t_{i+1} - t_i > 0$, the error at time $t_i$ is bounded by:
    \begin{equation}
        \left\| \bm x_{t_i} - \hat{\bm x}^c_{t_i} \right\|_1
        \le \left(1 + h_{t_{i+1}} L_{t_{i+1}}\right) \left\| \bm x_{t_{i+1}} - \hat{\bm x}_{t_{i+1}} \right\|_1
        + h_{t_{i+1}} \eta
        + h_{t_{i+1}} L_\text{out} \left\| \bm \epsilon^c_{t_{i+1}} \right\|_1
        + \frac{h_{t_{i+1}}^2}{2} M.
    \end{equation}
\end{lemma}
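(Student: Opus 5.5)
The plan is to follow the proof of Lemma~\ref{lemma: single-step update error} step by step, inserting one extra intermediate term that isolates the cache reuse error. As before, Taylor's theorem with the Lagrange remainder gives
\[
\bm x_{t_i} = \bm x_{t_{i+1}} - h_{t_{i+1}} \bm v(\bm x_{t_{i+1}}, t_{i+1}) + \tfrac{h_{t_{i+1}}^2}{2} \tfrac{\mathrm{d}}{\mathrm{d}t}\bm v(\bm x_t,t)\big|_{\xi_{i+1}}.
\]
The cached Euler update is
\[
\hat{\bm x}^c_{t_i} = \hat{\bm x}_{t_{i+1}} - h_{t_{i+1}} \bm v^c_\theta(\hat{\bm x}_{t_{i+1}}, t_{i+1}).
\]
The first step is to make $\bm v^c_\theta$ precise. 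Write $\bm h := f_{\text{in}}(\hat{\bm x}_{t_{i+1}})$. The exact velocity is $\bm v_\theta = f_{\text{out}}(\bm h + \bm\delta_{t_{i+1}})$, and the cached one is $\bm v^c_\theta = f_{\text{out}}(\bm h + \bm\delta^c_{t_{i+1}})$. Here $\bm\delta_{t_{i+1}}$ and $\bm\delta^c_{t_{i+1}}$ are understood as the residuals at the state $\hat{\bm x}_{t_{i+1}}$, which is the convention implicit in the statement.

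Subtracting the two updates leaves a velocity difference, which I will decompose into three pieces:
\begin{align*}
\bm v(\bm x_{t_{i+1}}, t_{i+1}) - \bm v^c_\theta(\hat{\bm x}_{t_{i+1}}, t_{i+1})
&= \bigl[\bm v(\bm x_{t_{i+1}}, t_{i+1}) - \bm v_\theta(\bm x_{t_{i+1}}, t_{i+1})\bigr] \\
&\quad + \bigl[\bm v_\theta(\bm x_{t_{i+1}}, t_{i+1}) - \bm v_\theta(\hat{\bm x}_{t_{i+1}}, t_{i+1})\bigr] \\
&\quad + \bigl[\bm v_\theta(\hat{\bm x}_{t_{i+1}}, t_{i+1}) - \bm v^c_\theta(\hat{\bm x}_{t_{i+1}}, t_{i+1})\bigr].
\end{align*}
These are, in order, the approximation error, the propagated error, and a new cache term. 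I will then take the $\ell_1$-norm and apply the triangle inequality, and bound each term using Assumption~\ref{assump:regularity}:
\begin{itemize}
\item the approximation term by $\eta$, using (i);
\item the propagated term by $L_{t_{i+1}}\|\bm x_{t_{i+1}} - \hat{\bm x}_{t_{i+1}}\|_1$, using (ii);
\item the remainder term by $M$, using (iii).
\end{itemize}

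The cache term is the only new ingredient. By the $L_{\text{out}}$-Lipschitz continuity of $f_{\text{out}}$ in (ii),
\[
\|f_{\text{out}}(\bm h + \bm\delta_{t_{i+1}}) - f_{\text{out}}(\bm h + \bm\delta^c_{t_{i+1}})\|_1 \le L_{\text{out}}\|\bm\delta_{t_{i+1}} - \bm\delta^c_{t_{i+1}}\|_1 = L_{\text{out}}\|\bm\epsilon^c_{t_{i+1}}\|_1.
\]
This relies on the shared $\bm h$ cancelling inside $f_{\text{out}}$. Multiplying each velocity bound by $h_{t_{i+1}}$ and collecting terms produces exactly the claimed inequality.

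The main obstacle is that this last bound holds only if the Lipschitz property of $f_{\text{out}}$ is measured in the $\ell_1$-norm, with the residual and the embedding $\bm h$ living in the same space. It also requires both velocities to be evaluated at the same input $\hat{\bm x}_{t_{i+1}}$, so that nothing beyond $\bm\epsilon^c_{t_{i+1}}$ differs inside $f_{\text{out}}$. I would state these conventions explicitly at the start, and then the remaining steps are routine.
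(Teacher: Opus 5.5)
Your proposal is correct and matches the paper's proof essentially line by line: the same Taylor expansion about $t_{i+1}$, the same three-way split into approximation, propagated, and cache-reuse terms, and the same $L_{\text{out}}$-Lipschitz bound on $f_{\text{out}}$, where the shared embedding cancels inside $f_{\text{out}}$. The conventions you make explicit (residuals evaluated at $\hat{\bm x}_{t_{i+1}}$, and $f_{\text{out}}$ Lipschitz in $\ell_1$) are the ones the paper uses implicitly; one technicality you share with the paper is that a single Lagrange point $\xi_{i+1}$ is not valid for vector-valued functions, but the integral form of the remainder gives the same $\tfrac{h_{t_{i+1}}^2}{2}M$ bound.
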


\begin{proof}
    By Taylor's theorem, the ground-truth state $\bm x_{t_i}$ can be expanded about $t_{i+1}$ by:
    \begin{align}
        \bm x_{t_i}
        = \bm x_{t_{i+1}}
        - h_{t_{i+1}} \bm v(\bm x_{t_{i+1}}, t_{i+1})
        + \frac{h_{t_{i+1}}^2}{2} \left.\frac{\mathrm{d}}{\mathrm{d}t} \bm v(\bm x_t, t) \right|_{\xi_{i+1}},
    \end{align}
    where $\xi_{i+1} \in [t_i, t_{i+1}]$ is an intermediate time point arising from the Lagrange form of the remainder, and we use the relation $\frac{\mathrm{d}\bm x_t}{\mathrm{d}t} = \bm v(\bm x_t, t)$.

    Let $\bm v^c_\theta(\bm x_t, t)$ denote the learned velocity computed using the cached residual.
    The Euler update with cache reuse for $\hat{\bm x}_{t_i}$ is defined as:
    \begin{equation}
        \hat{\bm x}^c_{t_i}
        = \hat{\bm x}_{t_{i+1}}
        - h_{t_{i+1}}  \bm v^c_\theta(\hat{\bm x}_{t_{i+1}}, t_{i+1}).
    \end{equation}
    Subtracting the discrete update from the Taylor expansion, we have:
    \begin{align}
        \bm x_{t_i} - \hat{\bm x}^c_{t_i}
        = \bm x_{t_{i+1}} - \hat{\bm x}_{t_{i+1}}
        - h_{t_{i+1}} \left( \bm v(\bm x_{t_{i+1}}, t_{i+1}) - \bm v^c_\theta(\hat{\bm x}_{t_{i+1}}, t_{i+1}) \right)
        + \frac{h_{t_{i+1}}^2}{2} \left. \frac{\mathrm{d}}{\mathrm{d}t} \bm v(\bm x_t, t) \right|_{\xi_{i+1}}.
    \end{align}
    We decompose the velocity difference into three components to isolate the cache reuse error:
    \begin{align}
        \bm v(\bm x_{t_{i+1}}, t_{i+1}) - \bm v^c_\theta(\hat{\bm x}_{t_{i+1}}, t_{i+1})
        & = \underbrace{\bm v(\bm x_{t_{i+1}}, t_{i+1}) - \bm v_\theta(\bm x_{t_{i+1}}, t_{i+1})}_{\text{Approximation Error}} \notag \\
        & \quad + \underbrace{\bm v_\theta(\bm x_{t_{i+1}}, t_{i+1}) - \bm v_\theta(\hat{\bm x}_{t_{i+1}}, t_{i+1})}_{\text{Propagated Error}} \notag \\
        & \quad + \underbrace{\bm v_\theta(\hat{\bm x}_{t_{i+1}}, t_{i+1}) - \bm v^c_\theta(\hat{\bm x}_{t_{i+1}}, t_{i+1})}_{\text{Cache Reuse Error}}.
    \end{align}
    Taking the $\ell_1$-norm on both sides and applying the triangle inequality yields:
    \begin{align}
        \left\| \bm x_{t_i} - \hat{\bm x}^c_{t_i} \right\|_1
        & \le \left\| \bm x_{t_{i+1}} - \hat{\bm x}_{t_{i+1}} \right\|_1 \notag \\
        & \quad + h_{t_{i+1}} \left\| \bm v(\bm x_{t_{i+1}}, t_{i+1}) - \bm v_\theta(\bm x_{t_{i+1}}, t_{i+1}) \right\|_1 \notag \\
        & \quad + h_{t_{i+1}} \left\| \bm v_\theta(\bm x_{t_{i+1}}, t_{i+1}) - \bm v_\theta(\hat{\bm x}_{t_{i+1}}, t_{i+1}) \right\|_1 \notag \\
        & \quad + h_{t_{i+1}} \left\| \bm v_\theta(\hat{\bm x}_{t_{i+1}}, t_{i+1}) - \bm v^c_\theta(\hat{\bm x}_{t_{i+1}}, t_{i+1}) \right\|_1 \notag \\
        & \quad + \frac{h_{t_{i+1}}^2}{2} \left\| \left. \frac{\mathrm{d}}{\mathrm{d}t} \bm v (\bm x_t, t) \right|_{\xi_{i+1}} \right\|_1.
    \end{align}
    Given the model architecture $\bm{v}_\theta(\bm{x}_t, t) = f_\text{out} (\bm{h}_t + \bm{\delta}_t)$, where $\bm{h}_t$ is the feature and $\bm{\delta}_t$ is the residual, the cache reuse error term is bounded by the Lipschitz constant $L_\text{out}$:
    % Recalling the learned velocity definition $\bm v_\theta(\bm x_t, t) = f_\text{out} (\bm h_t + \bm \delta_t)$ (where $\bm h_t$ is the input feature and $\bm \delta_t$ is the residual), and noting that the cached velocity uses $\bm \delta^c_{t_{i+1}}$, the cache reuse error term becomes:
    \begin{equation}
        \left\| f_\text{out} (\bm h_{t_{i+1}} + \bm \delta_{t_{i+1}}) - f_\text{out} (\bm h_{t_{i+1}} + \bm \delta^c_{t_{i+1}}) \right\|_1
        \le L_\text{out} \left\| \bm \delta_{t_{i+1}} - \bm \delta^c_{t_{i+1}} \right\|_1
        = L_\text{out} \left\| \bm \epsilon^c_{t_{i+1}} \right\|_1.
    \end{equation}
    Finally, substituting the Lipschitz constant $L_{t_{i+1}}$, the approximation bound $\eta$, and the total derivative bound $M$ from Assumption~\ref{assump:regularity}, we obtain:
    % Finally, applying the Lipschitz continuity of $\bm{v}$ (with constant $L_{t_{i+1}}$), the approximation bound $\eta$, and the total derivative bound $M$ from Assumption~\ref{assump:regularity}, we obtain:
    \begin{align}
        \left\| \bm x_{t_i} - \hat{\bm x}_{t_i} \right\|_1
        & \le \left\| \bm x_{t_{i+1}} - \hat{\bm x}_{t_{i+1}} \right\|_1
        + h_{t_{i+1}} L_{t_{i+1}} \left\| \bm x_{t_{i+1}} - \hat{\bm x}_{t_{i+1}} \right\|_1
        + h_{t_{i+1}} \eta \notag \\
        & \quad \ + h_{t_{i+1}} L_\text{out} \left\| \bm \epsilon^c_{t_{i+1}} \right\|_1
        + \frac{h_{t_{i+1}}^2}{2} M \notag \\
        & = \left(1 + h_{t_{i+1}} L_{t_{i+1}}\right) \left\| \bm x_{t_{i+1}} - \hat{\bm x}_{t_{i+1}} \right\|_1
        + h_{t_{i+1}} \eta \notag \\
        & \quad \ + h_{t_{i+1}} L_\text{out} \left\| \bm \epsilon^c_{t_{i+1}} \right\|_1
        + \frac{h_{t_{i+1}}^2}{2} M.
    \end{align}
    This completes the proof.
\end{proof}

\begin{manualtheorem}{\ref{theorem:error-bound_single-step_PF-ODE}}[Global Error Bound under Cache Reuse]
    Under Assumption~\ref{assump:regularity}, let $\bm{x}_{t}$ be the ground-truth ODE state at time $t$, and let $\hat{\bm{x}}^c_t$ be the state generated by the Euler solver using the learned velocity $\bm{v}_\theta$, incorporating a single cache reuse event.
    Specifically, suppose that at step $t_{i+1} \to t_i$, we reuse a cached residual $\bm{\delta}_{t_{i+1}}^c$ with error $\bm{\epsilon}_{t_{i+1}}^c = \bm{\delta}_{t_{i+1}} - \bm{\delta}^c_{t_{i+1}}$.
    Given a sequence of timesteps $t_{N}, t_{N-1}, \dots, t_1$ with uniform step size $h$, the global error at the final timestep $t_1$ is bounded by:
    \begin{align}
        \left\| \bm{x}_{t_1} - \hat{\bm x}^c_{t_1} \right\|_1
        & \le \underbrace{ h L_\text{out} \left\| \bm \epsilon^c_{t_{i+1}} \right\|_1 e^{(i-1)hL} }_{\text{Cache Reuse Error}} 
        + \underbrace{ \left( \frac{\eta}{L} + \frac{h M}{2 L} \right) \left( e^{(N-1) h L} - 1\right) }_{ \text{Approximation \& Discretization Error} }.
    \end{align}
\end{manualtheorem}

\begin{proof}
    We analyze the error accumulation by recursively applying the single-step error bounds.
    From Lemma~\ref{lemma: single-step update error}, the error at the final timestep $t_1$ can be expressed in terms of the error at an intermediate step $t_i$:
    \begin{equation}
        \left\| \bm{x}_{t_1} - \hat{\bm x}^c_{t_1} \right\|_1
        \le \left\| \bm x_{t_{i}} - \hat{\bm x}^c_{t_{i}} \right\|_1 (1 + h L)^{i-1}
        % + h \, \eta \sum_{n=1}^{i-1} (1 + h L)^{n-1}
        + \left( h \, \eta + \frac{h^2}{2} M \right) \sum_{n=1}^{i-1}  (1 + h L)^{n-1}.
    \end{equation}
    % Applying Lemma~\ref{lemma: single-step update error} on $\| \bm x_{t_i} - \hat{\bm x}_{t_i} \|$ recursively from $t_{i}$ to $t_1$ yields
    % \begin{equation}
    %     \left\| \bm{x}_{t_1} - \hat{\bm x}_{t_1} \right\|_1
    %     \le \left\| \bm x_{t_{i}} - \hat{\bm x}_{t_{i}} \right\|_1 \prod_{j=1}^{i-1} \left( 1 + h_{t_{j+1}} L_{t_{j+1}} \right) + \sum_{j=1}^{i-1} \frac{h^2_{j+1}}{2} M \prod_{k=j}^{i-2} \left( 1 + h_{t_{k+1}} L_{t_{k+1}} \right).
    % \end{equation}
    At the specific transition $t_{i+1} \to t_i$ where cache reuse occurs, Lemma~\ref{lemma: single-step update error cache reuse} provides the local bound:
    \begin{equation}
        \left\| \bm x_{t_i} - \hat{\bm x}^c_{t_i} \right\|_1
        \le (1 + h L) \left\| \bm x_{t_{i+1}} - \hat{\bm x}_{t_{i+1}} \right\|_1
        + h \, \eta
        + h L_\text{out} \left\| \bm \epsilon^c_{t_{i+1}} \right\|_1
        + \frac{h^2}{2} M.
    \end{equation}
    % Applying Lemma~\ref{lemma: single-step update error} on $\| \bm x_{t_N} - \hat{\bm x}_{t_N} \|$ recursively from $t_{N}$ to $t_{i+1}$ yields
    % \begin{equation}
    %     \left\| \bm{x}_{t_{i+1}} - \hat{\bm x}_{t_{i+1}} \right\|_1
    %     \le \left\| \bm x_{t_{N+1}} - \hat{\bm x}_{t_{N+1}} \right\|_1 \prod_{j=i+1}^{N} \left( 1 + h_{t_{j+1}} L_{t_{j+1}} \right) + \sum_{j=i+1}^{N} \frac{h^2_{j+1}}{2} M \prod_{k=j}^{N-1} \left( 1 + h_{t_{k+1}} L_{t_{k+1}} \right).
    % \end{equation}
    % From Lemma~\ref{lemma: single-step update error cache reuse}, we have
    % \begin{equation}
    %     \left\| \bm x_{t_i} - \hat{\bm x}_{t_i} \right\|_1
    %     \le \left(1 + h_{t_{i+1}} L_{t_{i+1}}\right) \left\| \bm x_{t_{i+1}} - \hat{\bm x}_{t_{i+1}} \right\|_1
    %     + h_{t_{i+1}} t_{i+1} L_\text{out} \left\| \bm \epsilon^c_{t_{i+1}} \right\|_1
    %     + \frac{h_{t_{i+1}}^2}{2} M.
    % \end{equation}
    Continuing the recursion for the remaining steps from $t_{i+1}$ back to the initial state $t_{N}$, we observe that for all other steps $n \in \{i+1, \dots, N-1\}$, the standard local error bound from Lemma~\ref{lemma: single-step update error} applies.
    Combining these, the global error at $t_1$ becomes:
    \begin{align}
        \left\| \bm{x}_{t_1} - \hat{\bm x}^c_{t_1} \right\|_1
        \le \left\| \bm x_{t_{N}} - \hat{\bm x}_{t_{N}} \right\|_1 \left( 1 + h L \right)^{N-1}
        + h \, L_\text{out} \left\| \bm \epsilon^c_{t_{i+1}} \right\|_1 \left( 1 + h L \right)^{i-1} \notag \\
        + \left( h \, \eta + \frac{h^2}{2} M \right) \sum_{n=1}^{N-1} (1 + h L)^{n-1}.
    \end{align}
    Assuming the solver starts from the ground-truth initial noise, we have $\bm{x}_{t_N} = \hat{\bm{x}}_{t_N}$ and the first term vanishes.
    The remaining terms simplify to:
    \begin{align}
        \left\| \bm{x}_{t_1} - \hat{\bm x}^c_{t_1} \right\|_1
        \le h \, L_\text{out} \left\| \bm \epsilon^c_{t_{i+1}} \right\|_1 \left( 1 + h L \right)^{i-1}
        + \left( h \, \eta + \frac{h^2}{2} M \right) \sum_{n=1}^{N-1} (1 + h L)^{n-1}.
    \end{align}
    Using the geometric series identity $\sum_{j=0}^{k-1} r^j = \frac{r^k - 1}{r - 1}$ with $r = 1 + hL$, the remaining terms simplifies to:
    \begin{align}
        \left\| \bm{x}_{t_1} - \hat{\bm x}^c_{t_1} \right\|_1
        & \le h L_\text{out} \left\| \bm \epsilon^c_{t_{i+1}} \right\|_1 (1 + h L)^{i-1}
        + \left( h \eta + \frac{h^2}{2} M \right) \frac{1}{h L} \left( (1 + h L)^{N-1} - 1 \right) \notag \\
        & = h L_\text{out} \left\| \bm \epsilon^c_{t_{i+1}} \right\|_1 (1 + h L)^{i-1}
        + \left( \frac{\eta}{L} + \frac{h M}{2L} \right) \left( (1 + h L)^{N-1} - 1 \right)
    \end{align}
    Finally, employing the inequality $1 + a \le e^{a}$, we arrive at the final bound:
    \begin{align}
        \left\| \bm{x}_{t_1} - \hat{\bm x}^c_{t_1} \right\|_1
        & \le h L_\text{out} \left\| \bm \epsilon^c_{t_{i+1}} \right\|_1 e^{(i-1)hL}
        + \left( \frac{\eta}{L} + \frac{h M}{2 L} \right) \left( e^{(N-1) h L} - 1\right).
    \end{align}
    This completes the proof.
\end{proof}

\subsection{Proof for \cref{theorem:error-bound_single-step_Euler}}

\begin{manualtheorem}{\ref{theorem:error-bound_single-step_Euler}}[Error Propagation Bound under Single-Step Cache Reuse]
    Under Assumption~\ref{assump:regularity}, let $\hat{\bm{x}}_t$ be the state generated by the Euler solver using the learned velocity $\bm{v}_\theta$, and let $\hat{\bm{x}}_t^c$ be the state incorporating a single cache reuse event at step $t_{i+1} \to t_i$. 
    Suppose the reuse of a cached residual $\bm{\delta}_{t_{i+1}}^c$ introduces an error $\bm{\epsilon}_{t_{i+1}}^c = \bm{\delta}_{t_{i+1}} - \bm{\delta}^c_{t_{i+1}}$.
    Given a sequence of timesteps $t_{N}, t_{N-1}, \dots, t_1$ with step sizes $h_{t_{n+1}} = t_{n+1} - t_n$, the cumulative error at the final timestep $t_1$ is bounded by:
    \begin{equation}
        \left\| \hat{\bm x}_{t_1} - \hat{\bm x}^c_{t_1} \right\|_1
        \le \left\| \bm \epsilon^c_{t_{i+1}} \right\|_1 e^{w_{t_{i+1}}},
    \end{equation}
    where the propagation exponent $w_{t_{i+1}}$ is defined as:
    \begin{equation}
        w_{t_{i+1}}
        = \ln \left( h_{t_{i+1}} L_{\text{out}} \right)
        + \sum_{n=1}^{i-1} h_{t_{n+1}} L_{t_{n+1}}.
    \end{equation}
\end{manualtheorem}

\begin{proof}
    We first bound the error propagation for the steps following the cache reuse ($t_n$ for $n < i$).
    The Euler updates for the standard and cached trajectories are:
    \begin{align}
        \hat{\bm{x}}_{t_n}  
        & = \hat{\bm{x}}_{t_{n+1}} - h_{t_{n+1}} \bm{v}_\theta(\hat{\bm{x}}_{t_{n+1}}, t_{n+1}), \\
        \hat{\bm{x}}^c_{t_n}
        & = \hat{\bm{x}}^c_{t_{n+1}} - h_{t_{n+1}} \bm{v}_\theta(\hat{\bm{x}}^c_{t_{n+1}}, t_{n+1}).
    \end{align}
    Subtracting these updates and applying the triangle inequality:
    \begin{align}
        \left\| \hat{\bm{x}}_{t_n} - \hat{\bm{x}}^c_{t_n} \right\|_1
        & = \left\| \hat{\bm{x}}_{t_{n+1}} - \hat{\bm{x}}_{t_{n+1}}^c - h_{t_{n+1}} \left( \bm{v}_\theta ( \hat{\bm{x}}_{t_{n+1}}, t_{n+1} ) - \bm{v}_\theta ( \hat{\bm{x}}^c_{t_{n+1}}, t_{n+1} ) \right) \right\|_1 \notag \\
        & \le \left\| \hat{\bm{x}}_{t_{n+1}} - \hat{\bm{x}}_{t_{n+1}}^c \right\|_1 + h_{t_{n+1}} \left\| \bm{v}_\theta ( \hat{\bm{x}}_{t_{n+1}}, t_{n+1} ) - \bm{v}_\theta ( \hat{\bm{x}}^c_{t_{n+1}}, t_{n+1} ) \right\|_1 \notag \\
        & \le (1 + h_{t_{n+1}} L_{t_{n+1}}) \left\| \hat{\bm{x}}_{t_{n+1}} - \hat{\bm{x}}_{t_{n+1}}^c \right\|_1,
    \end{align}
    where $L_{t_{n+1}}$ is the Lipschitz constant of $\bm{v}_\theta$.
    Applying this relation recursively from $t_1$ back to $t_i$ yields:
    \begin{equation}
        \left\| \hat{\bm{x}}_{t_1} - \hat{\bm{x}}^c_{t_1} \right\|_1 \le \left\| \hat{\bm{x}}_{t_i} - \hat{\bm{x}}^c_{t_i} \right\|_1 \prod_{n=1}^{i-1} (1 + h_{t_{n+1}} L_{t_{n+1}}).
    \end{equation}
    Next, we bound the local error injected at step $t_i$.
    At this step, both trajectories originate from the same state $\hat{\bm{x}}_{t_{i+1}}$, but the cached trajectory uses the approximate velocity $\bm{v}^c_\theta$.
    The updates are:
    \begin{align}
        \hat{\bm{x}}_{t_i}   &= \hat{\bm{x}}_{t_{i+1}} - h_{t_{i+1}} \bm{v}_\theta(\hat{\bm{x}}_{t_{i+1}}, t_{i+1}), \\
        \hat{\bm{x}}^c_{t_i} &= \hat{\bm{x}}_{t_{i+1}} - h_{t_{i+1}} \bm{v}^c_\theta(\hat{\bm{x}}_{t_{i+1}}, t_{i+1}).
    \end{align}
    The norm of the difference is thus:
    \begin{equation}
        \left\| \hat{\bm{x}}_{t_i} - \hat{\bm{x}}^c_{t_i} \right\|_1 = h_{t_{i+1}} \left\| \bm{v}_\theta(\hat{\bm{x}}_{t_{i+1}}, t_{i+1}) - \bm{v}^c_\theta(\hat{\bm{x}}_{t_{i+1}}, t_{i+1}) \right\|_1.
    \end{equation}
    Using the model architecture $\bm{v}_\theta(\bm{x}, t) = f_\text{out}(\bm{h} + \bm{\delta})$ and the Lipschitz continuity of the output layer $f_\text{out}$, we have:
    \begin{equation}
        \left\| f_\text{out} (\bm{h}_{t_{i+1}} + \bm{\delta}_{t_{i+1}}) - f_\text{out} (\bm{h}_{t_{i+1}} + \bm{\delta}^c_{t_{i+1}}) \right\|_1 \le L_\text{out} \left\| \bm{\epsilon}^c_{t_{i+1}} \right\|_1.
    \end{equation}
    Substituting this into the local error bound yields:
    \begin{equation}
        \left\| \hat{\bm{x}}_{t_i} - \hat{\bm{x}}^c_{t_i} \right\|_1 \le h_{t_{i+1}} L_\text{out} \left\| \bm{\epsilon}^c_{t_{i+1}} \right\|_1.
    \end{equation}
    Combining the propagation and local error bounds, and applying $1 + a \le e^a$:
    \begin{align}
        \left\| \hat{\bm{x}}_{t_1} - \hat{\bm{x}}^c_{t_1} \right\|_1 
        &\le h_{t_{i+1}} L_\text{out} \left\| \bm{\epsilon}^c_{t_{i+1}} \right\|_1 \prod_{n=1}^{i-1} \exp(h_{t_{n+1}} L_{t_{n+1}}) \notag \\
        &= \left\| \bm{\epsilon}^c_{t_{i+1}} \right\|_1 \exp \left( \ln(h_{t_{i+1}} L_\text{out}) + \sum_{n=1}^{i-1} h_{t_{n+1}} L_{t_{n+1}} \right).
    \end{align}
    This completes the proof.
\end{proof}

\subsection{Proof for \cref{theorem:error-bound_multi-step_Euler}}

\begin{manualtheorem}{\ref{theorem:error-bound_multi-step_Euler}}[Error Propagation Bound under Multi-Step Cache Reuse]
    Under Assumption~\ref{assump:regularity}, let $\hat{\bm{x}}_t$ be the state generated by the Euler solver using the learned velocity $\bm{v}_\theta$, and let $\hat{\bm{x}}_t^c$ be the state incorporating multiple cache reuse events at every step $t_{n+1} \to t_n$ for $n \in \{N-1, \dots, 1\}$. 
    Suppose each reuse of a cached residual introduces a local error $\bm{\epsilon}_{t_{n+1}}^c = \bm{\delta}_{t_{n+1}} - \bm{\delta}^c_{t_{n+1}}$.
    Given a sequence of timesteps $t_{N}, t_{N-1}, \dots, t_1$ with step sizes $h_{t_{n+1}} = t_{n+1} - t_n$, the cumulative error at the final timestep $t_1$ is bounded by:
    \begin{equation}
        \left\| \hat{\bm{x}}_{t_1} - \hat{\bm{x}}^c_{t_1} \right\|_1
        \le \sum_{n=1}^{N-1} \left\| \bm \epsilon^c_{t_{n+1}} \right\|_1 e^{w_{t_{n+1}}}.
    \end{equation}
    where the propagation exponent $w_{t_{n+1}}$ is defined as:
    \begin{equation}
        w_{t_{n+1}}
        = \ln \left( h_{t_{n+1}} L_\text{out} \right)
        + \sum_{m=1}^{n-1} h_{t_{m+1}} L_{t_{m+1}}.
    \end{equation}
\end{manualtheorem}

\begin{proof}
    % We first bound the local error injected at step $t_{n+1} \to t_n$ for $n \le N-1$.
    % At these steps, both trajectories originate from the same state $\hat{\bm{x}}^c_{t_{n+1}}$, but the cached trajectory uses the approximate velocity $\bm{v}^c_\theta$.
    % The updates are:
    We begin by analyzing the error transition for a single step $t_{n+1} \to t_n$.
    The standard Euler update and the cached Euler update are given by:
    \begin{align}
        \hat{\bm{x}}_{t_n}
        & = \hat{\bm{x}}_{t_{n+1}} - h_{t_{n+1}} \bm{v}_\theta(\hat{\bm{x}}_{t_{n+1}}, t_{n+1}), \\
        \hat{\bm{x}}^c_{t_n}
        & = \hat{\bm{x}}^c_{t_{n+1}} - h_{t_{n+1}} \bm{v}^c_\theta(\hat{\bm{x}}^c_{t_{n+1}}, t_{n+1}).
    \end{align}
    Subtracting these updates, we have:
    \begin{equation}
        \hat{\bm{x}}_{t_n} - \hat{\bm{x}}^c_{t_n}
        = \hat{\bm{x}}_{t_{n+1}} - \hat{\bm{x}}^c_{t_{n+1}} - h_{t_{n+1}} \left( \bm{v}_\theta(\hat{\bm{x}}_{t_{n+1}}, t_{n+1}) - \bm{v}^c_\theta(\hat{\bm{x}}^c_{t_{n+1}}, t_{n+1}) \right).
    \end{equation}
    We decompose the velocity difference into two components to isolate the cache reuse error:
    \begin{align}
        \bm{v}_\theta(\hat{\bm{x}}_{t_{n+1}}, t_{n+1}) - \bm{v}^c_\theta(\hat{\bm{x}}^c_{t_{n+1}}, t_{n+1})
        & = \underbrace{ \bm{v}_\theta(\hat{\bm{x}}_{t_{n+1}}, t_{n+1}) - \bm{v}_\theta( \hat{\bm{x}}^c_{t_{n+1}}, t_{n+1} )}_{\text{Propagated Error}} \notag \\
        & \quad \ + \underbrace{ \bm{v}_\theta( \hat{\bm{x}}^c_{t_{n+1}}, t_{n+1} ) - \bm{v}^c_\theta(\hat{\bm{x}}^c_{t_{n+1}}, t_{n+1})}_{\text{Cache Reuse Error}}.
    \end{align}
    Applying the Lipschitz constant $L_{t_{n+1}}$ for the velocity $\bm{v}_\theta$ and $L_{\text{out}}$ for the output layer $f_{\text{out}}$ with the triangle inequality, we obtain the recurrence relation:
    \begin{align}
        \left\| \hat{\bm{x}}_{t_n} - \hat{\bm{x}}^c_{t_n} \right\|_1
        & \le \left\| \hat{\bm{x}}_{t_{n+1}} - \hat{\bm{x}}^c_{t_{n+1}} \right\|_1 \notag \\
        & \quad + h_{t_{n+1}} \left\| \bm{v}_\theta( \hat{\bm{x}}_{t_{n+1}}, t_{n+1} ) - \bm{v}_\theta( \hat{\bm{x}}^c_{t_{n+1}}, t_{n+1} ) \right\|_1 \notag \\
        & \quad + h_{t_{n+1}} \left\| \bm{v}_\theta( \hat{\bm{x}}^c_{t_{n+1}}, t_{n+1} ) - \bm{v}^c_\theta(\hat{\bm{x}}^c_{t_{n+1}}, t_{n+1}) \right\|_1 \\
        & \le \left( 1 + h_{t_{n+1}} L_{t_{n+1}} \right) \left\| \hat{\bm{x}}_{t_{n+1}} - \hat{\bm{x}}^c_{t_{n+1}} \right\|_1
        + h_{t_{n+1}} L_\text{out} \left\| \bm \epsilon^c_{t_{n+1}} \right\|_1.
    \end{align}
    Applying this relation recursively from $t_1$ back to $t_N$ yields:
    \begin{align}
        \left\| \hat{\bm{x}}_{t_1} - \hat{\bm{x}}^c_{t_1} \right\|_1
        & \le \left\| \hat{\bm{x}}_{t_{N}} - \hat{\bm{x}}^c_{t_{N}} \right\|_1 \prod_{n=1}^{N-1} \left( 1 + h_{t_{n+1}} L_{t_{n+1}} \right) \notag \\
        & \quad \ + \sum_{n=1}^{N-1} h_{t_{n+1}} L_\text{out} \left\| \bm \epsilon^c_{t_{n+1}} \right\|_1 \prod_{m=1}^{n-1} \left( 1 + h_{t_{m+1}} L_{t_{m+1}} \right).
    \end{align}
    Since both trajectories start from the same initial noise, we have $\hat{\bm{x}}_{t_N} = \hat{\bm{x}}^c_{t_N}$ and the first term vanishes,
    \begin{equation}
        \left\| \hat{\bm{x}}_{t_1} - \hat{\bm{x}}^c_{t_1} \right\|_1
        \le \sum_{n=1}^{N-1} h_{t_{n+1}} L_\text{out} \left\| \bm \epsilon^c_{t_{n+1}} \right\|_1 \prod_{m=1}^{n-1} \left( 1 + h_{t_{m+1}} L_{t_{m+1}} \right).
    \end{equation}
    Finally, using the inequality $1 + a \le e^a$,
    \begin{equation}
        \left\| \hat{\bm{x}}_{t_1} - \hat{\bm{x}}^c_{t_1} \right\|_1
        \le \sum_{n=1}^{N-1} \left\| \bm \epsilon^c_{t_{n+1}} \right\|_1 \exp \left( \ln \left( h_{t_{n+1}} L_\text{out} \right) + \sum_{m=1}^{n-1} h_{t_{m+1}} L_{t_{m+1}} \right).
    \end{equation}
    This completes the proof.
\end{proof}

% \section{Optimization on Lower Frame Rates and Resolutions}

% \newpage

\section{Related Work}
\label{app:rel_work}

\subsection{Diffusion Models}

In recent years, diffusion models~\cite{DBLP:conf/nips20:DDPM, DBLP:conf/iclr21:ScoreSDE, DBLP:conf/nips21:DiffusionBeatGAN, DBLP:conf/iclr23:FlowMatching} have emerged as a dominant paradigm in generative modeling, demonstrating remarkable performance across a wide range of visual generation tasks, including image~\cite{DBLP:conf/cvpr22:StableDiffussion, DBLP:conf/nips22:imagen}, video~\cite{DBLP:journals/corr23/StableVideoDiffusion, DBLP:conf/cvpr24:VideoCrafter2}, and multimodal synthesis~\cite{nips25:llada}.
Early diffusion models were primarily built upon lightweight U-Net architectures~\cite{DBLP:conf/miccai15:Unet}, which enabled stable training and strong generative fidelity through iterative denoising.
Subsequent advances have significantly expanded model capacity by introducing transformer-based backbones, most notably Diffusion Transformers (DiTs)~\cite{DBLP:conf/iccv23:DiT}, which facilitate better scalability and have driven substantial progress in high-resolution and long-horizon generation, especially for video synthesis.
Despite their strong generative capabilities, diffusion models remain computationally expensive at inference time.
The sequential denoising process, combined with increasingly large backbones and longer generation horizons, results in slow sampling speed and high computational cost~\cite{DBLP:journals/corr24:Delta-DiT}.
These limitations pose significant challenges for practical deployment in real-world scenarios, motivating extensive research on accelerating diffusion model inference.

\subsection{Diffusion Model Acceleration}

A large body of work has investigated accelerating diffusion model inference from different perspectives. 
A representative line of research focuses on improving the sampling solvers, aiming to reduce the number of denoising steps without modifying the underlying model.
Classic approaches, such as DDIM~\cite{DBLP:conf/iclr21:ddim}, DPM-Solver~\cite{DBLP:conf/nips22:DPM-Solver}, UniPC~\cite{DBLP:conf/nips23:UniPC}, etc., have demonstrated that carefully designed numerical solvers can significantly accelerate sampling.
As a result, modern high-fidelity image and video generation models (e.g., Wan~\cite{DBLP:journals/corr25:wan2.1}, OpenSora~\cite{DBLP:journals/corr24:opensora}, Flux-dev 1.0~\cite{flux2024}) already adopt these advanced general-purpose solvers in practice.
Nevertheless, despite such solver-level optimizations, tens of denoising steps are still required for high-quality generation, leading to substantial inference cost.
Another direction explores distillation-based acceleration, where a compact student model is trained to mimic a large teacher diffusion model using fewer steps.
While such methods can achieve impressive speedups, they typically require expensive retraining and large-scale supervision.
This limitation becomes particularly pronounced for video diffusion models, where both training and distillation incur prohibitive computational overhead~\cite{DBLP:conf/cvpr23:DiffusionMadeSlim}.
Model-level optimization constitutes another important category, including quantization and pruning.
Quantization methods such as PTQD~\cite{DBLP:conf/nips23:PTQD} reduce numerical precision to lower computation and memory cost, while pruning-based approaches like LD-Pruner~\cite{DBLP:conf/cvpr24:LD-Pruner} remove redundant structures from diffusion models.
Although effective, these techniques often involve nontrivial engineering effort, additional calibration, or task-specific retraining, and their performance can be sensitive to model architecture and deployment settings.
More recently, \emph{cache-based acceleration} has emerged as a promising and complementary direction~\cite{DBLP:journals/corr24:Delta-DiT, DBLP:conf/cvpr25:TeaCache, DBLP:journals/corr25:ERTACache}.
These methods exploit temporal redundancy in the reverse diffusion process by reusing intermediate representations or residuals across adjacent timesteps, without modifying the solver or retraining the backbone model.
As such, cache-based techniques are largely orthogonal to solver design and can be seamlessly applied on top of existing solver-based samplers (e.g., DDIM/UniPC), enabling further acceleration in a plug-and-play manner.
By reusing computation from previous denoising steps, cache-based approaches can substantially reduce inference cost while preserving generation quality, which is particularly attractive for large-scale and high-fidelity image and video diffusion models.

\section{Experimental Details}
\label{app:exp_details}

\subsection{Evaluation Prompts}
\label{app:evaluation-prompts}

For video generation, we follow prior work~\cite{DBLP:conf/cvpr25:TeaCache,DBLP:journals/corr25:ERTACache} and evaluate all methods using the official 946 prompts provided by VBench~\cite{DBLP:conf/cvpr24:vbench}. These prompts cover a diverse set of content categories and motion patterns, and are designed to comprehensively assess video generation quality across multiple dimensions.

For image generation, we adopt the official COCO validation set~\cite{DBLP:conf/eccv14:coco} and use the first 30K text prompts as commonly done in recent diffusion acceleration studies~\cite{DBLP:conf/cvpr25:TeaCache,DBLP:journals/corr25:ERTACache}. The prompt list is publicly available and released on Hugging Face to facilitate reproducibility. All compared methods are evaluated on the same prompt sets to ensure a fair comparison.

\subsection{Evaluation Metrics}
\label{app:evaluation-metrics}

We employ three evaluation metrics to assess generation quality and fidelity.
\begin{itemize}
    [leftmargin=1.2em, topsep=-0.5pt, itemsep=3pt, parsep=3pt]
    \item \textbf{VBench}~\cite{DBLP:conf/cvpr24:vbench} serves as a holistic benchmarking framework for video generative models. It utilizes a hierarchical Evaluation Dimension Suite to disentangle the multifaceted nature of ``video quality'' into distinct, well-defined metrics, thereby enabling a granular and objective assessment of generative performance.
    \item \textbf{LPIPS}~\cite{DBLP:conf/cvpr18:LPIPS} quantifies perceptual similarity using deep feature representations, capturing subtle texture-level and semantic deviations.
    \item \textbf{PSNR} and \textbf{SSIM} measure pixel-level and structure-level fidelity, respectively, between outputs produced by the accelerated sampler and those generated by the corresponding base model.
\end{itemize}

\subsection{Training Details}
\label{app:training-details}

In this section, we provide the technical details for the bilevel optimization process of GCache.
The optimization is designed to be efficient while ensuring the generalizability of the learned propagation exponent.

\textbf{Dataset and Sampling Strategy}.
We utilize a training set consisting of 512 prompts randomly sampled from the corresponding dataset.
To ensure the robustness of the learned policy and avoid overfitting, we strictly ensure that the random seeds used during training are distinct from those used in the evaluation and testing phases.
During each iteration of the outer optimization, we evaluate the current caching policy using a batch size of 32 prompts to obtain a stable estimate of the perceptual loss (LPIPS and SSIM).

\textbf{Bayesian Optimization Configuration}.
We employ a Gaussian Process (GP) as the surrogate model for the outer objective. The optimization parameters and strategies are configured as follows:
\begin{itemize}
    [leftmargin=1.2em, topsep=-0.5pt, itemsep=3pt, parsep=3pt]
    \item \textbf{Search Space}: The learnable coefficients $\bm s$ for the Bernstein polynomial are bounded within the range $[\bm s_\text{min}, \bm s_\text{max}] = [0, 10]$.
    For our experiments, we set the polynomial degree $d = 3$, resulting in a 4-dimensional parameter space ($d+1 = 4$).
    \item \textbf{Structured Initialization}: Rather than random sampling, we initialize the GP surrogate using 16 deterministic points to ensure comprehensive coverage of the search space.
    Specifically, we define two representative centers $\{0.25, 0.75\}$ in the normalized parameter space for each dimension and generate the initial set via a Cartesian product across all 4 dimensions ($2^4 = 16$).
    These normalized coordinates are then linearly mapped to the actual range $[0, 10]$.
    This symmetric grid initialization ensures that the GP begins with a well-distributed understanding of the objective landscape across different quadrants.
    \item \textbf{Optimization Steps}: The total optimization budget is set to 500 steps.
    \item \textbf{Exploration-Exploitation Trade-off}: We utilize the Lower Confidence Bound (LCB) acquisition function: $A_{\text{LCB}}(\mathbf{s}) = \mu(\mathbf{s}) - \kappa \sigma(\mathbf{s})$.
    The exploration parameter $\kappa$ is scheduled to decay linearly from 2.576 to 1.0 over the course of training.
    This encourages the optimizer to prioritize global exploration in early iterations and transition towards local exploitation of identified high-quality regions in the later stages.
\end{itemize}

\textbf{Efficient Error Pre-calculation}.
To minimize the computational overhead during the bilevel optimization, we implement an efficient evaluation strategy for the inner objective.
Specifically, we pre-calculate the local approximation errors $\bm \epsilon^c_{i+1}$ across 128 representative samples before the start of the optimization.
Since the inner objective (the Dynamic Programming solver) only requires the magnitude of these local errors to calculate the total propagated error $\sum \| \epsilon \|_1 e^{w(t;\bm s)}$, pre-storing these values allows the optimization process to bypass redundant model evaluations.
During each step of the bilevel search, we simply scale the stored error values by the updated propagation exponent $e^{w(t;\bm s)}$.
This decoupling of error measurement from policy search significantly accelerates the optimization, reducing the search time from hours to minutes on a single GPU.
Furthermore, to verify the fidelity of these pre-computed proxies, we evaluate the alignment between errors from the original and cache-reused trajectories in Appendix~\ref{app:exp_results:precompute_error}.
The results justify the use of pre-computed errors as a reliable and high-fidelity proxy for policy optimization.

% \textbf{Low-resulotion Optimization Strategy}.
% To further reduce the computational overhead during the optimization, instead of optimization on 
% \YXC{TODO}

\subsection{Optimization Efficiency}
\label{app:optimization-efficiency}

\begin{table}[tbp]
    \centering
    \caption{
        We report the budget $K$ and the wall-clock time required.
        All policies were optimized in less than 24 hours, demonstrating the high efficiency of GCache.
    }
    \label{tab:optim_cost}
    \vskip 0.1in
    \footnotesize
    \begin{tabular}{cccc} % l=左对齐, c=居中
        \toprule
        \textbf{Model} & \textbf{GPU Config} & \textbf{Budget ($K$)} & \textbf{Time Cost} (hours) \\ 
        \midrule
        \multirow{2}{*}{OpenSora 1.2} & \multirow{2}{*}{4 $\times$ H100} & 18 & 17 \\
        & & 11 & 12.5 \\
        \midrule
        \multirow{2}{*}{CogVideoX-2B} & \multirow{2}{*}{4 $\times$ H100} & 31 & 7 \\
        & & 17 & 5 \\
        \midrule
        \multirow{2}{*}{Wan2.1-1.3B} & \multirow{2}{*}{8 $\times$ H100} & 24 & 17.5 \\
        & & 16 & 13.5 \\
        \midrule
        \multirow{2}{*}{Flux-dev 1.0} & \multirow{2}{*}{4 $\times$ H100} & 14 & 6 \\
        & & 10 & 5 \\
        \bottomrule
    \end{tabular}
\end{table}

\cref{tab:optim_cost} reports the optimization time and associated budgets for GCache across different architectures.
Experiments were performed using 4 $\times$ H100 GPUs for most models, with 8 $\times$ H100 GPUs reserved for Wan2.1-1.3B.
Crucially, GCache demonstrates remarkable efficiency: even for the most computationally demanding backbones, the optimization is finalized in less than a day.
This minimal overhead makes our approach highly scalable and suitable for the rapid deployment of new diffusion backbones.

\subsection{Experimental Details for Main Paper Visualizations}

In this section, we provide the specific experimental configurations used to generate the visualizations in the main manuscript:
\begin{itemize}
    \item \textbf{\cref{fig:intro}:} Results are evaluated using the Flux-dev 1.0~\cite{flux2024} backbone. Metrics are averaged over 32 randomly sampled prompts from the COCO~\cite{DBLP:conf/eccv14:coco} validation set.
    \item \textbf{\cref{fig:motivation}:} The error propagation analyses are derived from CogVideoX-2B~\cite{DBLP:conf/iclr25:CogVideoX}, with results averaged across 32 representative prompts from the VBench~\cite{DBLP:conf/cvpr24:vbench} suite.
    \item \textbf{\cref{fig:cache_policy_error_propagation}:} The policy comparison study is conducted on Flux-dev 1.0~\cite{flux2024}, utilizing 32 random prompts from the COCO~\cite{DBLP:conf/eccv14:coco} validation set for statistical consistency.
\end{itemize}

\section{Additional Experiment Results}
\label{app:exp_results}

\subsection{Robustness to Prompt Distribution Shifts}
\label{app:exp_results:prompt_distribution}

To evaluate the generalization capability of GCache under distribution shifts, we investigate whether a policy optimized on a specific prompt characteristic (e.g., static scenes) can generalize to others (e.g., highly dynamic videos).
Using VBench \cite{DBLP:conf/cvpr24:vbench} as a base, we curate two distinct subsets:
\begin{itemize}
[leftmargin=1.2em, topsep=-0.5pt, itemsep=3pt, parsep=3pt]
    \item \textbf{Static Prompts:} Scenes with minimal temporal evolution, identified by keywords such as ``in a still frame'', ``a tranquil tableau'', ``frozen in time'', ``static view.''
    \item \textbf{Dynamic Prompts:} Motion-intensive sequences sampled from VBench's motion-related dimensions, including \textit{human action}, \textit{dynamic degree}, \textit{motion smoothness}, and \textit{subject consistency}.
\end{itemize}
We optimize GCache on three training distributions: Static, Dynamic, and a Mixed (Random) set, and evaluate their cross-distribution performance.
All of the experiments are conducted on CogVideoX-2B with $K=17$.

\begin{table}[tbp]
    \centering
    \caption{
        Cross-distribution generalization analysis.
        We evaluate GCache policies optimized on different prompt subsets (Static vs. Dynamic).
        The negligible performance gap across training distributions demonstrates the robust generalization of our learned policy.
    }
    \vskip 0.1in
    \label{tab:comparison_prompt_distribution}
    \footnotesize
    \begin{tabular}{ccccc}
        \toprule
        \textbf{Test Set} & \textbf{Train Source} & \textbf{LPIPS$\downarrow$} & \textbf{SSIM$\uparrow$} & \textbf{PSNR$\uparrow$} \\ 
        \midrule
        \multirow{3}{*}{Static} & Mixed  & 0.0594 & 0.9122 & 30.25 \\
        & Static  & 0.0595 & 0.9118 & 30.26 \\
        & Dynamic & 0.0611 & 0.9109 & 30.26 \\ 
        \midrule
        \multirow{3}{*}{Dynamic} & Mixed  & 0.0905 & 0.8982 & 28.04 \\
        & Static  & 0.0905 & 0.8981 & 28.06 \\
        & Dynamic & 0.0921 & 0.8975 & 27.90 \\ 
        \bottomrule
    \end{tabular}
\end{table}

As shown in \cref{tab:comparison_prompt_distribution}, the performance variance across different training sources is remarkably marginal.
For instance, a policy trained on static prompts performs almost identically to one trained on dynamic prompts when tested on dynamic sequences (0.0905 vs. 0.0921 LPIPS).
This high degree of stability suggests that GCache captures fundamental structural redundancies within the diffusion process that are invariant to specific prompt semantics or motion levels, ensuring its robustness for diverse real-world applications.

\subsection{Generalization Across Resolutions}
\label{app:exp_results:Generalization_resolution}

To assess the spatial scalability of GCache, we evaluate the transferability of a policy optimized at a fixed resolution to unseen spatial scales.
Specifically, we apply the GCache-fast policy, originally optimized for $1024\times1024$ resolution on Flux-dev 1.0~\cite{flux2024}, directly to $512\times512$ and $256\times256$ settings without any further re-tuning or adaptation.

\begin{table}[tbp]
    \centering
    \caption{
        Zero-shot generalization across resolutions.
        We evaluate the GCache-fast policy (optimized at $1024 \times 1024$) on lower resolutions ($512$ and $256$) without re-tuning.
        GCache-fast consistently outperforms ERTACache across all scales, demonstrating its robustness to spatial resolution shifts.
    }
    \label{tab:resolution_comparison}
    \footnotesize
    \vskip 0.1in
    \begin{tabular}{ccccc}
        \toprule
        \textbf{Resolution} & \textbf{Method} & \textbf{LPIPS$\downarrow$} & \textbf{SSIM$\uparrow$} & \textbf{PSNR$\uparrow$} \\ 
        \midrule
        \multirow{2}{*}{1024} & ERTACache   & 0.2658 & 0.7863 & 20.60 \\
        & GCache-fast & \textbf{0.1825} & \textbf{0.8423} & \textbf{23.76} \\ 
        \midrule
        \multirow{2}{*}{512}  & ERTACache   & 0.2359 & 0.7580 & 19.97 \\
        & GCache-fast & \textbf{0.1514} & \textbf{0.8344} & \textbf{23.47} \\ 
        \midrule
        \multirow{2}{*}{256}  & ERTACache   & 0.2047 & 0.7209 & 19.70 \\
        & GCache-fast & \textbf{0.1558} & \textbf{0.7746} & \textbf{21.74} \\ 
        \bottomrule
    \end{tabular}
\end{table}

As shown in \cref{tab:resolution_comparison}, GCache-fast consistently outperforms the baseline ERTACache~\cite{DBLP:journals/corr25:ERTACache} by a significant margin across all tested resolutions.
Notably, the performance gain remains robust even when the resolution is quadrupled ($256 \to 1024$), underscoring that GCache captures resolution-agnostic redundancy patterns within the diffusion backbone.
This zero-shot transfer capability is highly desirable for practical deployment, as it eliminates the need for resolution-specific optimization.

\subsection{Effectiveness of the Learned Policy}
\label{app:exp_results:effect_policy}

A key question is whether GCache's performance gains stem from a superior reuse policy or simply from different computational budgets.
Some existing methods, such as ERTACache \cite{DBLP:journals/corr25:ERTACache}, employ an auxiliary light-weight model to rectify the errors introduced by cache reuse.
To ensure a fair comparison and isolate the impact of the policy itself, we evaluate GCache against two variants of ERTACache under identical budget $K$:
(1) ERTACache, the full version with its Time Adjustment and Error Rectification modules;
and (2) ERTACache*, a stripped-down version that excludes these additional modules, relying solely on its default reuse policy.

\begin{table}[tbp]
    \centering
    \caption{
        Policy comparison under identical budget $K$.
        We compare GCache with ERTACache and its variant ERTACache* (without the error rectification module).
        GCache consistently achieves superior performance without requiring any auxiliary corrector models.}
    \label{tab:same_k_comparison}
    \footnotesize
    \vskip 0.1in
    \begin{tabular}{cccccc}
        \toprule
        \textbf{Model} & \textbf{K} & \textbf{Method} & \textbf{LPIPS$\downarrow$} & \textbf{SSIM$\uparrow$} & \textbf{PSNR$\uparrow$} \\ 
        \midrule
        \multirow{3}{*}{OpenSora 1.2} & \multirow{3}{*}{12} & ERTACache* & 0.2198 & 0.7648 & 20.24 \\
        & & ERTACache  & 0.1659 & 0.8170 & 22.34 \\
        & & GCache     & \textbf{0.1188} & \textbf{0.8569} & \textbf{25.73} \\ 
        \midrule
        \multirow{3}{*}{CogVideoX-2B} & \multirow{3}{*}{17} & ERTACache* & 0.1090 & 0.8687 & 26.54 \\
        & & ERTACache  & 0.1012 & 0.8702 & 26.44 \\
        & & GCache     & \textbf{0.0721} & \textbf{0.9042} & \textbf{29.14} \\ 
        \bottomrule
    \end{tabular}
\end{table}

As summarized in \cref{tab:same_k_comparison}, GCache consistently outperforms both variants by a significant margin across multiple backbones.
Notably, GCache achieves substantially better LPIPS and PSNR than the full ERTACache, despite not using any extra corrector models.
For instance, on OpenSora 1.2, GCache improves PSNR from 22.34 to 25.73.
These results demonstrate that GCache’s optimization-based approach discovers a much more effective reuse trajectory, proving that a well-optimized policy can be more powerful than a sub-optimal policy supplemented by error correction.

\subsection{Validation of the Pre-computed Local Error Proxy}
\label{app:exp_results:precompute_error}

As discussed in \cref{app:training-details}, to circumvent the prohibitive computational overhead of generating online trajectories during optimization, GCache utilizes a pre-computed error matrix $E_{i,j} = \| \bm \delta_{t_i} - \bm \delta_{t_j} \|_1$ derived from the original (full-step) diffusion trajectory.
However, a potential concern is trajectory drift: as cache reuse is introduced, the intermediate latent states may deviate from the original path, potentially rendering the pre-computed errors inaccurate.

To investigate the fidelity of this approximation, we conduct an empirical study on CogVideoX-2B ($K=17$).
We compare the local residual error, defined as $\|\bm \delta^c - \bm \delta_{t_i}\|_1$, where $\bm \delta_{t_i}$ is the ground-truth residual at timestep $t_i$.
We calculate this error under two settings:
(1) the Original Trajectory, where $\bm \delta^c$ is sampled from the full-step inference;
and (2) the Cache-reused Trajectory, where $\bm \delta^c$ is sampled from an actual inference process governed by our learned GCache policy.

\begin{figure}[tbp]
    \centering
    \includegraphics[width=0.7\textwidth]{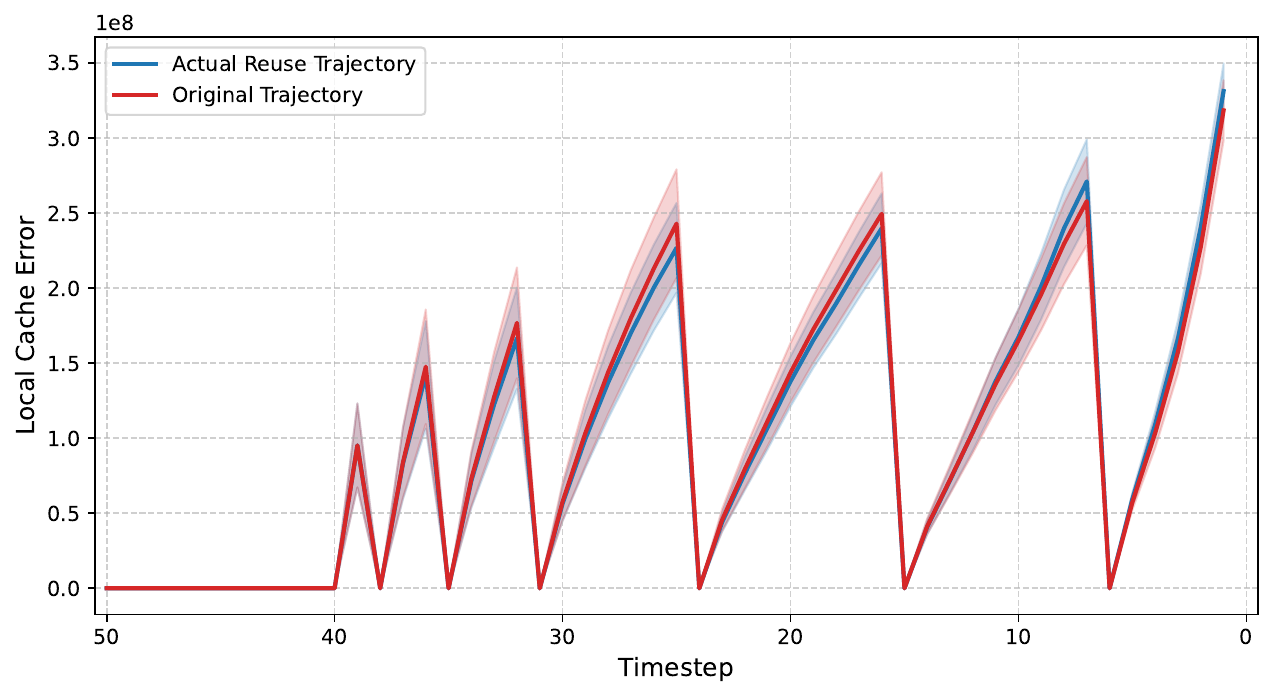}
    \caption{
        Validation of the pre-computed error proxy.
        We compare the L1 residual error $\|\bm \delta^c - \bm \delta_{t_i}\|_1$ during the denoising process on CogVideoX-2B ($K=17$).
        The Pre-compute (red) curve denotes the error calculated from the original trajectory, while the Cache (blue) curve represents the actual error in a trajectory with GCache reuse.
        The ``sawtooth'' peaks correspond to the $K$ cache refresh points.
        The high degree of overlap between the two curves justifies the use of pre-computed error $E$ as a high-fidelity and efficient proxy for optimization.
    }
    \label{fig:precompute_error}
\end{figure}

The results are visualized in \cref{fig:precompute_error}.
As shown, the error profiles of the two trajectories are remarkably aligned across the entire denoising process.
The ``sawtooth'' pattern reflects the periodic refreshing of the cache, where the error peaks just before a cache update and drops to zero immediately after.
Crucially, the mean and variance of the errors in the cache-reused trajectory (blue) closely track those of the pre-computed proxy (red), even in the late stages of denoising where drift is typically most pronounced.
This high degree of alignment justifies the use of pre-computed errors as a reliable and efficient proxy for optimization, as it accurately reflects the error dynamics of the actual inference process.

\section{Additional Qualitative Comparison}
\label{app:qual_comparison}

We provide additional qualitative results for both image and video generation.

\subsection{Image Generation}

For image generation, \cref{fig:img_app} presents additional qualitative comparisons on Flux-dev~1.0.
Under the same acceleration setting, ERTACache introduces various visual degradations, including background misalignment, incorrect spatial relationships between objects and subjects, missing or distorted materials, and semantic artifacts such as erroneous objects (e.g., incorrect traffic signs) and physically implausible details (e.g., airplanes depicted mid-air with landing gear deployed).
By contrast, GCache remains highly faithful to the original model outputs, preserving semantic correctness, spatial coherence, and fine-grained visual details even under a $2.87\times$ speedup setting.

\subsection{Vedio Generation}

For video generation, we provide additional qualitative comparisons across three representative text-to-video diffusion models, including CogVideoX-2B (\cref{fig:cog_temporal}), Open-Sora~1.2 (\cref{fig:opensora_temporal}), and Wan~2.1-1.3B (\cref{fig:wan_temporal}).
For each example, we uniformly sample six frames from the generated video sequence to visualize temporal consistency and structural fidelity over time.
Within each group, we compare outputs from ERTACache, GCache, and the ground-truth full-step original generation.
Across all three models and diverse prompts, GCache consistently produces videos that remain highly faithful to the ground-truth outputs, preserving semantic correctness, object structure, and temporal coherence throughout the sequence.
The generated contents exhibit stable object appearances and natural motion transitions, with minimal degradation under accelerated sampling.
In contrast, ERTACache frequently introduces noticeable visual artifacts and semantic inconsistencies.
Typical failure cases include distorted object geometry (e.g., malformed cups, umbrellas, and sharks), incorrect semantic structures (e.g., unrealistic astronaut body configurations), and unstable object appearances across frames.
These artifacts become more evident in dynamic scenes and complex compositions, indicating weaker preservation of global semantic information.
Overall, the qualitative results further demonstrate that GCache achieves substantially better fidelity and temporal consistency under the same acceleration budget.

\section{Limitations}
\label{app:limitations}

Although GCache effectively optimizes cache policies, it currently operates under a fixed refresh budget and relies on pre-computed error proxies, which may not fully account for dynamic input complexity or extreme trajectory shifts.
Future work could explore sample-adaptive scheduling and more diverse perceptual objectives to further enhance temporal consistency in highly dynamic videos.

\section{Broader Impact}
\label{app:broader_impact}

By substantially reducing the computational cost of large-scale diffusion models, GCache promotes environmental sustainability and democratizes access to state-of-the-art generative tools for users with limited hardware resources.
While accelerated generation could potentially be misused for creating synthetic misinformation, we advocate for its deployment alongside robust safety filters and digital watermarking technologies to mitigate such risks.

\begin{figure}[hp]
    \centering
    \includegraphics[width=\textwidth]{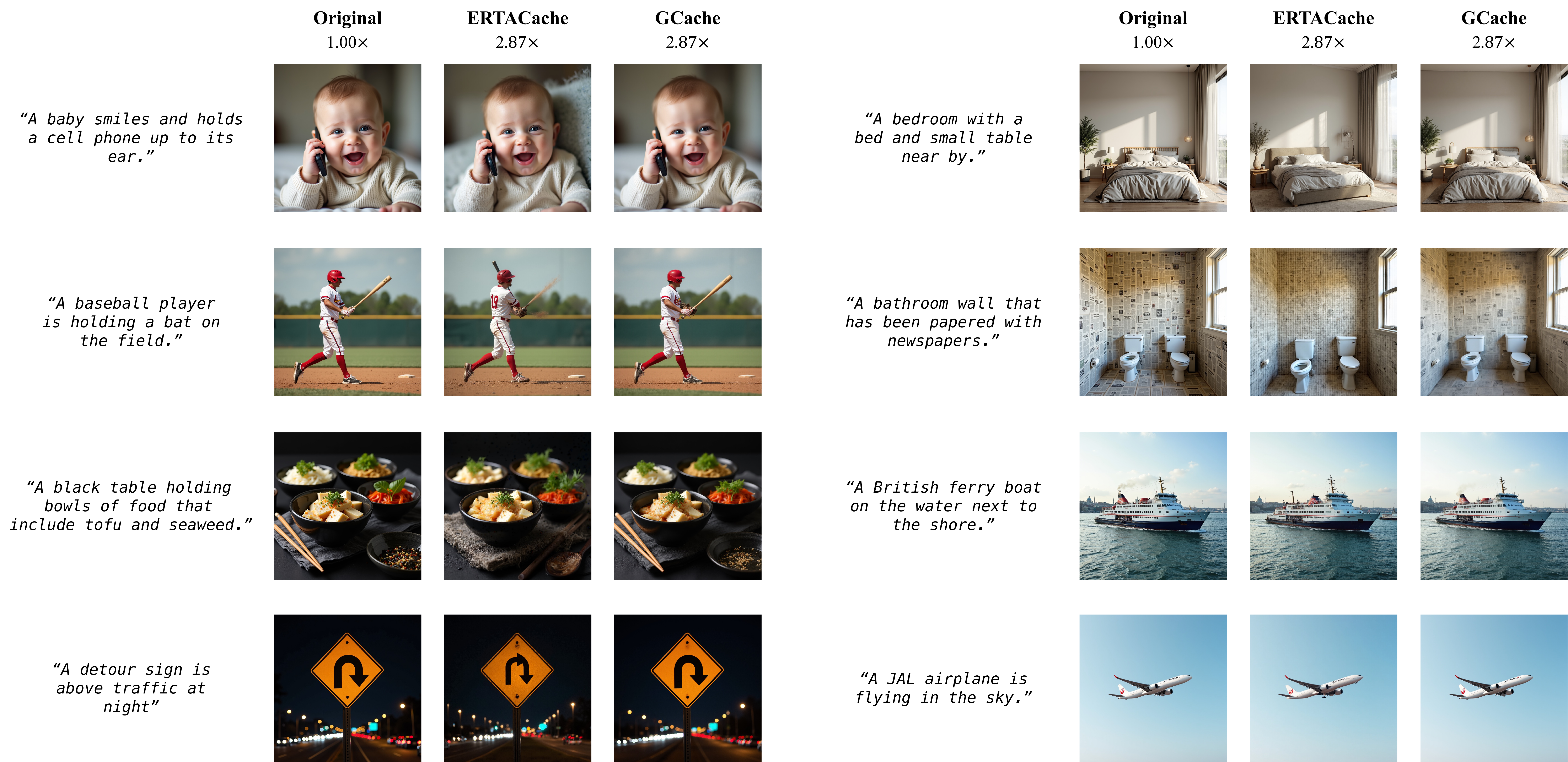}
    \caption{Additional qualitative comparison results for image generation on Flux-dev 1.0. Best viewed zoomed in.}
    \label{fig:img_app}
\end{figure}

\vspace*{\fill}
\begin{figure*}[p]
\centering
\footnotesize
\setlength{\tabcolsep}{2pt}

\begin{tabular}{>{\centering\arraybackslash}m{0.12\textwidth} m{0.84\textwidth}}

\textbf{ERTACache} &
\includegraphics[width=\linewidth]{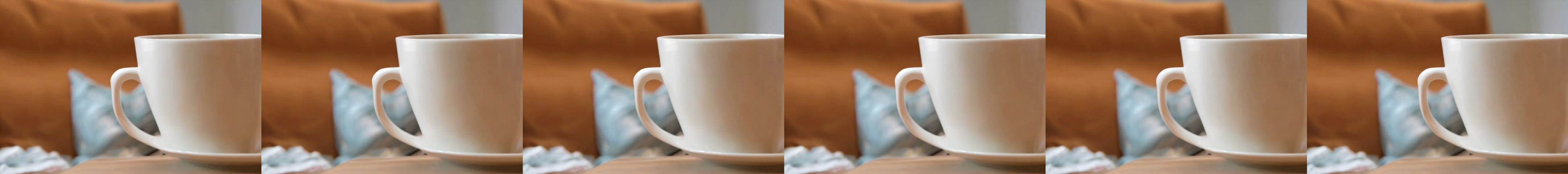} \\

\textbf{GCache} &
\includegraphics[width=\linewidth]{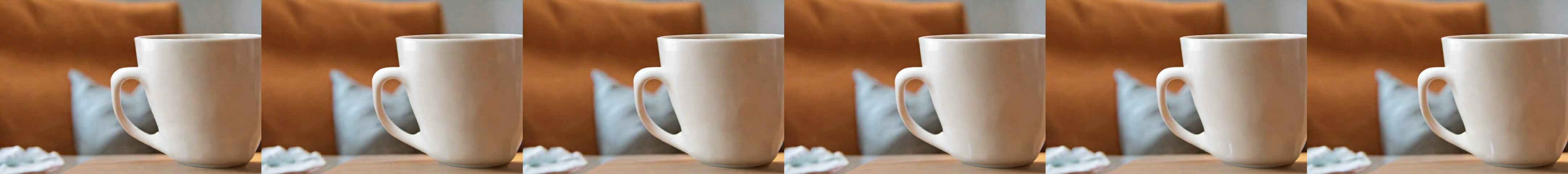} \\

\textbf{Ground Truth} &
\includegraphics[width=\linewidth]{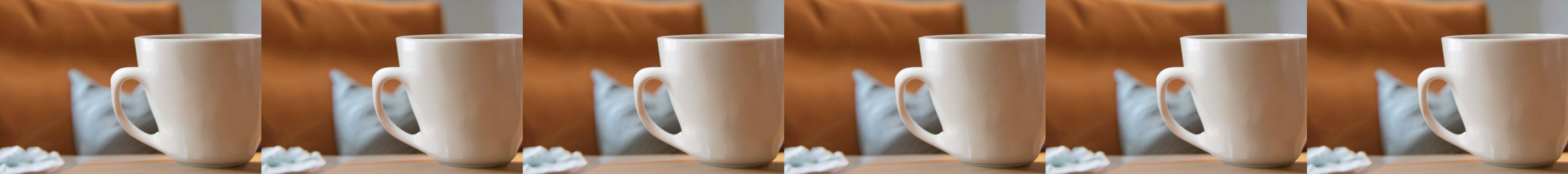} \\

&
\multicolumn{1}{c}{\textbf{(a)} \textit{A cup and a couch.}} \\

\addlinespace[6pt]

\textbf{ERTACache} &
\includegraphics[width=\linewidth]{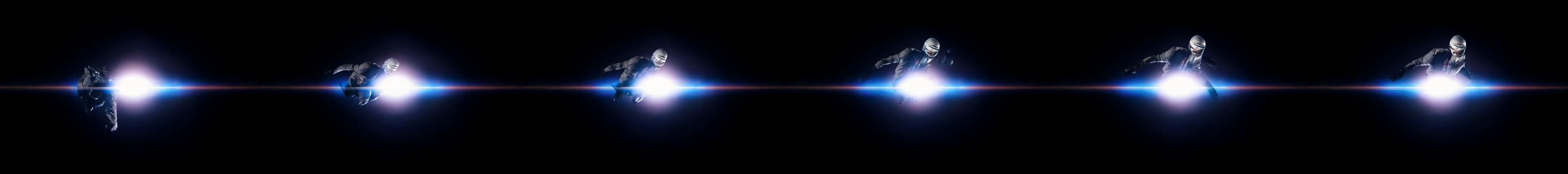} \\

\textbf{GCache} &
\includegraphics[width=\linewidth]{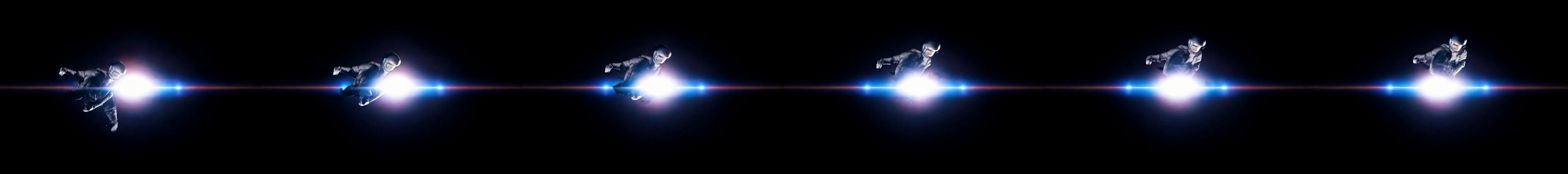} \\

\textbf{Ground Truth} &
\includegraphics[width=\linewidth]{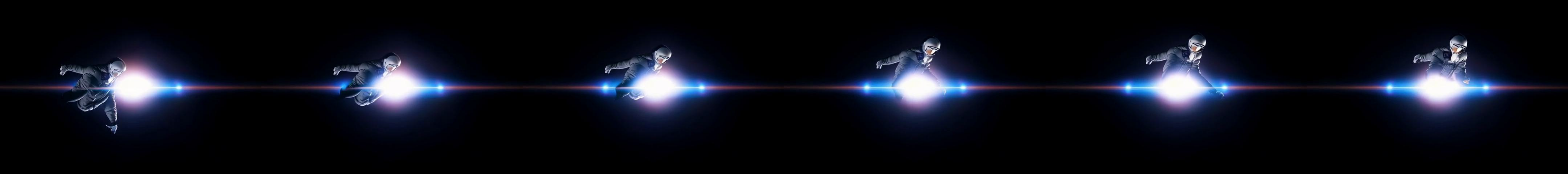} \\

&
\multicolumn{1}{c}{\textbf{(b)} \textit{An astronaut flying in space, in super slow motion.}} \\

\addlinespace[6pt]

\textbf{ERTACache} &
\includegraphics[width=\linewidth]{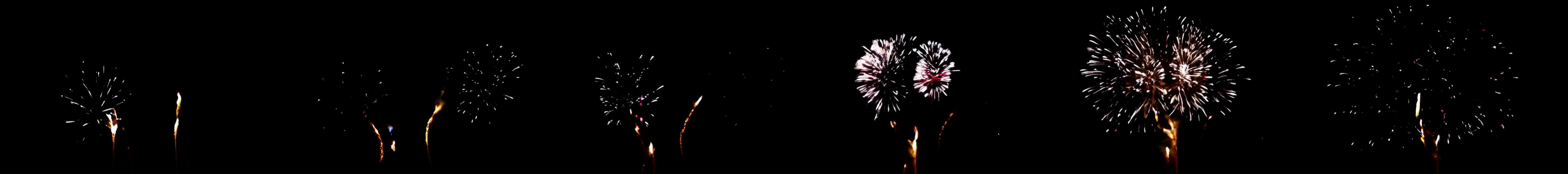} \\

\textbf{GCache} &
\includegraphics[width=\linewidth]{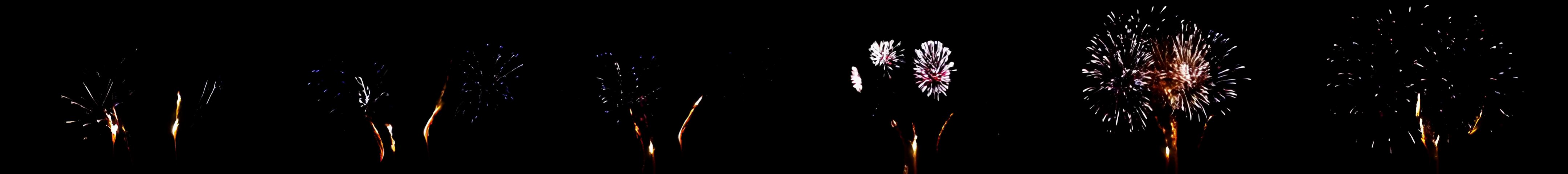} \\

\textbf{Ground Truth} &
\includegraphics[width=\linewidth]{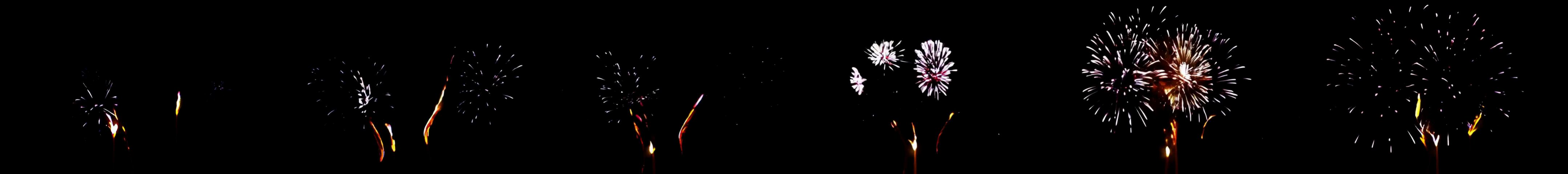} \\

&
\multicolumn{1}{c}{\textbf{(c)} \textit{Fireworks.}} \\

\addlinespace[6pt]

\textbf{ERTACache} &
\includegraphics[width=\linewidth]{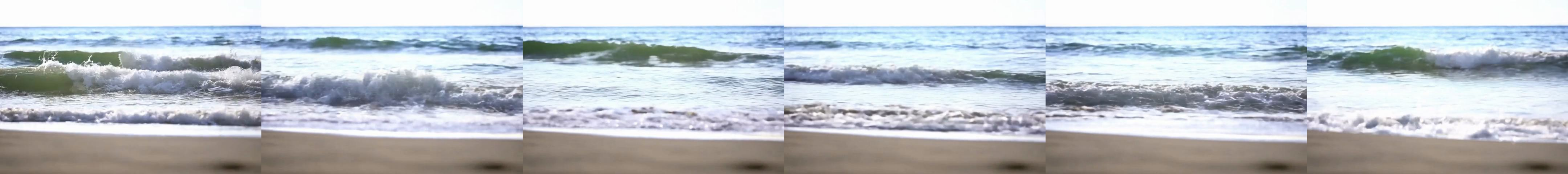} \\

\textbf{GCache} &
\includegraphics[width=\linewidth]{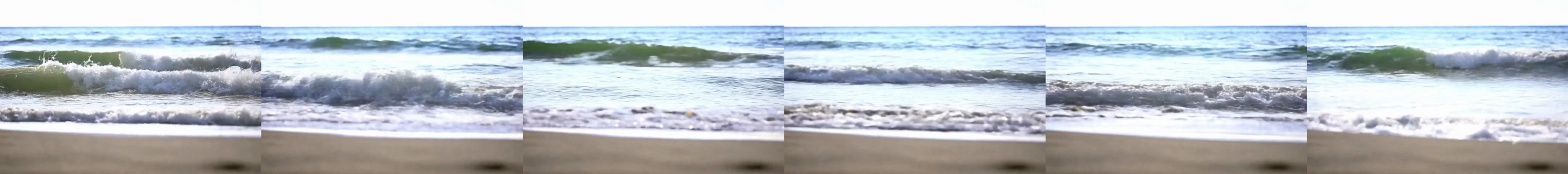} \\

\textbf{Ground Truth} &
\includegraphics[width=\linewidth]{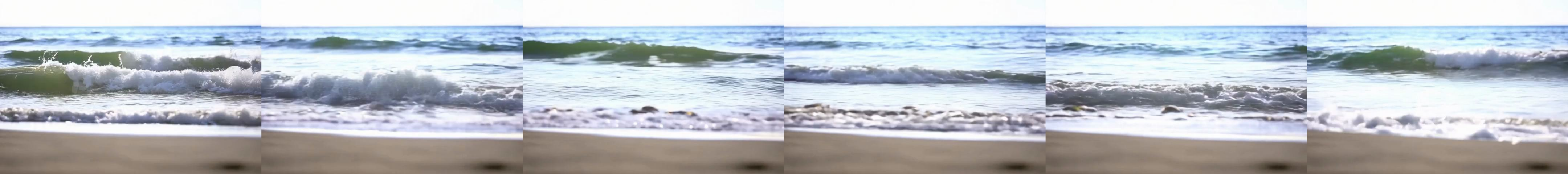} \\

&
\multicolumn{1}{c}{\textbf{(d)} \textit{A beautiful coastal beach in spring, waves lapping on sand, racking focus.}} \\

\end{tabular}

\caption{
Temporal consistency comparison on \textbf{CogVideoX-2B}.
Each group visualizes six evenly spaced frames from the generated sequence.
}
\label{fig:cog_temporal}
\end{figure*}

% \vspace*{\fill}

% \clearpage

% \vspace*{\fill}

\begin{figure*}[p]
\centering
\footnotesize
\setlength{\tabcolsep}{2pt}

\begin{tabular}{>{\centering\arraybackslash}m{0.12\textwidth} m{0.84\textwidth}}

\textbf{ERTACache} &
\includegraphics[width=\linewidth]{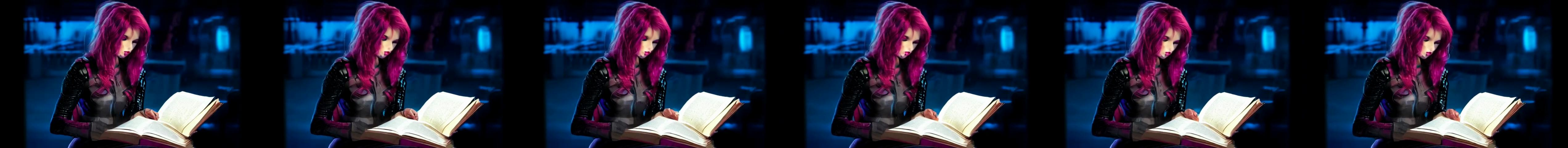} \\

\textbf{GCache} &
\includegraphics[width=\linewidth]{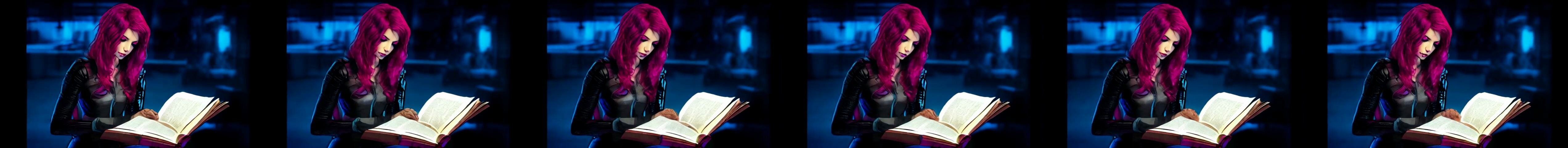} \\

\textbf{Ground Truth} &
\includegraphics[width=\linewidth]{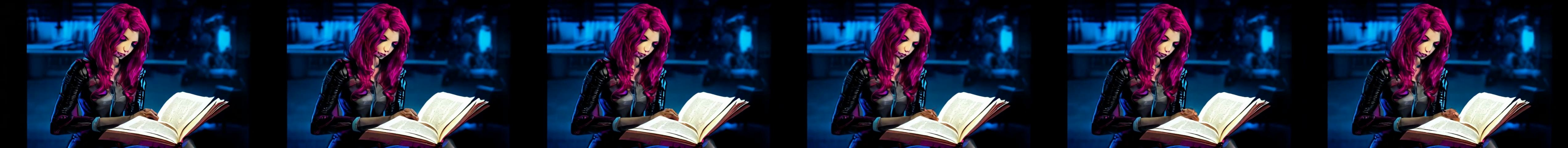} \\

&
\multicolumn{1}{c}{\textbf{(a)} \textit{Gwen Stacy reading a book, in cyberpunk style.}} \\

\addlinespace[6pt]

\textbf{ERTACache} &
\includegraphics[width=\linewidth]{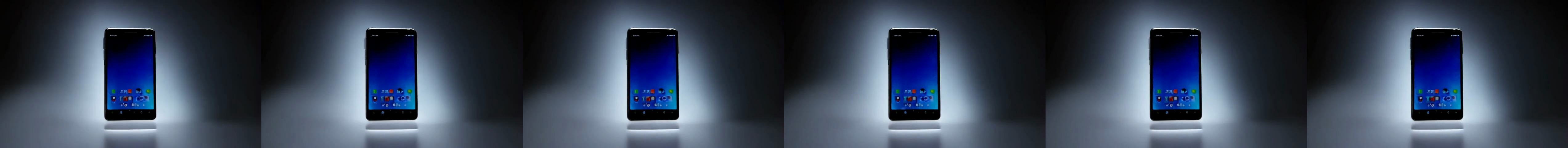} \\

\textbf{GCache} &
\includegraphics[width=\linewidth]{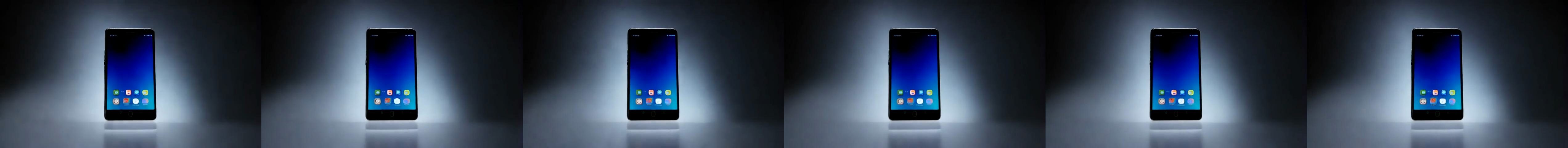} \\

\textbf{Ground Truth} &
\includegraphics[width=\linewidth]{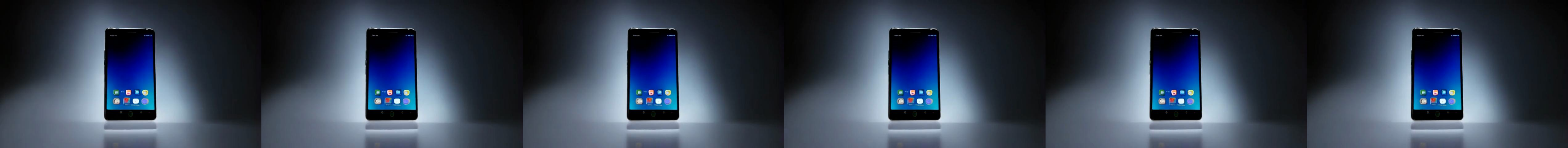} \\

&
\multicolumn{1}{c}{\textbf{(b)} \textit{a cell phone.}} \\

\addlinespace[6pt]

\textbf{ERTACache} &
\includegraphics[width=\linewidth]{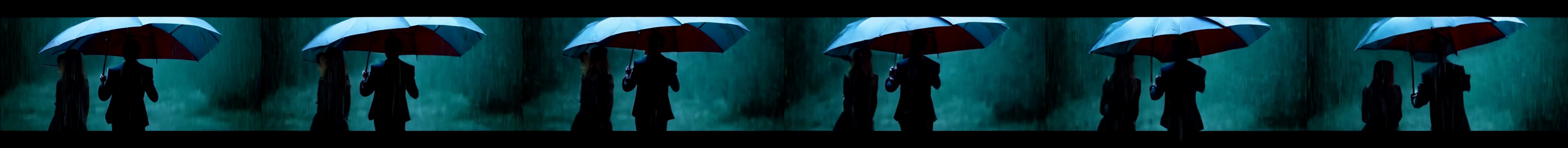} \\

\textbf{GCache} &
\includegraphics[width=\linewidth]{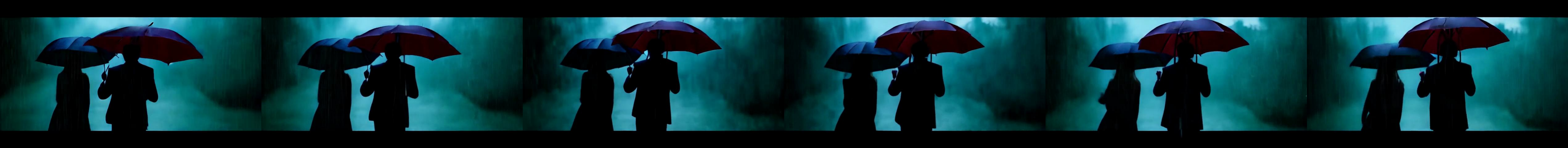} \\

\textbf{Ground Truth} &
\includegraphics[width=\linewidth]{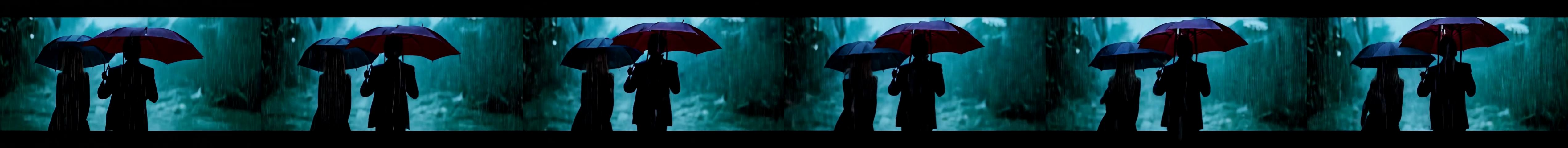} \\

&
\multicolumn{1}{c}{
\shortstack[c]{
\textbf{(c)} \textit{A couple in formal evening wear going home get caught in a} \\
\textit{heavy downpour with umbrellas, in super slow motion.}
}
} \\

\addlinespace[6pt]

\textbf{ERTACache} &
\includegraphics[width=\linewidth]{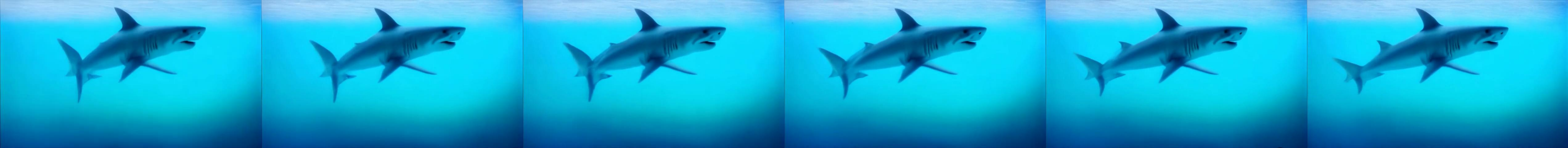} \\

\textbf{GCache} &
\includegraphics[width=\linewidth]{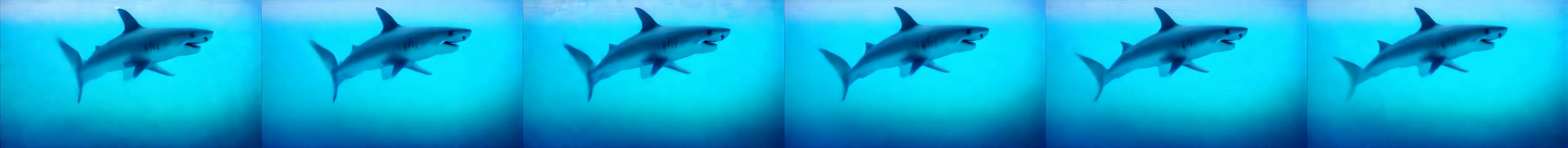} \\

\textbf{Ground Truth} &
\includegraphics[width=\linewidth]{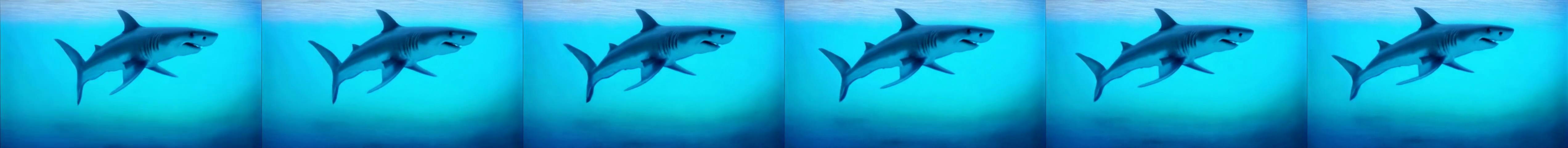} \\

&
\multicolumn{1}{c}{\textbf{(d)} \textit{a shark is swimming in the ocean, oil painting}} \\

\end{tabular}

\caption{
Temporal consistency comparison on \textbf{Open-Sora~1.2}.
Each group visualizes six evenly spaced frames from the generated sequence.
}
\label{fig:opensora_temporal}
\end{figure*}

\begin{figure*}[p]
\centering
\footnotesize
\setlength{\tabcolsep}{2pt}

\begin{tabular}{>{\centering\arraybackslash}m{0.12\textwidth} m{0.84\textwidth}}

\textbf{ERTACache} &
\includegraphics[width=\linewidth]{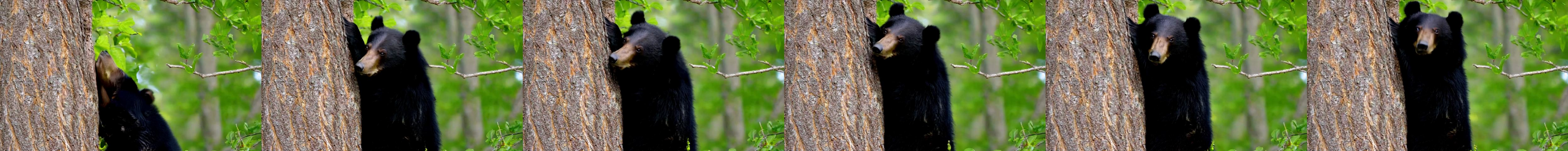} \\

\textbf{GCache} &
\includegraphics[width=\linewidth]{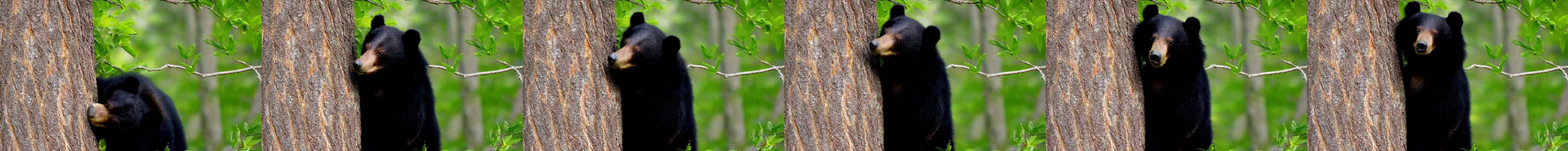} \\

\textbf{Ground Truth} &
\includegraphics[width=\linewidth]{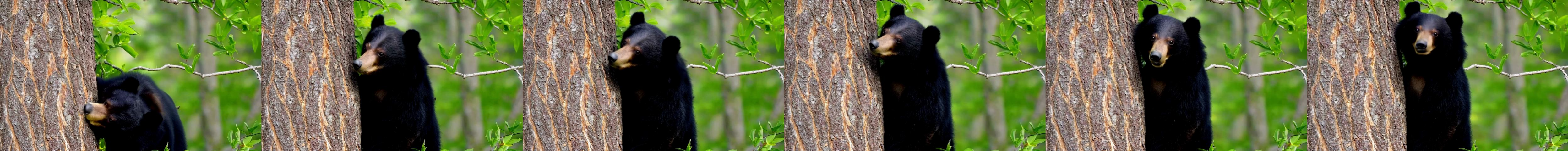} \\

&
\multicolumn{1}{c}{
\textbf{(a)} \textit{A bear climbing a tree.}
} \\

\addlinespace[6pt]

\textbf{ERTACache} &
\includegraphics[width=\linewidth]{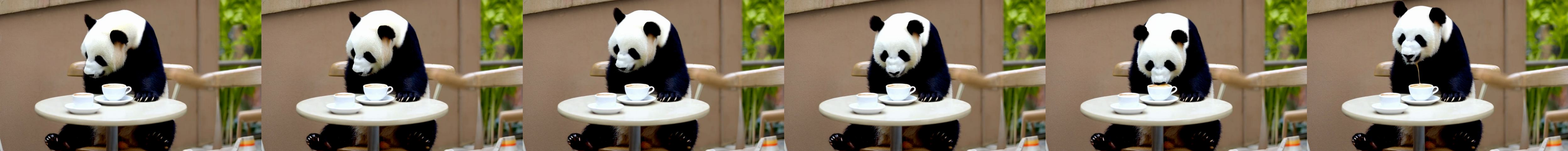} \\

\textbf{GCache} &
\includegraphics[width=\linewidth]{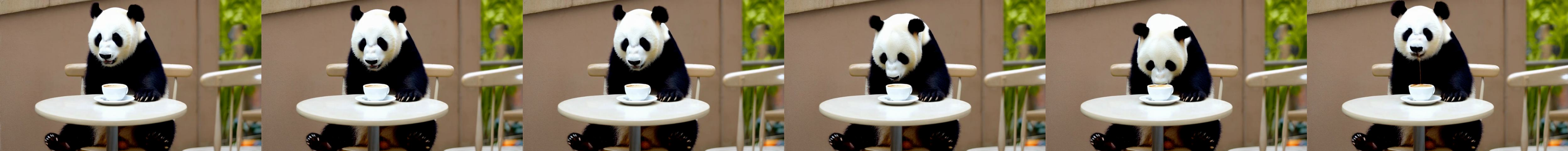} \\

\textbf{Ground Truth} &
\includegraphics[width=\linewidth]{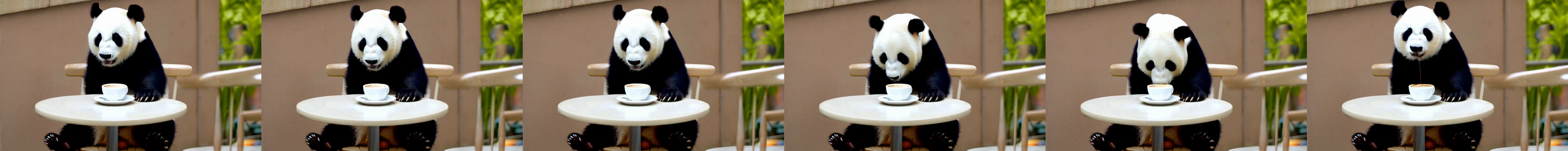} \\

&
\multicolumn{1}{c}{
\textbf{(b)} \textit{A panda drinking coffee in a cafe in Paris, pan left.}
} \\

\addlinespace[6pt]

\textbf{ERTACache} &
\includegraphics[width=\linewidth]{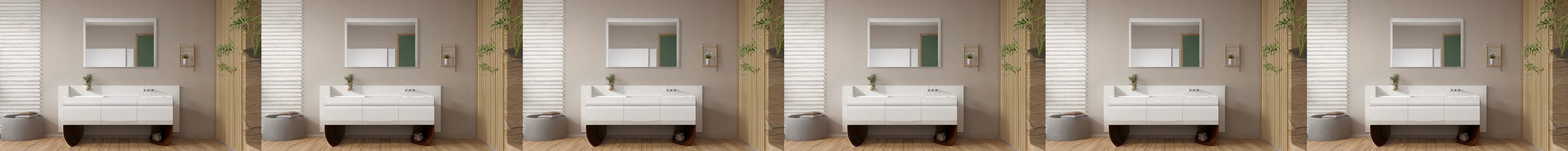} \\

\textbf{GCache} &
\includegraphics[width=\linewidth]{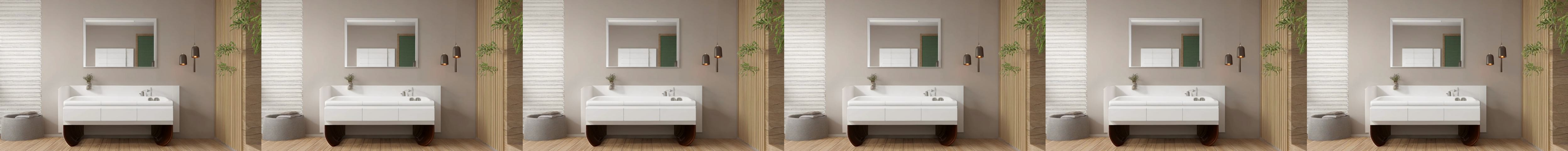} \\

\textbf{Ground Truth} &
\includegraphics[width=\linewidth]{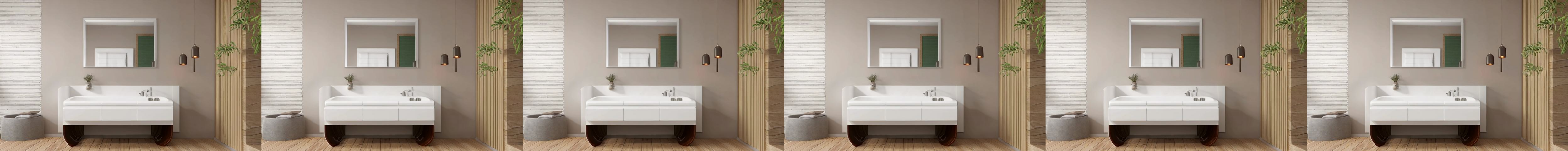} \\

&
\multicolumn{1}{c}{
\textbf{(c)} \textit{A tranquil tableau of bathroom.}
} \\

\addlinespace[6pt]

\textbf{ERTACache} &
\includegraphics[width=\linewidth]{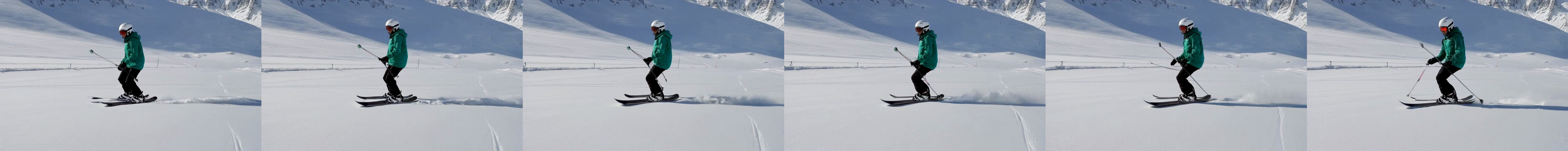} \\

\textbf{GCache} &
\includegraphics[width=\linewidth]{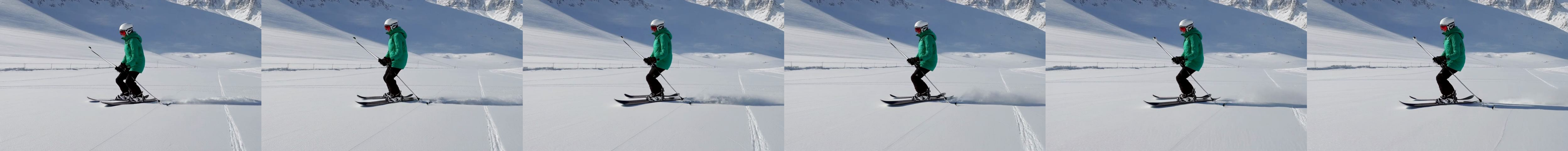} \\

\textbf{Ground Truth} &
\includegraphics[width=\linewidth]{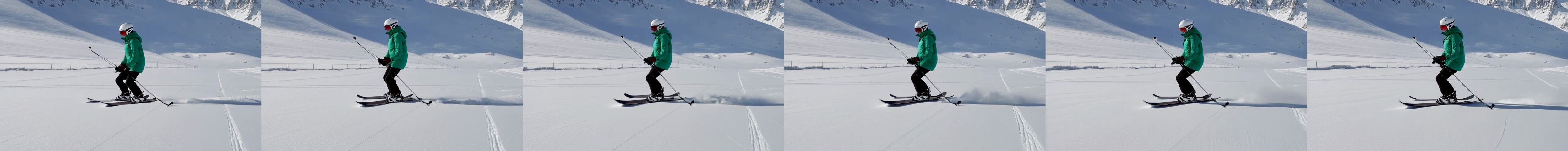} \\

&
\multicolumn{1}{c}{
\textbf{(d)} \textit{Skis and a snowboard.}
} \\

\end{tabular}

\caption{
Temporal consistency comparison on \textbf{Wan~2.1-1.3B}.
Each group visualizes six evenly spaced frames from the generated sequence.
}
\label{fig:wan_temporal}
\end{figure*}

% \end{appendices}

% \input{appendix}

%%%%%%%%%%%%%%%%%%%%%%%%%%%%%%%%%%%%%%%%%%%%%%%%%%%%%%%%%%%%

% \clearpage
% \newpage
% \input{checklist.tex}

\end{document}